\documentclass[10pt]{article}

\usepackage[preprint]{tmlr}

\usepackage{amsmath,amssymb,amsfonts,amsthm}
\usepackage{graphicx}
\usepackage{algorithm}
\usepackage{algpseudocode}
\usepackage{array}
\usepackage{booktabs}
\usepackage{bm}
\usepackage{url}
\usepackage[hidelinks]{hyperref}
\usepackage{xcolor}
\usepackage{mathrsfs}

\newtheorem{theorem}{Theorem}
\newtheorem{definition}{Definition}
\newtheorem{assumption}{Assumption}
\newtheorem{lemma}{Lemma}
\newtheorem{proposition}{Proposition}
\newtheorem{corollary}{Corollary}
\newtheorem{remark}{Remark}

\newcommand{\R}{\mathbb{R}}
\newcommand{\Cset}{\mathbb{C}}
\newcommand{\Mx}{\mathcal{M}_x}
\newcommand{\Ms}{\mathcal{M}_s}
\newcommand{\Mz}{\mathcal{M}_z}

\newcommand{\Loss}{\mathcal{L}}
\newcommand{\Ybus}{\mathbf{Y}_{\text{bus}}}
\newcommand{\GSO}{\mathbf{L}}
\newcommand{\zx}{z_x}
\newcommand{\zs}{z_s}
\newcommand{\Px}{\mathscr{P}_x}
\newcommand{\Ps}{\mathscr{P}_s}

\begin{document}

\title{Graph Transfer Learning via Shared Latent Geometry:\\
Theory and Applications}

\author{%
  \name Tong Wu\thanks{Part of this work was conducted while visiting Cornell University.}
  \email{tong.wu@ucf.edu} \\
  \addr University of Central Florida, USA
  \AND
  \name Andrew Campbell \email{ac2458@cornell.edu} \\
  \addr Cornell University, USA
  \AND
  \name Anna Scaglione \email{as337@cornell.edu} \\
  \addr Cornell University, USA
}
\maketitle

\begin{abstract}
Inference and control in engineered physical systems pay a heavy physics cost at every deployment: state estimators, solvers for inverse problems, model-predictive controllers, schedulers, and observer designs are often not closed form solutions and hence need to re-solve a numerical optimization per instance, with the system operator re-supplied each time. Physics-informed machine-learning approaches move this cost to training, but use a single encoder pathway whose latent geometry de-learns under fine-tuning on a new instance and admits no quantitative transfer guarantee. We propose an asymmetric two-pathway architecture that resolves both issues. A teacher encoder consumes a privileged dense state from a high-fidelity simulator and represents the system through operator-polynomial features whose coefficients are stable under spectral perturbation; a student encoder is trained to reach the same latent geometry from sparse field data and operator descriptors. At deployment the teacher is discarded, and the frozen student runs in a single forward pass accompanied by a quantitative transfer certificate. The asymmetric design connects to learning using privileged information, knowledge distillation, and cross-modal distillation, but targets cross-instance transfer rather than fixed-instance prediction: topology and operator may change between training and deployment, while the latent task does not. We establish sufficient and near-necessary conditions for transfer in terms of Wasserstein proximity between latent laws, yielding a zero-shot per-instance error bound, and develop a finite-sample certification protocol with certificate-guided active expansion when coverage is incomplete. The framework applies wherever a system admits a well-defined operator with a reportable spectrum (Laplacians, admittance matrices, discretized Green's functions, channel matrices, state-transition Jacobians) and we instantiate it on graph-structured systems. On power-system state estimation across distribution networks, the architecture achieves zero-shot transfer to 100 unseen topologies of varying size, with a 95\% certificate pass rate, accuracy competitive with topology-aware Newton--Raphson state estimators that require exact topology, and sub-millisecond inference independent of network size. These results suggest that the asymmetric pathway plus operator-anchored latent geometry provides a principled foundation for certified zero-shot inference and control across heterogeneous physical systems.
\end{abstract}
\section{Introduction}
\label{sec:intro}

Inference and control in engineered physical systems (state estimation, observer design, channel equalization, model-predictive control) pay a heavy physics cost at every deployment. A Newton--Raphson power-flow solve, an extended Kalman update, a model-predictive control program, or a channel-matrix inversion is correct by construction, but it must be re-solved for every new instance, with the system operator (an admittance matrix, a state-transition Jacobian, a channel matrix, a discretized Green's function) re-supplied each time. The computational burden lives at run time, where it is most expensive and most fragile to model changes: the physics is \emph{interpreted}, not \emph{compiled}. The conventional machine-learning response (train a single encoder end-to-end on representative instances and fine-tune when the environment changes) replaces one problem with another. A monolithic encoder, transformer or otherwise, shapes its latent geometry around whatever distribution it has most recently seen; fine-tuning on a new operating point de-learns the geometry that supported the previous ones, and zero-shot deployment on a genuinely new instance has no structural reason to succeed. This paper develops an architecture that resolves both issues at once. The physics is paid once, at training, by a teacher pathway that consumes a privileged dense state from a high-fidelity simulator and represents the system through operator-polynomial features whose coefficients are stable under spectral perturbation. A second pathway (the student) is trained to land in the same latent geometry from the sparse, deployment-time observation alone. The teacher locks the latent geometry to the physics \emph{before} the student is asked to reach it, so the geometry persists in the student's weights even when the student is later exposed to new instances; deployment is a zero-shot forward pass, accompanied by a quantitative certificate that determines whether the new instance falls within the architecture's transfer envelope. The asymmetry between the two training pathways (privileged at training, deployable at test, anchored to the operator throughout) is what makes both the amortization and the zero-shot transfer sound.

The template applies wherever a system admits a well-defined operator with a reportable spectrum (a Laplacian, an admittance matrix, a discretized Green's function, a channel matrix, a state-transition Jacobian, or any analogous object whose eigenstructure carries the physics of the instance. The present paper develops the theory at this generality and validates it on graph-structured systems, the case in which the operator is a graph shift operator (GSO) and the deployment observation is a sparse signal on the nodes. Graph signal processing \citep{shuman2013emerging,ortega2018graph} is a particularly clean instantiation: in engineered and networked physical systems) power grids, sensor and communication networks, water and gas distribution networks, transportation networks, structural-mechanical systems, the graph itself encodes physically meaningful structure (adjacency as connectivity, edge weights as couplings, the GSO as dynamics), and topology variation poses a transfer problem with three compounding difficulties. First, the spectrum and message-passing basis of the governing operator change between instances, so source and target features are not directly comparable. Second, the node count $N$ may change, so the dimensionality of inputs and outputs changes too. Third, the semantic meaning of individual nodes need not align across topologies: a re-routed power line or a relocated sensor changes what a node \emph{is}. Classical domain adaptation \citep{pan2010survey,ganin2016domain} requires a shared feature space; graph neural networks \citep{kipf2017semi,defferrard2016convolutional} permit variable $N$ but still expect one parameter family to generalize across topologies; both break when graph structure itself carries signal semantics, as it does in power systems when a line outage simultaneously changes the feature distribution and the physical meaning of the outputs. A general framework must therefore operate at the level of graph-induced signal \emph{manifolds}, not topology-specific coordinates.
\begin{figure}[t]
  \centering
  \includegraphics[width=\linewidth]{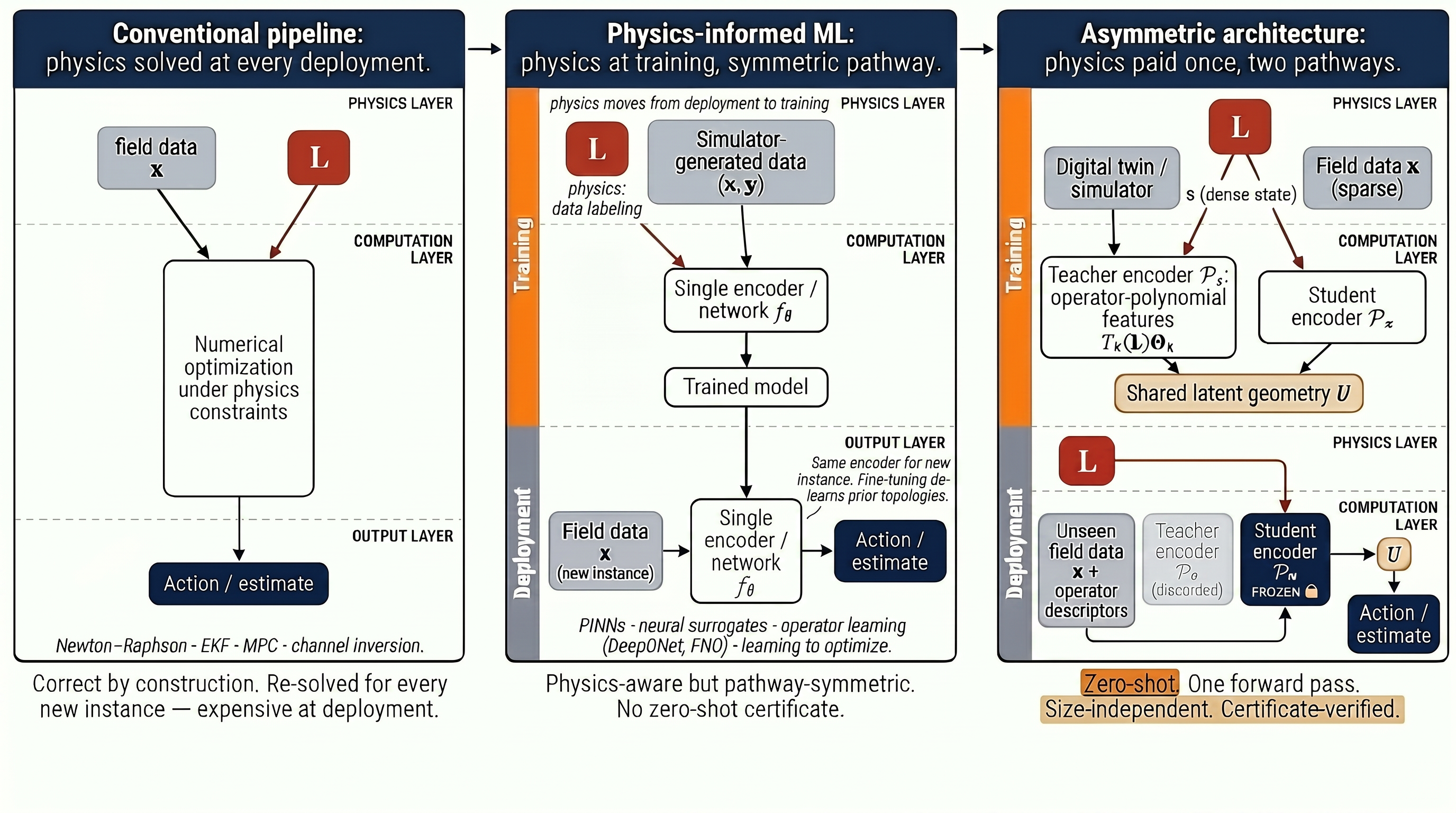}
  \caption{Three regimes of physics in inference and control pipelines.
  \textbf{Left:} conventional pipelines (Newton--Raphson, EKF, MPC, channel
  inversion) re-solve a physics-constrained optimization at every deployment
  instance, the system operator $\GSO$ must be re-supplied each time.
  \textbf{Middle:} physics-informed machine-learning approaches (PINNs, neural
  surrogates, operator learning, learning-to-optimize) move the physics cost to
  training, but use a single encoder for both training and deployment;
  fine-tuning on a new instance de-learns the geometry that supported earlier
  ones, and no quantitative transfer certificate is available.
  \textbf{Right:} the proposed asymmetric architecture pays the physics cost
  once, at training, via a teacher pathway $\Ps$ whose features are polynomials
  in $\GSO$ and a student pathway $\Px$ trained to reach the same latent
  geometry $U$ from sparse field data. The operator $\GSO$ is the through-line:
  it enters both encoders at training and the student at deployment. Only the
  student is retained at test time, and a transfer certificate determines
  whether a new instance falls inside the architecture's transfer envelope.}
  \label{fig:compilation_metaphor}
\end{figure}

\subsection{The Core Question}
We ask: \emph{under what geometric conditions can a learned latent space
support valid transfer across instances of an operator family of different
size, topology, and signal type?} Our answer is that transfer is possible
only when the tasks defined on different instances admit a \emph{shared
latent coordinate system}: the system operator may change the coordinate
chart (and with it the dimensionality and the message-passing basis), but
it must not change the latent task realized by the decoder.

The intuition is that a transferable latent space must preserve the local
semantic organization of each instance while placing different instances onto
a common connected latent set. Nearby signals should remain nearby, distinct
states should remain distinguishable, and sparse observations should land
near the latent code of the corresponding dense privileged state. When these
conditions hold simultaneously, instances become comparable in latent
coordinates, which is precisely what enables zero-shot transfer to instances
unseen during training.

\subsection{Asymmetric Training, Symmetric Geometry}
A distinctive feature of our formulation is the \emph{asymmetry} between
the training and deployment pathways. During training we observe paired
data $(x, s)$ where $x$ denotes the sparse or partial observation available
at deployment, and $s$ denotes a dense privileged signal available only
during training, for example, full physical states from a simulator or
digital twin, future trajectories, or privileged measurements that will
not be available at run time. Two encoders process these signals
independently: $\Ps$ is a teacher encoder operating on the dense
privileged signal $s$, and $\Px$ is a structure-aware encoder operating
on the sparse observation together with a description of the system
operator. Both paths project into the same $d$-dimensional latent
manifold and share a single adaptive decoder. A geometric consistency
loss pulls the two latent codes together, so that $\Ps$ shapes a latent
geometry that $\Px$ cannot see directly, and $\Px$ learns to reach the
same region from sparse input alone. At test time only $\Px$ is used;
$\Ps$ and the privileged signal $s$ are discarded, and the geometry
persists implicitly through the weights of $\Px$.

The teacher--student structure with asymmetric information has antecedents
in Vapnik and Vashist's \emph{Learning Using Privileged Information}
\citep{vapnik2009new}, Hinton's \emph{knowledge distillation}
\citep{hinton2015distilling}, the intermediate-representation matching of
FitNets \citep{romero2015fitnets}, and \emph{cross-modal distillation}
\citep{gupta2016cross}; the first two are unified under the generalized
distillation framework of \citet{lopezpaz2016unifying}. Our setup borrows
from all of these: the privileged signal is informationally richer than
the deployment signal (LUPI), the student is trained to match a teacher
representation rather than ground-truth labels alone (distillation), the
matching happens in an intermediate latent space rather than at the
output layer (FitNets), and the teacher and student operate on different
modalities (cross-modal). What distinguishes the present work is its
target. Classical distillation, LUPI, and cross-modal distillation train
a student to perform a fixed prediction task on the \emph{same} system
seen at training; we train a student whose latent representation supports
zero-shot transfer to new instances, different operator, different
dimensionality, different semantic node identities. The asymmetric
pathway is what shapes the latent geometry; the operator-polynomial
features in the teacher and the operator descriptors in the student are
what make that geometry transferable across instances.

\subsection{Contributions}
This paper makes the following contributions:
\begin{enumerate}
  \item \textbf{Asymmetric dual-path architecture.} A general architecture in
  which a teacher encoder $\Ps$ shapes a shared latent geometry using a
  privileged dense signal during training, while a student encoder $\Px$
  over sparse observations and operator descriptors serves as the sole
  deployment pathway. A shared size-adaptive decoder reconstructs node- or
  graph-level targets at any size. The teacher is discarded at deployment;
  only $\Px$ and the decoder are retained.

  \item \textbf{Transfer-compatibility criterion and zero-shot error bound.}
  A geometric criterion under which cross-instance transfer is well-posed:
  an operator family is \emph{transfer-compatible} when its privileged
  signal manifolds embed into a shared latent set on which a single
  size-adaptive decoder realizes the task. Under this condition we derive
  a per-instance error bound on the student's zero-shot prediction in
  terms of teacher reconstruction fidelity and a latent-alignment term,
  neither of which requires the teacher or the privileged signal at
  deployment.

  \item \textbf{Latent-law transfer guarantees, sufficient and near-necessary.}
  Sufficient and near-necessary conditions on the latent laws induced by
  the two pathways. The sufficient direction shows that Wasserstein
  proximity between the deployment law and a reference law, combined with
  a Lipschitz loss-composition condition, implies bounded task risk on
  unseen instances. The necessary direction shows that, under a natural
  identifiability condition, successful transfer forces the same law
  proximity, yielding a near-equivalence between latent-law
  compatibility and zero-shot transfer with bounded task risk.

  \item \textbf{Verification protocol and certificate-guided expansion.}
  A finite-sample protocol that issues a quantitative
  pass/fail certificate per instance, combining reconstruction fidelity,
  latent-law Wasserstein proximity, and cross-modal alignment.
  The protocol is not merely diagnostic: the certificate
  identifies uncovered instances in a candidate pool, guiding targeted
  fine-tuning that expands latent coverage without full retraining.
\end{enumerate}

\section{Related Work}
\label{sec:related}

\subsection{Teacher--Student Learning, Privileged Information, and Distillation}
The asymmetric dual-path architecture proposed here is rooted in a family
of learning paradigms in which a richer training-time signal shapes a
deployment-time predictor. Vapnik and Vashist's \emph{Learning Using
Privileged Information} (LUPI) framework \citep{vapnik2009new} formalized
the asymmetry: a privileged signal $s$ is available during training but
not at deployment, and the student must learn a predictor on the
deployable input alone while exploiting $s$ during training.
\emph{Knowledge distillation} \citep{hinton2015distilling} established
the teacher--student form of this idea, in which a stronger teacher's
outputs supervise a smaller or sparser-input student; \emph{FitNets}
\citep{romero2015fitnets} extended distillation from output-level to
intermediate-representation matching, anticipating the latent-alignment
role that the consistency loss plays in our framework.
\emph{Cross-modal distillation} \citep{gupta2016cross} brought asymmetric
modalities into the teacher--student setting, training a student on a
target modality to match a teacher operating on a different source
modality on the same scene. \citet{lopezpaz2016unifying} unified LUPI
and distillation under generalized distillation, making the equivalence
of privileged-information and teacher-output supervision explicit. On
graph-structured data, both paradigms have been applied through graph
knowledge distillation \citep{yang2020distilling,yan2020tinygnn}, where
a heavy GNN teacher trains a lightweight student on the same graph.

Our framework borrows the training-time information asymmetry from LUPI,
the latent-representation supervision from FitNets, and the modality
asymmetry from cross-modal distillation. What distinguishes it is the
\emph{target}. In all of these prior paradigms, the deployment system is
fixed: the student learns a predictor for the same scene, the same
graph, the same input domain seen at training. We instead target
cross-instance transfer: the operator $\GSO$ at deployment can be a
system never observed at training. The teacher-shaped latent geometry,
anchored by operator-polynomial features that are stable under spectral
perturbation, supports a single decoder family across instances of
varying size, topology, and operator structure. The distinguishing
technical commitment is the operator-anchoring of the teacher's feature
extractors, not the asymmetric pathway in isolation.

\subsection{Physics-Informed Machine Learning and Surrogate Models}
A second tradition relevant to our framework treats physics as a
supervisory or architectural prior. Physics-informed neural networks
\citep{raissi2019physics} embed governing equations directly into the
loss function as residual penalties, enforcing physical consistency on a
single end-to-end network without simulator data. Neural-surrogate and
operator-learning approaches instead use a high-fidelity simulator as
the source of training-time targets and train a single network to
approximate the solution map: in power systems, neural surrogates for
power flow and state estimation \citep{donon2020neural,owerko2020optimal}
take this route. The neural-operator literature generalizes the surrogate
idea to families of parametric PDEs, training a network that maps inputs
to solutions across the family, DeepONet \citep{lu2021deeponet} uses
a branch--trunk architecture grounded in the universal approximation
theorem for operators, and the Fourier Neural Operator
\citep{li2021fourier} parameterizes the operator in the spectral domain.
Learning-to-optimize and unrolled-solver methods \citep{zhang2019real}
replace iterative solvers with neural networks that mimic the solver's
update rule, accelerating per-instance inference while remaining within
the per-instance-solve paradigm.

These approaches share an architectural feature that distinguishes them
from ours: a single encoder pathway processes the same input modality
at training and at deployment, with physics entering through the loss
function, through the choice of training data, or through an
operator-aware inductive bias. There is no privileged training-time
pathway and no separate deployment-time pathway; the same network is
asked to do both jobs. Consequently, when a single-encoder model is
fine-tuned on a new operating point, the latent geometry that served
previous instances is overwritten, and the resulting model carries no
quantitative certificate that it remains in-distribution. The
asymmetric pathway makes both properties achievable simultaneously:
the teacher carries the physics into a latent geometry whose structure
is preserved across instances, while the student provides a
deployment-time forward map whose validity on a new instance can be
certified from the deployment observation alone.

\subsection{Domain Adaptation and Distribution Alignment}
A large body of work on heterogeneous transfer and graph domain
adaptation frames transfer as a distribution alignment problem
\citep{dai2022graph,wu2022handling,you2023graph}. These approaches
minimize structural discrepancy, align node or graph embeddings across
domains, or learn domain-invariant representations through adversarial
objectives \citep{ganin2016domain,shen2020adversarial}. The shared
classifier or shared feature space is typically assumed to exist; the
goal is to reduce the distributional gap so that a fixed hypothesis
transfers. Our framework differs in two respects. First, we do not
assume a pre-existing shared feature space: the latent support is
learned jointly by a privileged teacher pathway and a deployable
student pathway, and transfer compatibility is a property of the
learned geometry rather than an input assumption. Second, the proof
axis is not feature alignment per se but \emph{latent-law alignment}:
the teacher encoder defines a reference task law $\mu_\star$ in the
latent space, the student is trained to track it, and Wasserstein
proximity between the two induced laws is what enters the transfer
bound directly. Distribution-alignment methods typically do not
formalize this teacher-student latent-law structure, and accordingly
do not produce a necessary condition for transfer.

\subsection{Graph Transfer, Graphon Theory, and Operator-Perturbation Stability}
Within the graph-specific literature, two complementary lines of work
give transfer guarantees grounded in properties of the graph operator
itself. The graphon framework \citep{ruiz2021graphon,ruiz2023transferability}
treats graphs of varying size as samples from a common limiting object
and derives transfer error bounds for graph filters and GNNs across
scales in terms of sampling error and filter Lipschitz constants
\citep{maskey2022generalization,ruiz2020graphon}. The proof axis is
graph-limit theory: transfer is justified when source and target graphs
are samples from the same graphon family, so that spectral discrepancy
vanishes as size grows. A second line of work studies how GNNs respond
to perturbations of the graph operator, changes in topology, or
variability in sampling \citep{gama2020stability,verma2019stability,zhou2021graph},
and uses operator-norm continuity to bound generalization error across
graphs \citep{liao2021pac,esser2021learning}. The central object in
both is \emph{operator stability}: transfer is controlled by how much
the shift operator changes between source and target. Our framework
operates on a different axis. Operator stability is neither assumed nor
derived here; the teacher pathway defines a complete task family
through its induced latent law, the student pathway enters the same
latent region through consistency training, and Wasserstein risk
stability converts latent-law proximity into task error regardless of
how dissimilar the underlying operators are. This makes the framework
applicable to settings where source and target operators are
structurally incomparable, for example, distribution networks of
different bus counts, line topologies, and admittance structures , 
where operator-perturbation bounds would be vacuous. The two routes are
complementary: graphon and spectral-stability arguments control
operator-level discrepancy across size when a common limiting object
exists, while latent-law arguments control semantic-level discrepancy
across topology, operator type, and signal domain simultaneously.

\subsection{Manifold Learning, Homeomorphism, and Topological Methods}
Topology-preserving latent maps have appeared in unsupervised domain
adaptation \citep{zhou2023homeomorphic}, topological autoencoders
\citep{moor2020topological, wu2026universal}, and constrained optimization
\citep{liang2024homeomorphic, wu2026geometric}. Persistent homology has been used to
regularize latent spaces and measure representation quality
\citep{hofer2019connectivity,carriere2020perslay}. These works motivate
the importance of local geometric regularity in learned representations.
We adopt homeomorphism as one ingredient of the transfer-compatibility
criterion and persistent homology as one diagnostic in the verification
protocol, but neither is the core proof axis. The foundational claim of
this paper is that transfer reduces to a latent-law question: topology
and operator variation are absorbed as coordinate changes on a shared
latent support, and Wasserstein proximity between the deployment law
and the reference law is what controls task risk.

\section{Problem Formulation}
\label{sec:problem}

\subsection{Notation and Formal Setup}
\label{sec:problem_notation}

We formally define the following notation to accommodate the heterogeneity of the operator-defined domains we consider; throughout, graph-structured systems serve as the concrete instantiation, with the graph operator playing the role of the system operator $\GSO$ in the general setup. Let $\mathcal{G}^{[k]} = (\mathcal{V}^{[k]},
\mathcal{E}^{[k]}, \mathbf{W}^{[k]})$ denote the $k$-th graph, with
$N_k = |\mathcal{V}^{[k]}|$ nodes and weighted adjacency $\mathbf{W}^{[k]}$.
Associated with each graph is a domain-specific graph operator $\GSO^{[k]}$,
which may be real- or complex-valued depending on the application: normalized
Laplacians, diffusion operators, admittance matrices, attention-weighted
message-passing operators, and other graph shift operators are all admissible.

\textbf{Observation spaces.}
Let $\mathcal{X} = \{\mathcal{X}^{(1)}, \mathcal{X}^{(2)}, \mathcal{X}^{(3)}\}$
denote the collection of observation spaces available at deployment. Concretely:
\begin{itemize}
  \item $\mathcal{X}^{(1)} \subset \R^{F_x}$: sparse or incomplete node-level
  measurements, available only at observed nodes $\{i : m_i^{[k]} = 1\}$;
  \item $\mathcal{X}^{(2)} \subset \{0,1\}^{N_k}$: the observation mask
  $m^{[k]}$ indicating which nodes are observed;
  \item $\mathcal{X}^{(3)} \subset \R^{F_p}$: graph structural descriptors
  $p_i^{[k]}$ derived from $\GSO^{[k]}$, such as eigenvectors, diffusion
  coordinates, or positional encodings.
\end{itemize}
Throughout we denote by $\mathbf{x}^{[k]} = (x^{[k]}, m^{[k]}, p^{[k]})$ the observation
deployment tuple for graph $k$ where $x^{[k]}$ denotes its measurement component. The structural descriptor
$p^{[k]}$ is the sole carrier of topology information at deployment; no
other graph-specific quantity is available at test time.

\textbf{Semantic spaces.}
Let $\mathcal{S}$ denote the semantic space of privileged signals available
only during training. The semantic signal
\begin{equation}
s^{[k]} \in \mathbb{K}^{N_k \times F_s}
\;\;\text{or}\;\;
s^{[k]} \in \mathbb{K}^{F_s},
\qquad \mathbb{K} \in \{\R, \Cset\},
\end{equation}
represents the complete information state of the system at training time:
the dense, privileged signal from which any task target $T^{[k]}$ defined
on $\mathcal{G}^{[k]}$ can in principle be derived. Formally, $s^{[k]}$
is a sufficient statistic for the task family in the sense that, given
$s^{[k]}$ and the graph operator $\GSO^{[k]}$, every task target
$T^{[k]}$ is computable, whereas given the observation tuple
$\mathbf{x}^{[k]}$ alone it is not. The signal $s^{[k]}$ is dense and
complete by construction: it encodes the unobserved states, the latent
relational structure, and the privileged information that $\mathbf{x}^{[k]}$
cannot recover. It is available in full during training and discarded
entirely at deployment. 

\begin{remark}[Terminology: privileged vs. semantic]
\label{rem:terminology}
Throughout the paper we use two adjectives for $s^{[k]}$ that emphasize
different properties. \emph{Privileged} refers to the access
asymmetry: $s^{[k]}$ is available during training but discarded at
deployment, the property that motivates the asymmetric architecture
of Section~\ref{sec:method}. \emph{Semantic} refers to the content
sufficiency just stated: $s^{[k]}$ encodes enough information to
determine the task target $T^{[k]}$ on $\mathcal{G}^{[k]}$, the
property that the geometric guarantees of
Section~\ref{sec:geometry} rely on. The two are independent: a signal
can be privileged without being semantic, or semantic without being
privileged. Both are needed for the framework to function. We use
``privileged'' when emphasizing access (e.g., ``privileged pathway,''
``$s$ is discarded at deployment''), ``semantic'' when emphasizing
content structure (e.g., ``semantic manifold $\Ms$,'' ``semantic
reference law $\mu_\star^s$''), and either when both properties are
simultaneously in view.
\end{remark}

\textbf{Latent space and paired data.}
Let $\mathcal{Z} \subset \R^d$ denote the universal latent space containing
the latent manifold $\Mz \subset \mathcal{Z}$, where $z \in \Mz$ represents
a learned graph-agnostic task coordinate. 
For each graph $k$, we treat each target as a map $T^{[k]} : \Ms^{[k]} \to \mathcal{Y}^{[k]}$, where $\mathcal{Y}^{[k]}$
denotes the per-graph target space of the task (e.g., a discrete
label set $\{1,\dots,C\}$ for node classification, or
$\mathbb{R}^{N_k}$ for node-level regression). We treat $\mathcal{Y}^{[k]}$ as a normed space throughout; for classification, $\|\cdot\|$ is understood on the logit/score representation.
Also, for each $k$, the observation
tuple and semantic signal induce latent random variables
\begin{equation}
\zx^{[k]} = E_x(\mathbf{x}^{[k]}), \qquad
\zs^{[k]} = E_s(s^{[k]}),
\end{equation}
with induced laws $\mu_x^{[k]} = \mathrm{Law}(\zx^{[k]})$ and
$\mu_s^{[k]} = \mathrm{Law}(\zs^{[k]})$. 
Note that we use $E_s$ and $E_x$ for the encoders as mathematical maps and
$\Ps := E_s$, $\Px := E_x$ for their parameterized realizations as
network pathways (Section~\ref{sec:method}); the two notations denote
the same underlying objects, with the choice signaling whether we are
reasoning about the map or about its implementation.
These laws are the central
objects in the geometric transfer conditions specified in
Section~\ref{sec:geometry} and the verification protocol of
Section~\ref{sec:verify}. The teacher encoder $E_s$ consumes $s^{[k]}$ to shape the latent geometry; the deployment encoder $E_x$ never observes $s^{[k]}$ directly but is trained to reach the latent region $E_s$ defines.
The training dataset for graph $k$ consists
of paired samples
$\mathcal{D}^{[k]} = \{(\mathbf{x}_i^{[k]}, s_i^{[k]})\}_{i=1}^{n_k}$.

\subsection{The Cross-Instance Transfer Problem}
\label{sec:problem_transfer}

\textbf{Cross-instance transfer.}
Given training graphs $\{\mathcal{G}^{[k]}\}_{k=1}^{K_\text{tr}}$ with
paired data $\mathcal{D}^{[k]}$, we seek a deployment encoder $E_x$ and
a task decoder $D$,
\begin{equation}
\begin{aligned}
E_x &: \mathbf{x}^{[k]} \mapsto z \in \Mz, \\
D   &: (z,\, N) \mapsto \hat{s},
\end{aligned}
\end{equation}
such that for any unseen test graph $\mathcal{G}^{[k']}$ with
$k' > K_\text{tr}$, the estimator
\begin{equation}
\hat{s}^{[k']}
=
D\!\left(E_x\!\left(\mathbf{x}^{[k']}\right),\, N_{k'}\right),
\end{equation}
with $E_x$ and $D$ fixed after training, produces accurate task outputs
on $\mathcal{G}^{[k']}$. This problem is hard for three compounding
reasons: the dimensionality $N_{k'}$ may be new; the operator
$\GSO^{[k']}$ induces a new spectral or message-passing basis; and node
indices and task semantics need not align across topologies.
Consequently, pointwise transfer is ill-defined, and any admissible solution
must operate at the level of the underlying signal \emph{manifold},
not individual graph coordinates.

\textbf{Running example.}
In the power-system case study 
$\mathcal{G}^{[k]}$ is a distribution-network topology, $\GSO^{[k]}$ is the
complex bus-admittance matrix $\mathbf{Y}^{[k]}$, and the privileged signal
$s^{[k]}$ is the full bus state (complex voltages and injections)
generated by a power-flow simulator that plays the role of the teacher.
The deployment tuple $\mathbf{x}^{[k]}$ comprises sparse AMI measurements
$x^{[k]}$ at a subset of buses, the observation mask $m^{[k]}$, and
Laplacian-based positional descriptors $p^{[k]}$ derived from
$\mathbf{Y}^{[k]}$. The task target $T^{[k]}$ is the bus-state estimate at
the unobserved nodes.

\subsection{A Bayesian Reading: The Teacher as a Revealed E-Step}
\label{sec:bayesian}

Before committing to the architecture that realizes this setup
(Section~\ref{sec:method}), it is worth pausing on \emph{why} the
asymmetric pathway should be expected to help, expressed in
inference-theoretic rather than architectural terms. The reading is
illustrative: the latent $z$ and both encoders are learned end-to-end
by supervised training, not derived from a parametric likelihood. What
follows is best understood as a fable that names a failure mode and
its structural remedy, not a probabilistic model the method commits to.

Consider the dependency structure depicted in
Figure~\ref{fig:teacher_intuition}~(left). A regime variable $\theta$
,  which instance or operating point we are on, gives rise to a
latent task code $z \in \Mz$, which in turn produces three observable
views: the deployment observation $\mathbf{x}$, the privileged signal
$s$, and the task target $y$. The graph is symmetric in $\mathbf{x}$,
$s$, and $y$; what is asymmetric is the \emph{observation pattern},
with all three observed at training but only $\mathbf{x}$ at deployment.
An encoder trained on $(\mathbf{x}, y)$ pairs alone can recover, at
best, the marginal $\mathbb{E}[y \mid \mathbf{x}]$, which averages over
latent modes that share the same observation. Modes geometrically
distinct in $z$ but projecting to overlapping regions in $\mathbf{x}$
are blurred together (Figure~\ref{fig:teacher_intuition}, right). The
weights of this averaging are a property of the \emph{training prior}
on $z$, so under a new operator $\GSO^{[k']}$ the same $\mathbf{x}$ has
different modes underneath it and the learned averaging is wrong , 
exactly the failure mode of single-encoder fine-tuning identified in
Section~\ref{sec:intro}.

\begin{figure}[t]
  \centering
  \includegraphics[width=0.85\linewidth]{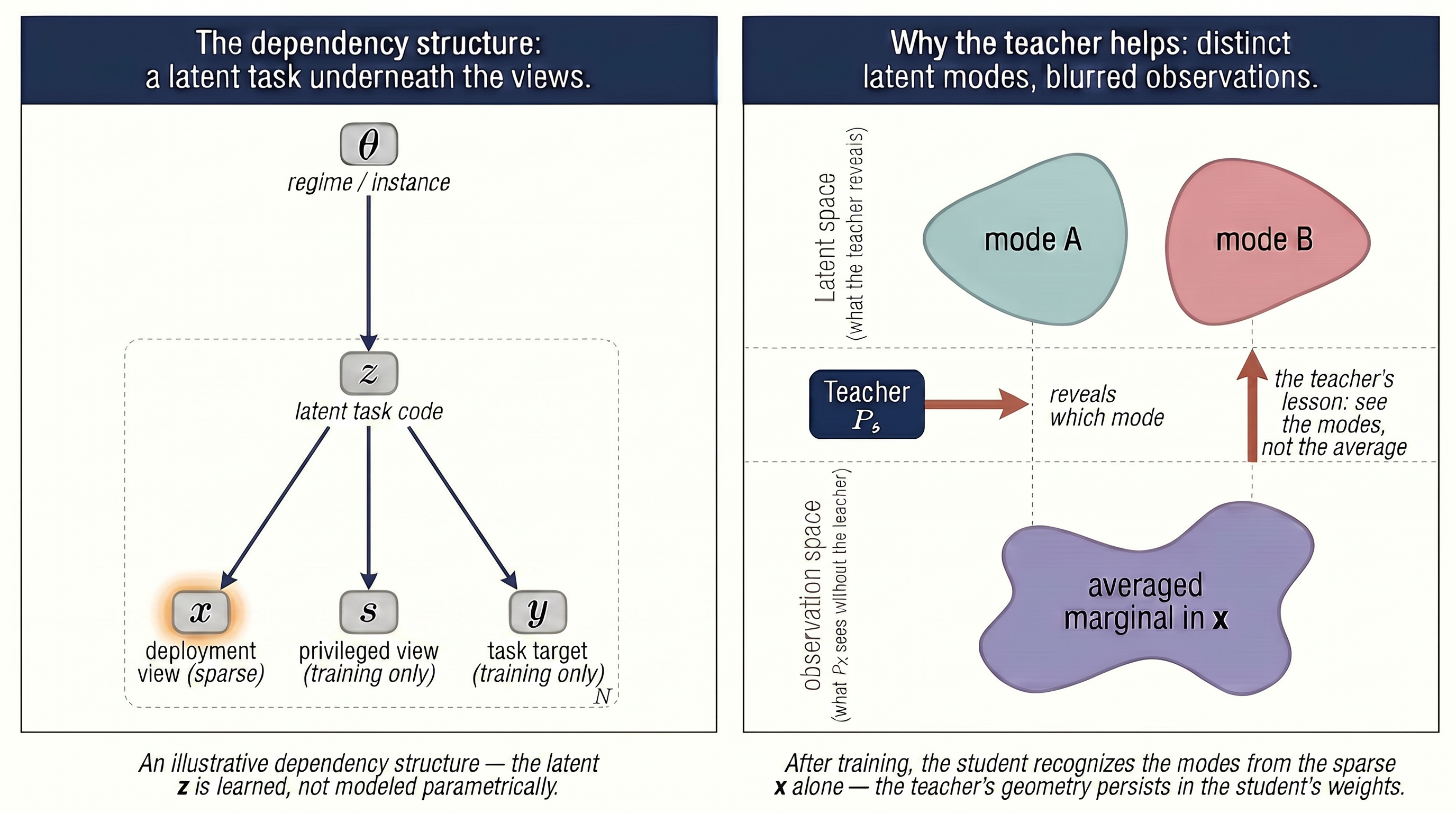}
  \caption{An intuition aid for the teacher--student asymmetry.
  \textbf{Left:} the dependency structure used informally throughout
  this section. A regime $\theta$ generates a latent task code $z$,
  which in turn produces three views: the sparse deployment observation
  $\mathbf{x}$, the dense privileged signal $s$, and the task target
  $y$. Only $\mathbf{x}$ is observed at deployment. \textbf{Right:} the
  practical consequence of the asymmetry. Distinct latent modes in $z$
  can project to overlapping regions in $\mathbf{x}$, so an encoder
  trained on $\mathbf{x}$ alone fits the averaged marginal. The
  privileged signal $s$ lets the teacher resolve the modes; the student
  is trained to reach the same resolved geometry from $\mathbf{x}$
  alone. The graphical model is an illustrative reading: the latent $z$
  and both encoders are learned by end-to-end supervised training, not
  derived from a parametric likelihood.}
  \label{fig:teacher_intuition}
\end{figure}

The privileged pathway resolves this averaging. When the dense signal
$s$ is informationally sufficient to localize $z$ given the regime
$\theta$, the role assigned to $s$ by the sufficiency property
above, where every task target on $\mathcal{G}^{[k]}$ is computable
from $s^{[k]}$ and $\GSO^{[k]}$, the teacher encoder $E_s$
identifies which mode the sample belongs to rather than averaging over
the posterior. In the language of EM, $E_s$ realizes a \emph{revealed
E-step}; the consistency loss $\|\zx - \zs\|$ that pulls the student's
latent code to the teacher's then plays the role of the M-step under
revealed latents. This is the inference-theoretic content of the
lineage discussed in Section~\ref{sec:related}: LUPI, FitNets, and
cross-modal distillation all supply a teacher with information the
student does not see, and in each case the student inherits the
resolved geometry by alignment in latent space rather than by marginal
averaging. What is new in our setting is that the teacher's resolution
is \emph{operator-anchored}: $s$ is not merely informationally richer,
it is richer in a way that is consistent across instances, because the
operator-polynomial features in the teacher inherit their stability
from the spectrum of $\GSO$. Whether the student's latent representation,
learned to match the teacher on training instances, in fact lands in
the same region on a previously unseen operator is the condition
Section~\ref{sec:geometry} formalizes quantitatively, in terms of
Wasserstein proximity between latent laws.

A closing remark. At deployment the regime $\theta$ is generally
unknown, so the genuinely Bayesian object is the \emph{soft} E-step
,  a posterior over $\theta$ given $\mathbf{x}$, with the predictor
recovered as an integral over within-regime experts. This is the
mixture-of-experts deployment view familiar from switching-state-space
models, time-varying channel estimation, and load-profile-conditioned
inference; we sketch it as a future direction in
Section~\ref{sec:conclusion} but do not develop it here. The version we
develop is the simpler one: a single student trained to land near the
teacher's resolved latent on every training instance, with the
condition for transfer to unseen instances stated geometrically in the
next section.

\section{Method: Asymmetric Dual-Path Architecture and Training Objectives}
\label{sec:method}

\subsection{Architecture Overview}
The proposed architecture is asymmetric and deliberately
application-agnostic; see Figure~\ref{fig:architecture} for an overview.
We suppress the graph index $[k]$ in this section where unambiguous; all
encoders and the decoder are shared across graphs. $\Ps$ processes the
dense privileged signal $s$ available only during training, using a
graph encoder whose features are polynomials in the system operator
$\GSO$ and whose coefficients are stable under spectral perturbation
(Section~\ref{sec:geometry}). $\Px$ processes only the sparse
observations $\{x_i : m_i = 1\}$ together with graph structural
descriptors derived from $\GSO$ such as eigenvectors, positional
encodings, diffusion coordinates, or other topology-aware node features.
Both paths project to the same latent manifold $\Mz \subset \R^d$.
A shared adaptive decoder reconstructs structured outputs on graphs of
arbitrary size. For node-level regression or reconstruction tasks, the
decoder outputs per-node states; for graph-level prediction tasks, the
same latent can feed a pooled prediction head. $\Ps$ is used
\emph{only during training} to shape the latent geometry; at deployment
only $\Px$ and the shared decoder $D$ are retained.

\begin{figure*}[t]
  \centering
  \includegraphics[width=0.7\linewidth]{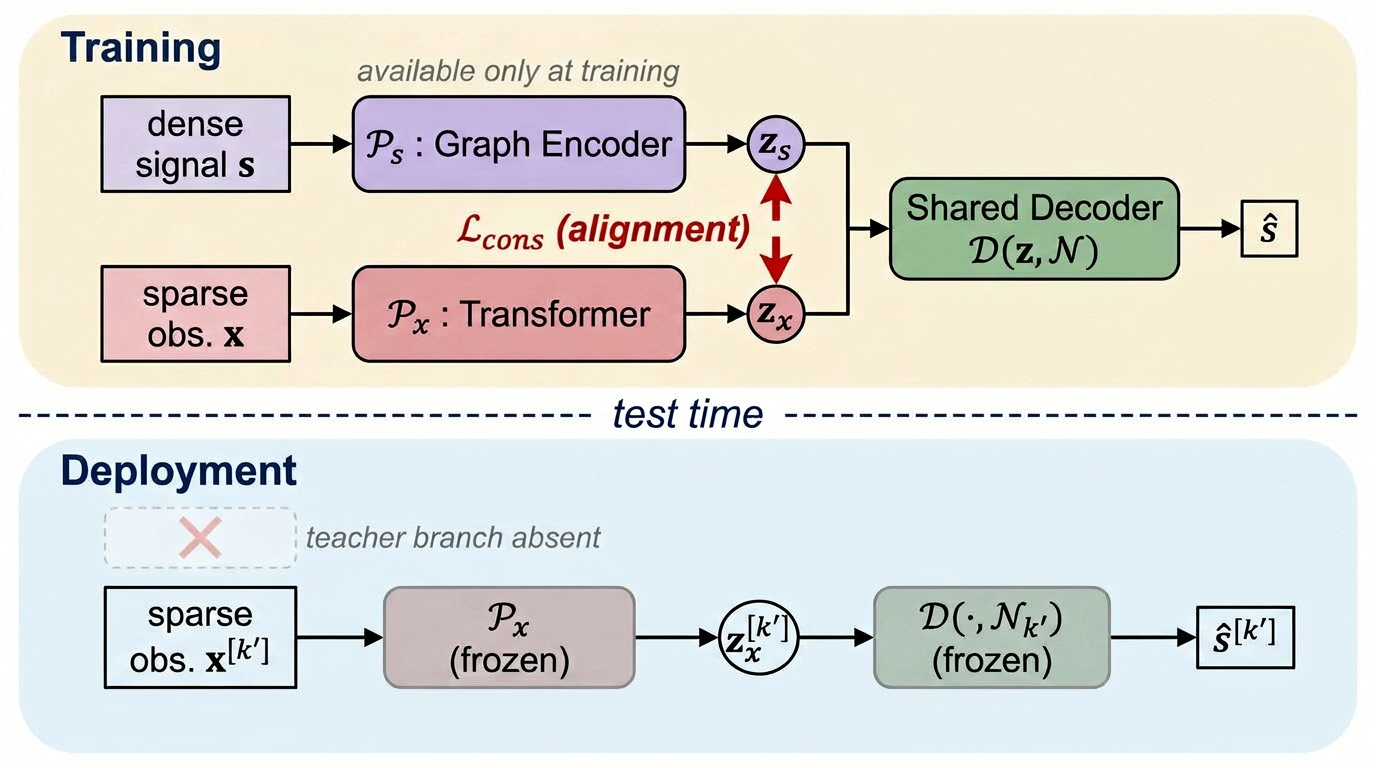}
  \caption{Asymmetric dual-path architecture.
  \textbf{Left (training phase):} path~$\Ps$ (violet) encodes the dense privileged signal $s^{[k]}$
  into latent code $z_s^{[k]}$; path~$\Px$ (rose) encodes the sparse observation tuple
  $\mathbf{x}^{[k]}=(x^{[k]},m^{[k]},p^{[k]})$ into latent code $z_x^{[k]}$.
  A geometric consistency loss $\mathcal{L}_\text{cons}$ pulls the two codes
  together, causing $\Ps$ to act as a teacher that shapes the latent geometry
  that $\Px$ cannot see directly.  Both codes are decoded by the shared adaptive
  decoder $D(z,N)$, supervised by reconstruction losses $\mathcal{L}_\text{rec}^s$
  and $\mathcal{L}_\text{rec}^x$.
  \textbf{Right (deployment):} $\Ps$ and the privileged signal $s$ are discarded.
  The frozen $\Px$ maps the sparse observations of an unseen graph~$k'$ to~$z_x^{[k']}$,
  which $D(\cdot,N_{k'})$ decodes at arbitrary size $N_{k'}$.
  Inference cost is ${\sim}0.8$~ms independent of graph size.}
  \label{fig:architecture}
\end{figure*}

\subsection{Encoders and Shared Adaptive Decoder}

\noindent\textbf{Path S: Teacher Encoder (Operator-Polynomial Features).}
$\Ps$ is a graph encoder operating on the dense privileged signal $s$.
Its features are polynomials in the system operator $\GSO$, a choice
that is load-bearing for the transfer guarantees of
Section~\ref{sec:geometry}: the coefficients of the polynomial
representation are stable under spectral perturbation of $\GSO$, so the
latent geometry shaped by $\Ps$ remains comparable across instances
with different but related operators. A general spectral-localized
form is
\begin{equation}
H^{(\ell+1)} = \sigma_s\!\Bigl( \sum_{k=0}^{K-1} \mathcal{T}_k(\GSO)\,H^{(\ell)}\,\Theta_k^{(\ell)} \Bigr),
\label{eq:cheb}
\end{equation}
with $H^{(0)} = s$ and learnable filter weights $\Theta_k^{(\ell)}$. The
operator family $\{\mathcal{T}_k(\GSO)\}$ can denote Chebyshev
polynomials, diffusion powers, localized message-passing kernels, or
other spectral graph filters. This form covers both real-valued and
complex-valued graph signals. In domains with complex physics, such as
AC power flow or wave propagation, the weights and multiplications
remain complex-valued so that amplitude and phase interact throughout
the encoder. In our power-grid instantiation we use Chebyshev
polynomials of a complex Laplacian with split complex ReLU,
\begin{equation}
\sigma_{\Cset}(z) = \mathrm{ReLU}(\mathrm{Re}\,z) + j\,\mathrm{ReLU}(\mathrm{Im}\,z).
\end{equation}
After the graph-encoding stack, node features are pooled through an
adaptive graph pooling layer and mapped through an MLP to $\R^d$. The
output $\zs \in \R^d$ is the teacher's view of the latent manifold.

\noindent\textbf{Path X: Structure-Aware Transformer (Deployment).}
$\Px$ operates only on observed nodes. For each graph we derive a structural descriptor $p_i \in \R^{F_p}$ from the graph operator (for example Laplacian or diffusion eigenvectors, random-walk coordinates, shortest-path positional encodings, or domain-specific spectral embeddings). These descriptors are concatenated with the observed node features to yield
\begin{equation}
\tilde{x}_i = \big[\,x_i\;\big|\; p_i\,\big] \in \R^{F_x + F_p},
\end{equation}
for each $i$ with $m_i = 1$. The variable-length set of observed-node features is embedded via a linear projection to hidden dimension $h$, summed with learnable positional embeddings, and processed by a transformer or any permutation-equivariant sparse-set encoder. Because different graphs contribute different numbers of observed nodes, the transformer's per-token outputs are averaged only over the observed (non-padded) positions, a {\it masked global mean pool} that yields a single graph-level vector, which 
a two-layer Multi-Layer Perceptron (MLP) head then projects to the latent $\zx \in \R^d$.

The role of the structural descriptor deserves further clarification. $\Px$ does not have access to the dense privileged signal, so topology information must be injected explicitly through graph-aware node descriptors. In our power-system instantiation, $p_i$ is formed from $\Ybus$ eigenvectors; in other domains it may be derived from Laplacian, diffusion, or other topology-aware coordinates.

\noindent\textbf{Shared Adaptive Decoder.}
Both latents share a single decoder $D : (z, N) \mapsto \hat{s}$. For structured node-level prediction, the decoder projects $z \in \R^d$ to a memory token in $\R^h$, instantiates $N$ learnable positional queries $Q \in \R^{N \times h}$ drawn from a shared pool supporting up to $N_\text{max}$ nodes, and applies a transformer decoder to generate per-node outputs. The same weights are applied regardless of $N$, and the same decoder is applied to both $\zx$ and $\zs$ during training. For graph-level tasks, the same latent manifold can instead feed a pooled classification or regression head, while retaining the same training and verification logic.

\subsection{Training Objectives}
We train the full framework with a composite objective
\begin{equation}
\Loss = \Loss_\text{rec}^x + \Loss_\text{rec}^s
      + \lambda_c \Loss_\text{cons} + \lambda_r \Loss_\text{reg},
\label{eq:loss}
\end{equation}
where each term targets a specific requirement of Theorems~\ref{thm:compat}--\ref{thm:transferbound}.

\noindent\textbf{Reconstruction (teacher-path fidelity).}
We enforce both latents to decode to the training-time target signal, using a task-appropriate loss,
\begin{equation}
\Loss_\text{rec}^x = \ell(\hat{s}_x, s),
\qquad
\Loss_\text{rec}^s = \ell(\hat{s}_s, s),
\end{equation}
where $\hat{s}_x = D(\zx, N)$, $\hat{s}_s = D(\zs, N)$, and $\ell(\cdot,\cdot)$ may be squared error, complex-valued reconstruction loss, cross-entropy, or any task-specific surrogate. In our power-grid instantiation, $\ell$ is squared error on stacked real and imaginary phasor components. This term directly controls the second term in the transfer bound \eqref{eq:transfer-bound}: if the teacher pathway cannot realize the task accurately, there is nothing stable to transfer.

\noindent\textbf{Cross-modal consistency (geometric co-location).}
$\Px$ must reach the same latent region as $\Ps$ so that, at deployment, the decoder sees a latent vector consistent with the teacher-shaped manifold. We use a cosine consistency with a small Euclidean term,
\begin{equation}
\Loss_\text{cons} = \left(1 - \frac{\zx^\top \zs}{\|\zx\|_2\,\|\zs\|_2}\right)
                 + \gamma \, \|\zx - \zs\|_2^2,
\label{eq:cons}
\end{equation}
with $\gamma$ small. Cosine similarity permits magnitude differences arising from the asymmetric information content of the two paths while enforcing directional alignment; the Euclidean term tightens co-location in magnitude-sensitive regimes. This term directly targets condition (c) of Section~\ref{sec:geometry}. In the inference-theoretic reading of Section~\ref{sec:bayesian}, $\Loss_\text{cons}$ plays the role of an M-step under revealed latents: the teacher's encoder $\Ps$ supplies the resolved latent code that $\Px$ would otherwise have to recover by marginalizing over modes it cannot distinguish from $\mathbf{x}$ alone.

\begin{remark}
    We note that the transfer bound of Theorem~\ref{thm:transferbound}
is stated in the ambient $\R^d$ Wasserstein geometry, while the
cosine term in \eqref{eq:cons} is angular. The two metrics agree
up to a magnitude-correction term whenever $\|\zx\|$ and $\|\zs\|$
lie in a bounded range, which is precisely the regime that the
regularizer $\Loss_\text{reg}$ enforces. The cosine consistency
therefore serves as a numerically stable surrogate for Euclidean
alignment within the magnitude-bounded latent region; the small
Euclidean term with weight $\gamma$ is retained to control the
residual magnitude correction explicitly.
\end{remark}

\noindent\textbf{Regularization and noise injection.}
$\Loss_\text{reg} = \|\zx\|_2^2 + \|\zs\|_2^2$ bounds the ambient radius of the learned manifold, improving optimization stability. To promote continuity and discourage memorization of observation-specific patterns, during training we add Gaussian noise $\eta \sim \mathcal{N}(0, \sigma^2 I)$ to observed node features $x_i \leftarrow x_i + \eta$ for $i$ with $m_i = 1$ only.

\noindent\textbf{Relation to local consistency and normalization.}
An extended objective form includes an explicit local-consistency term
\begin{equation}
\Loss_\text{local} = \sum_{i} \sum_{j \in \mathcal{N}_\varepsilon(i)}
                    \|E(x_i) - E(x_j)\|_2^2,
\label{eq:llocal}
\end{equation}
which penalizes the encoder for tearing $\varepsilon$-neighborhoods in the data manifold. For operator-driven domains with strong graph locality, such as the power-grid instantiation studied here, this term can be omitted because localized spectral filters already enforce smoothness and the optional local-geometry diagnostics in Section~\ref{sec:geometry} can be checked separately if needed. The training objective \eqref{eq:loss} is therefore a simplified instance of the broader framework. All observation and privileged-signal modalities are normalized using training-set statistics appropriate to the task. In the power-system instantiation, AMI features are standardized per channel using statistics computed over \emph{sensor buses only}, while voltage real and imaginary components are standardized separately over all training buses. All losses are computed in normalized space; evaluation metrics are reported after denormalization when needed.

\noindent\textbf{Deployment.}
At test time $\Ps$ and the privileged signal $s$ are discarded. Given only $(x, m, \text{structural descriptors})$ for an unseen graph of size $N_\text{test}$,
\begin{equation}
\hat{s} = D(E_x(x, m, \text{structural descriptors}),\; N_\text{test}).
\end{equation}
For graph-level tasks, the same deployment latent can be fed directly to a prediction head instead of a node decoder. Inference cost is dominated by the sparse-observation encoder and the task head, and remains independent of graph size at fixed observation count. Each deployment-time forward pass is accompanied by the per-instance transfer certificate developed in Section~\ref{sec:verify}, which gates whether the result falls inside the architecture's transfer envelope.

\section{Geometric Foundation and Verification Protocol}
\label{sec:geometry}

Section~\ref{sec:method} described the asymmetric architecture; this
section establishes when and why it transfers. We develop three layers
of analysis. \S\ref{sec:geom_pointwise} gives a pointwise geometric
criterion (\emph{transfer-compatibility}) that asks when transfer
is well-posed in principle. \S\ref{sec:geom_law} converts geometric
proximity into bounded task risk via Wasserstein stability between
latent laws, pinpointing what must hold for a new instance to fall
inside the transfer envelope. \S\ref{sec:verify} translates the
latent-law conditions into a finite-sample certification protocol that
issues a quantitative pass/fail decision for each unseen instance
from deployment-side data alone. Read against the Bayesian fable of
\S\ref{sec:bayesian}, this section says quantitatively when the
teacher's revealed latent geometry survives the move to a new
operator.

\subsection{Transfer Compatibility and Shared Latent Coordinates}
\label{sec:geom_pointwise}

We begin with a structural hypothesis that the privileged signals on
different instances are not arbitrary but lie on related geometric
objects.
\begin{assumption}[Graph manifold hypothesis]
\label{assump:manifold}
For each graph $\mathcal{G}^{[k]}$, the semantic signals generated by
the underlying physical, relational, or generative process concentrate
near a low-dimensional smooth submanifold $\Ms^{[k]}$ of their ambient
space, with deviations from the manifold treated as observation noise
\citep{bengio2013representation,fefferman2016testing}. Across graphs,
the submanifolds $\{\Ms^{[k]}\}$ may share a common intrinsic dimension
and topological type, even though the ambient dimensions $N_k$ differ.
\end{assumption}

Under Assumption~\ref{assump:manifold}, although the ambient
dimension $N_k$ varies with topology, the intrinsic manifolds
$\{\Ms^{[k]}\}$ may share a common generating mechanism. 
%
A latent space suitable for transfer must embed all $\{\Ms^{[k]}\}$ into a single connected geometric object $\Mz$ while preserving their local structure. The central issue is not whether two graphs have the same adjacency matrix, but whether their tasks can be written in a common latent coordinate system. We formalize this idea through \emph{transfer compatibility}: graph learning can transfer when topology variation is absorbed as a continuous change of coordinates in latent space rather than as a change of task.


\begin{definition}[Transfer-Compatible Graph Family]
\label{def:compat}
Let graph $k$ have observation manifold $\Mx^{[k]}$, semantic
manifold $\Ms^{[k]}$, and target map
$T^{[k]} : \Ms^{[k]} \to \mathcal{Y}^{[k]}$. A family
$\{\mathcal{G}^{[k]}\}_{k=1}^{K}$ is \emph{transfer-compatible}
with respect to a latent manifold $\Mz$ if there exist
\begin{enumerate}
  \item a connected subset $U \subseteq \Mz$ shared across graphs,
  \item semantic encoders $E_s^{[k]} : \Ms^{[k]} \to U$ that are
        homeomorphisms onto $U$, and
  \item a shared decoder family $D(\cdot, N)$ such that for each
        graph size $N_k$,
  \begin{equation}
  T^{[k]}(s) = D(E_s^{[k]}(s), N_k), \qquad \forall s \in \Ms^{[k]}.
  \end{equation}
\end{enumerate}
\end{definition}

\begin{theorem}[Shared Latent Coordinates Imply Graph Transfer]
\label{thm:compat}
If $\{\mathcal{G}^{[k]}\}_{k=1}^{K}$ is transfer-compatible in the sense of Definition~\ref{def:compat}, then for any pair of graphs $k$ and $k'$, the semantic manifolds are related on the shared latent set $U$ by the coordinate transform
\begin{equation}
\Psi^{[k \to k']} = \big(E_s^{[k']}\big)^{-1} \circ E_s^{[k]},
\end{equation}
which is a homeomorphism between $\Ms^{[k]}$ and $\Ms^{[k']}$. Consequently, all graphs in the family share the same latent task coordinates, and a single decoder family $D(\cdot, N)$ is sufficient across topologies. In other words, graph transfer is well-posed whenever topology variation changes coordinates but does not change the latent task itself.
\end{theorem}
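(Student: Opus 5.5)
The argument is essentially a composition-of-homeomorphisms argument, followed by a check that the decoder identity survives transport along $\Psi^{[k\to k']}$. By clause~2 of Definition~\ref{def:compat}, each $E_s^{[k]}:\Ms^{[k]}\to U$ is a homeomorphism onto the \emph{same} set $U$ (clause~1). So $(E_s^{[k']})^{-1}:U\to\Ms^{[k']}$ exists and is continuous. The composition $\Psi^{[k\to k']}=(E_s^{[k']})^{-1}\circ E_s^{[k]}$ is then well defined as a map $\Ms^{[k]}\to\Ms^{[k']}$. The point to stress is that the codomain of $E_s^{[k]}$ coincides with the domain of $(E_s^{[k']})^{-1}$ precisely because $U$ is shared; this is the only place the sharing is used for well-definedness.

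Next I show $\Psi^{[k\to k']}$ is a homeomorphism. It is a composition of continuous bijections, so it is a continuous bijection. Its inverse is $(E_s^{[k]})^{-1}\circ E_s^{[k']}=\Psi^{[k'\to k]}$, which is continuous by the same reasoning. I would also record the cocycle identities $\Psi^{[k\to k]}=\mathrm{id}$ and $\Psi^{[k'\to k'']}\circ\Psi^{[k\to k']}=\Psi^{[k\to k'']}$. These make precise the phrase ``share the same latent task coordinates'': the maps $E_s^{[k]}$ form an atlas of charts onto the common set $U$, and the $\Psi$'s are its transition maps.

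The substantive step is the ``single decoder suffices'' claim. For $s\in\Ms^{[k]}$, set $u=E_s^{[k]}(s)\in U$ and $s'=\Psi^{[k\to k']}(s)$, so that $E_s^{[k']}(s')=u$. Clause~3 applied to both graphs gives
\begin{equation}
T^{[k]}(s)=D(u,N_k),\qquad T^{[k']}\big(\Psi^{[k\to k']}(s)\big)=D(u,N_{k'}).
\end{equation}
Hence $T^{[k]}\circ(E_s^{[k]})^{-1}=D(\cdot,N_k)$ and $T^{[k']}\circ(E_s^{[k']})^{-1}=D(\cdot,N_{k'})$ on $U$. Both tasks are therefore realized by one $N$-indexed function $u\mapsto D(u,\cdot)$ on $U$, with no graph-specific decoder parameters. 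Topology enters only through the chart $E_s^{[k]}$ and the size argument $N$, which is the sense in which the latent task is invariant.

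I expect the main obstacle to be interpretive rather than technical, since the statement's conclusion is partly informal. I will be explicit that ``same latent task'' means the decoder identity above, and not literal equality of $T^{[k]}$ and $T^{[k']}\circ\Psi$. Equality in that literal sense is impossible in general, because $\mathcal{Y}^{[k]}$ and $\mathcal{Y}^{[k']}$ differ when $N_k\neq N_{k'}$. I would also note what the proof does not use. Connectedness of $U$ plays no role in the homeomorphism argument; it only guarantees that the common latent support is a single connected piece, which the later Wasserstein results of \S\ref{sec:geom_law} exploit. Likewise, no metric or Lipschitz structure is needed here; that structure enters only in the quantitative bounds that follow.
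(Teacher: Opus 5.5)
Your proposal is correct and follows the paper's proof essentially step for step. Both arguments define $\Psi^{[k\to k']}$ as a composition of homeomorphisms through the shared set $U$, observe that $E_s^{[k']}\circ\Psi^{[k\to k']}=E_s^{[k]}$, and apply clause~3 of Definition~\ref{def:compat} to both graphs to conclude that $T^{[k]}$ and $T^{[k']}\circ\Psi^{[k\to k']}$ come from the same decoder at the same latent point, differing only in the size argument $N$. Your additions (the explicit inverse $\Psi^{[k'\to k]}$, the cocycle identities, and the remark that connectedness of $U$ is not used) are harmless refinements, not a different route.
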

\noindent\emph{Proof sketch: see Appendix~\ref{app:geom_proofs} for the full proof.}


\begin{theorem}[Why the Deployment Encoder Transfers]
\label{thm:transferbound}
Assume the setting of Theorem~\ref{thm:compat}, and let
$E_x^{[k]} : \Mx^{[k]} \to U$ be the deployment encoder for graph
$k$. Assume Definition~\ref{def:compat} holds approximately, in the
sense that there exists $\varepsilon_T \ge 0$ such that
\[
  \big\| D(E_s^{[k]}(s), N_k) - T^{[k]}(s) \big\| \le \varepsilon_T
  \qquad \forall s \in \Ms^{[k]}.
\]
If $D(\cdot, N)$ is uniformly $L_D$-Lipschitz on $U$
for all graph sizes $N$ under consideration (including
any unseen graph $\mathcal{G}^{[k']}$ to be evaluated),
then for any paired sample $(x^{[k]}, s^{[k]})$,
\begin{equation}
\begin{aligned}
\big\| D(E_x^{[k]}(x^{[k]}), N_k)&
      - T^{[k]}(s^{[k]}) \big\|\le\\
&L_D \big\| E_x^{[k]}(x^{[k]}) - E_s^{[k]}(s^{[k]}) \big\| +\varepsilon_T.
\end{aligned}
\label{eq:transfer-bound}
\end{equation}
Thus transfer succeeds when the teacher-path error and the
latent alignment error are both small. For unseen graphs, the
same bound applies provided: the new graph is
$\varepsilon_T$-approximately transfer-compatible with latent
subset $U$, its deployment encoder $E_x^{[k']}$ maps
into $U$, and $D(\cdot, N_{k'})$ is $L_D$-Lipschitz on $U$.
\end{theorem}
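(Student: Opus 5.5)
The bound \eqref{eq:transfer-bound} follows from a single triangle inequality in the target space $\mathcal{Y}^{[k]}$, with the approximate teacher reconstruction $D(E_s^{[k]}(s^{[k]}),N_k)$ as the intermediate point. Fix a paired sample $(x^{[k]},s^{[k]})$ with $s^{[k]}\in\Ms^{[k]}$. Write $z_x := E_x^{[k]}(x^{[k]})$ and $z_s := E_s^{[k]}(s^{[k]})$. Both lie in $U$: $z_x$ by the hypothesis that $E_x^{[k]}$ maps into $U$, and $z_s$ by condition~2 of Definition~\ref{def:compat}, which makes $E_s^{[k]}$ a homeomorphism onto $U$. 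Then
\[
\big\|D(z_x,N_k)-T^{[k]}(s^{[k]})\big\|
\le \big\|D(z_x,N_k)-D(z_s,N_k)\big\| + \big\|D(z_s,N_k)-T^{[k]}(s^{[k]})\big\|.
\]

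The two terms are bounded separately. For the first, both arguments lie in $U$, so uniform $L_D$-Lipschitz continuity of $D(\cdot,N_k)$ on $U$ gives the bound $L_D\|z_x-z_s\|$. For the second, the approximate-compatibility hypothesis holds for every $s\in\Ms^{[k]}$, so it applies to $s^{[k]}$ and gives $\varepsilon_T$. Adding the two yields \eqref{eq:transfer-bound}. The only care needed is to use the norm on $\mathcal{Y}^{[k]}$ consistently; for classification this is the logit norm, as stipulated in Section~\ref{sec:problem_notation}.

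For an unseen graph $\mathcal{G}^{[k']}$ I would rerun the same two-line argument with $k$ replaced by $k'$, checking that each hypothesis used is exactly one of the three stated provisos:
\begin{itemize}
  \item $\varepsilon_T$-approximate compatibility with the same $U$ supplies both the second term and $E_s^{[k']}(s)\in U$;
  \item $E_x^{[k']}$ mapping into $U$ puts $z_x$ in the Lipschitz domain;
  \item Lipschitz continuity of $D(\cdot,N_{k'})$ on $U$ controls the first term.
\end{itemize}
Theorem~\ref{thm:compat} is not logically required for the inequality itself. It only supplies the interpretation that a single decoder family on $U$ is meaningful across graphs.

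There is no real analytic obstacle here. The one point to handle carefully is the domain issue: the Lipschitz bound is only available on $U$, so I must state explicitly that both latent codes land in $U$ rather than merely in $\Mz$ or $\R^d$. In particular, a deployment encoder whose output leaves $U$ falls outside the theorem's hypotheses. This is why the statement makes ``$E_x^{[k']}$ maps into $U$'' an explicit proviso, and why the certification protocol later checks latent-law proximity.
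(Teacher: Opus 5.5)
Your proposal is correct and matches the paper's proof essentially step for step. Both use the triangle inequality through $D(E_s^{[k]}(s^{[k]}),N_k)$, bound the first term by $L_D$-Lipschitz continuity on $U$ (with both latent codes placed in $U$) and the second by $\varepsilon_T$, and handle the unseen graph $\mathcal{G}^{[k']}$ by checking that each ingredient is supplied by one of the three stated provisos.
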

\noindent\emph{Proof sketch: see Appendix~\ref{app:geom_proofs} for the full proof.}

Theorem~\ref{thm:compat} answers \emph{when} graph transfer is possible: transfer is justified only when different graph tasks admit a common latent coordinate system. Theorem~\ref{thm:transferbound} answers \emph{why} the learned deployment path transfers: once the sparse observation encoder reaches the same latent region as the teacher, decoder continuity converts latent alignment into bounded task error. Together with the latent-law results below, these statements reduce graph transfer learning to three measurable requirements: (a) \emph{reconstruction fidelity on the teacher and deployment pathways}, (b) \emph{latent-law proximity to a shared semantic reference}, and (c) \emph{cross-modal alignment between teacher and deployment latents}.

\begin{remark}[Justification in physically structured domains]
    Assumption~\ref{assump:manifold} is well motivated whenever the
governing operator $\GSO^{[k]}$ varies smoothly within the graph
family. Weighted Hermitian operators such as the Laplacian or the bus
admittance matrix $\Ybus^{[k]}$ live in a Euclidean space, and edge
addition or removal can be realized as a continuous edge-weight sweep.
By Weyl's inequality, the spectrum of such operators is Lipschitz
continuous in the operator norm \citep{bhatia1997matrix}, and any
polynomial or Chebyshev graph filter is smooth in the operator, so
filtered semantic signals trace out a smoothly parameterised family
in the ambient signal space. Spectral positional descriptors derived
from $\GSO^{[k]}$ are themselves available both at training and at
deployment, so any sharp transitions in the eigenvector basis at
spectral-gap closures are observed inputs to $\Px$ rather than hidden
coordinate ambiguities; the encoder is trained on the resulting
diversity of bases and learns approximate invariance to the
sign-and-basis ambiguity intrinsic to spectral descriptors. (For
graph families with persistent spectral degeneracies, an explicitly
sign- and basis-equivariant architecture
\citep{lim2023signbasis} would replace this learned invariance with a
structural one; this is not required in our case study but is a
natural extension.) The deformations between graphs in our
power-system case study sit in this regime, which makes the graph
manifold hypothesis a natural rather than an idealised assumption for
the experiments of \S\ref{sec:exp}.
\end{remark}

\begin{remark}[From Ideal Conditions to Empirical Diagnostics]
Theorems~\ref{thm:compat}--\ref{thm:transferbound} are structural statements: they explain why transfer is possible when a graph family shares latent task coordinates and when the deployment encoder aligns with the teacher pathway. In practice, neither transfer compatibility nor the bound in \eqref{eq:transfer-bound} can be verified exactly from finite samples. Section~\ref{sec:verify} therefore introduces diagnostics that look for the empirical signatures of these conditions: reconstruction fidelity, latent-law Wasserstein proximity, and cross-modal alignment error. A passing diagnostic is evidence consistent with transfer compatibility, but not a formal proof.
\end{remark}

\subsection{Latent-Law Transfer Guarantees}
\label{sec:geom_law}

The previous results are pointwise and geometric: they say when a
deterministic latent code on a single sample produces bounded task
error. We now state the distributional version used by the
verification protocol. The shift in viewpoint is the one anticipated
by the Bayesian fable of \S\ref{sec:bayesian}: the teacher does not
just resolve individual modes, it shapes a \emph{law} on the latent
space, and the student is trained to track that law. Whether the
student succeeds on a new operator therefore reduces to whether its
induced latent law remains close to the teacher's reference law in a
quantifiable sense. The next theorems make ``close'' precise as
Wasserstein proximity.
For graph
$k$, let $x$ and $s$ be the observation and semantic random
variables drawn from the data distribution on graph $k$, and let
\begin{equation}
z_x^{[k]}=E_x^{[k]}(x), \qquad
z_s^{[k]}=E_s^{[k]}(s),
\end{equation}
denote the induced latent random variables. We write
$\mu_x^{[k]}=\mathrm{Law}(z_x^{[k]})$ and
$\mu_s^{[k]}=\mathrm{Law}(z_s^{[k]})$ for their pushforward
distributions on $\Mz$. We introduce two reference latent laws
pooled across the training family $\{\mathcal{G}^{[k]}\}_{k=1}^{K}$:
the \emph{semantic reference law} $\mu_\star^s$, formed from path-S
latent codes, and the \emph{deployment reference law} $\mu_\star^x$,
formed from path-X latent codes:
\begin{equation}
\begin{aligned}
\mu_\star^s
:= \mathrm{Law}\!\left(z_s \text{ pooled over training}\right),\\
\mu_\star^x
:= \mathrm{Law}\!\left(z_x \text{ pooled over training}\right).
\end{aligned}
\end{equation}
Both are estimable from training data. $\mu_\star^s$ defines the
reference task family shaped by the teacher; $\mu_\star^x$ is its
deployable counterpart and is the sole reference needed at test time.
For a decoder $D$ and a fixed graph $k$, define the
\emph{per-graph target map}
\begin{equation}
y_k : U \to \mathcal{Y}^{[k]},
\qquad
y_k(z) \;:=\; T^{[k]}\!\big((E_s^{[k]})^{-1}(z)\big),
\end{equation}
which is well-defined on the shared latent set $U$ because
$E_s^{[k]}$ is a homeomorphism onto $U$ (Definition~\ref{def:compat}).
For graph $k$, $N_k$ is a fixed constant, so the task risk
under a latent law $\mu$ \emph{for graph $k$} is
\begin{equation}
R_\mu^{[k]}(D) \;:=\;
\mathbb{E}_{z\sim\mu}\!\big[\ell\!\left(D(z,N_k),\, y_k(z)\right)\big].
\label{eq:per_graph_risk}
\end{equation}
We abbreviate $R_k(D):=R_{\mu_x^{[k]}}^{[k]}(D)$, the deployment
risk on graph $k$. For the semantic reference law $\mu_\star^s$, the
reference risk is defined analogously as
$R_{\mu_\star^s}(D) := \frac{1}{K}\sum_{k=1}^{K} R_{\mu_\star^s}^{[k]}(D)$,
using each graph's $N_k$ and $y_k$; when the family is
transfer-compatible the shared decoder makes this consistent
across graphs. Throughout this section, $F_D^{[k]}(z) :=
\ell(D(z,N_k), y_k(z))$ denotes the per-graph loss-composition,
and all Lipschitz and Wasserstein statements for graph $k$ are
understood with respect to $F_D^{[k]}$.

\begin{theorem}[Sufficient latent-law transfer condition]
\label{thm:law_sufficient}
For an unseen test graph $k'$, assume:
\begin{enumerate}
  \item \textbf{Semantic decoder fidelity} (training-time verifiable):
  \begin{equation}
  R_{\mu_\star^s}^{[k']}(D)\le \varepsilon_\star^s.
  \end{equation}
  \item \textbf{Graph/path matching calibration}:
  for each graph in the reference family, define
  $c_k:=W_2(\mu_x^{[k]},\mu_s^{[k]})$.  With the same mixture weights
  $\pi_k$ used to form $\mu_\star^x$ and $\mu_\star^s$,
  \begin{equation}
  \left(\sum_k \pi_k c_k^2\right)^{1/2}\le \varepsilon_c .
  \end{equation}
  A sufficient (and more conservative) alternative is $\sup_k c_k\le\varepsilon_c$;
  the sup bound implies the weighted-RMS bound but the converse does not hold.
  This per-graph condition implies
  $W_2(\mu_\star^x,\mu_\star^s)\le\varepsilon_c$ for the pooled
  reference laws.
  \item \textbf{Test-graph deployment proximity} (test-time verifiable):
  \begin{equation}
  W_2\!\left(\mu_x^{[k']},\,\mu_\star^x\right)\le \varepsilon_x.
  \end{equation}
  \item \textbf{Lipschitz loss-composition}: $F_D^{[k']}(z):=\ell(D(z,N_{k'}),y_{k'}(z))$
  is $K$-Lipschitz on the latent region supporting
  $\mu_x^{[k']}$, $\mu_\star^x$, and $\mu_\star^s$.
\end{enumerate}
Then
\begin{equation}
R_{k'}(D) \le \varepsilon_\star^s + K(\varepsilon_c + \varepsilon_x).
\label{eq:law_suff_bound}
\end{equation}
\begin{remark}
Each term is fully verifiable without access to path S on the test
graph. In this reference-law instantiation, $\varepsilon_\star^s$ and
$\varepsilon_c$ are computed on the reference family and need not be
re-evaluated at test time.
$\varepsilon_x$ requires only $\mu_x^{[k']}$ and $\mu_\star^x$, both
formed from deployment-path outputs. The term $\varepsilon_c$
captures graph/path matching calibration per graph: for the same graph
and demand law, the sparse deployment pathway should land near the
full-information teacher pathway. This calibration notion is not a
cross-graph transfer requirement. The actual test-graph transfer
requirement is $\varepsilon_x$, which measures whether the unseen
deployment law remains near the deployment reference.
\end{remark}
\end{theorem}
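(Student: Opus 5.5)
The plan is a telescoping argument through the two reference laws, converting each Wasserstein distance into a risk gap by Kantorovich--Rubinstein duality. By definition $R_{k'}(D)=\mathbb{E}_{z\sim\mu_x^{[k']}}[F_D^{[k']}(z)]$ and $R^{[k']}_{\mu_\star^s}(D)=\mathbb{E}_{z\sim\mu_\star^s}[F_D^{[k']}(z)]$, so both risks are integrals of the \emph{same} function $F_D^{[k']}$ against different laws. We therefore write
\[
R_{k'}(D) = R^{[k']}_{\mu_\star^s}(D) + \Bigl(\mathbb{E}_{\mu_\star^x}F_D^{[k']} - \mathbb{E}_{\mu_\star^s}F_D^{[k']}\Bigr) + \Bigl(\mathbb{E}_{\mu_x^{[k']}}F_D^{[k']} - \mathbb{E}_{\mu_\star^x}F_D^{[k']}\Bigr).
\]
The first term is at most $\varepsilon_\star^s$ by assumption~1.

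For the two difference terms we use that $F_D^{[k']}$ is $K$-Lipschitz on the region carrying all three laws (assumption~4). Kantorovich--Rubinstein duality gives $|\mathbb{E}_\mu f-\mathbb{E}_\nu f|\le K\,W_1(\mu,\nu)$, and $W_1\le W_2$ by Jensen (or Cauchy--Schwarz) applied to any coupling. The third bracket is then at most $K\varepsilon_x$ by assumption~3, and the middle bracket is at most $K\,W_2(\mu_\star^x,\mu_\star^s)$. One technical point is that duality is usually stated for functions Lipschitz on all of $\R^d$. To handle this, either extend $F_D^{[k']}$ from the support region to $\R^d$ with the same constant (McShane extension; $\ell$ is real-valued), or argue directly: for an optimal coupling $\gamma$ whose marginals are supported in the region, $|\mathbb{E}_\gamma[F(z)-F(z')]|\le K\,\mathbb{E}_\gamma\|z-z'\|$.

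The remaining step is to show that $W_2(\mu_\star^x,\mu_\star^s)\le\varepsilon_c$ follows from the per-graph calibration of assumption~2. We construct a coupling of the two mixtures: for each $k$ take an optimal $W_2$ coupling $\gamma_k$ of $(\mu_x^{[k]},\mu_s^{[k]})$ and set $\gamma=\sum_k\pi_k\gamma_k$. Its marginals are $\sum_k\pi_k\mu_x^{[k]}=\mu_\star^x$ and $\sum_k\pi_k\mu_s^{[k]}=\mu_\star^s$, with the same weights $\pi_k$ by hypothesis. Hence $W_2^2(\mu_\star^x,\mu_\star^s)\le\int\|z-z'\|^2d\gamma=\sum_k\pi_k c_k^2\le\varepsilon_c^2$. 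This is the joint convexity of $W_2^2$. The $\sup_k c_k$ variant follows immediately. Collecting the three terms yields \eqref{eq:law_suff_bound}.

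I expect the main obstacle to be bookkeeping rather than depth. The mixture-coupling step needs existence of optimal couplings (finite second moments, which the regularizer $\Loss_\text{reg}$ supports) and identical mixture weights in both pooled laws. The Lipschitz step needs all three laws, including the pooled teacher law evaluated with the \emph{test} graph's $y_{k'}$, to lie in the region where assumption~4 holds. I would state these as standing measurability and moment conventions at the start rather than re-derive them.
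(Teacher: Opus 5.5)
Your proposal is correct and follows the paper's proof essentially step for step: the same telescoping through $\mu_\star^x$, the same coupling-based risk-stability estimate, and the same mixture coupling $\gamma=\sum_k\pi_k\gamma_k$ to obtain $W_2(\mu_\star^x,\mu_\star^s)\le\varepsilon_c$. The paper's Lemma~\ref{lem:risk_stability} uses your second option (arguing directly over couplings of laws supported on the region), so no McShane extension is needed, and your side conditions (finite second moments, identical mixture weights, and all three laws lying in the region where $F_D^{[k']}$ is $K$-Lipschitz) are the paper's standing assumptions, the first via $\mathcal{P}_2(\R^d)$ in Definition~\ref{def:latent_laws}.
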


\begin{figure}[t]
  \centering
  \includegraphics[width=0.6\linewidth]{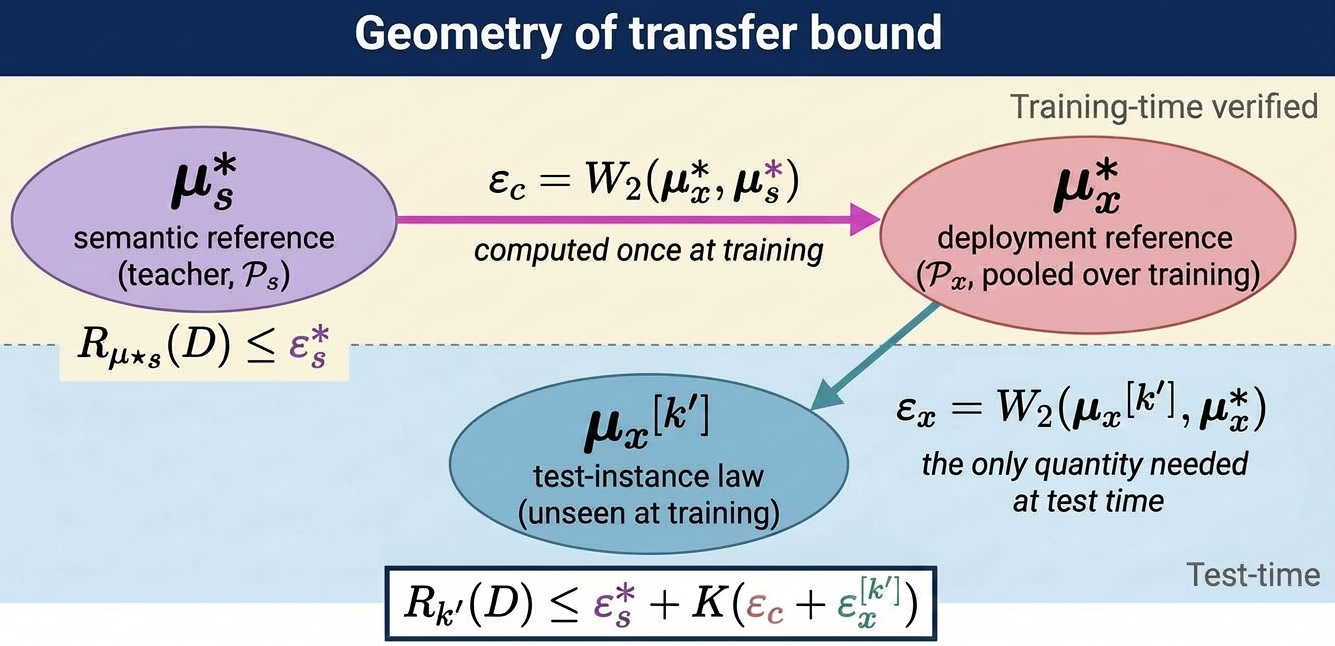}
  \caption{Latent-space geometry underlying Theorem~\ref{thm:law_sufficient},
  with a clear separation between reference-family calibration (warm
  background, top) and what is verified at test time (cool background,
  bottom).
  \textbf{Reference family (top):}
  the semantic reference law $\mu_\star^s$ (purple) is anchored by
  path~$\Ps$ with semantic fidelity $\varepsilon_\star^s$, and the
  graph/path matching calibration constant
  $\varepsilon_c = W_2(\mu_\star^x,\mu_\star^s)$ is obtained from
  paired per-graph path distances $W_2(\mu_x^{[k]},\mu_s^{[k]})$ on
  the reference family.
  \textbf{Test-time (bottom):}
  only the per-instance deployment proximity
  $\varepsilon_x^{[k']} = W_2(\mu_x^{[k']},\mu_\star^x)$ needs to be
  evaluated on the unseen instance $k'$, requiring only path-$\Px$
  outputs. The two hops chain via Lemma~\ref{lem:risk_stability} to
  give the bound
  $R_{k'}(D)\le\varepsilon_\star^s+K(\varepsilon_c+\varepsilon_x^{[k']})$,
  with $\varepsilon_x^{[k']}$ in the theorem statement bounded
  uniformly by $\varepsilon_x$.}
  \label{fig:theorem_geometry}
\end{figure}

\begin{theorem}[Necessary law proximity under identifiability]
\label{thm:law_necessary}
Assume there exists a continuous, non-decreasing function $\psi$ with $\psi(0)=0$ such that
\begin{equation}
R_\mu^{[k]}(D)-R_{\mu_\star^s}^{[k]}(D)
\ge
\psi\!\left(W_2(\mu,\mu_\star^s)\right)
\end{equation}
for every latent law $\mu$ in the relevant family $\{\mu_x^{[k]}, \mu_s^{[k]}, \mu_\star^s\}$, where
$\psi^{-1}(\delta) := \sup\{u \ge 0 : \psi(u) \le \delta\}$. If graph $k$ transfers with relative excess risk
\begin{equation}
R_k(D)-R_{\mu_\star^s}^{[k]}(D)\le \delta,
\end{equation}
then
\begin{equation}
W_2(\mu_x^{[k]},\mu_\star^s)\le \psi^{-1}(\delta).
\end{equation}
If additionally the weighted graph/path calibration condition induces
$W_2(\mu_\star^x,\mu_\star^s)\le \varepsilon_c$ and
$W_2(\mu_x^{[k]},\mu_\star^x)\le \varepsilon_x$, then
\begin{equation}
W_2(\mu_x^{[k]},\mu_\star^s)
\le
\varepsilon_x + \varepsilon_c,
\end{equation}
so the sufficient and necessary conditions share the same
Wasserstein ingredients.
\end{theorem}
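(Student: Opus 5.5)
The first claim follows by inverting the identifiability hypothesis $\psi$, and the second by the triangle inequality for $W_2$; no estimates beyond these are needed.

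\emph{First claim.} Apply the identifiability assumption to the deployment law $\mu=\mu_x^{[k]}$, which belongs to the admissible family $\{\mu_x^{[k]},\mu_s^{[k]},\mu_\star^s\}$. Recalling that $R_k(D)=R^{[k]}_{\mu_x^{[k]}}(D)$ by definition, this yields
\[
\psi\!\left(W_2(\mu_x^{[k]},\mu_\star^s)\right)\le R_k(D)-R^{[k]}_{\mu_\star^s}(D)\le\delta .
\]
Set $u_0:=W_2(\mu_x^{[k]},\mu_\star^s)\ge 0$. Then $u_0$ belongs to the set $\{u\ge0:\psi(u)\le\delta\}$. By the definition of the generalized inverse $\psi^{-1}(\delta)$ as the supremum of that set, we get $u_0\le\psi^{-1}(\delta)$.

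Monotonicity of $\psi$ only ensures this set is an interval. Continuity ensures the supremum is attained. Neither property is needed for the inequality itself, since any member of a set is at most its supremum. If $\psi$ is bounded by $\delta$ everywhere, then $\psi^{-1}(\delta)=+\infty$ and the bound is vacuous but still true. The set is never empty, because $\psi(0)=0\le\delta$; note that $\delta\ge0$ follows from the displayed chain, since $\psi\ge\psi(0)=0$ by monotonicity.

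\emph{Second claim.} $W_2$ is a metric on laws with finite second moment. The latents lie in a bounded region, because $\Loss_\text{reg}$ bounds them, or we can simply assume finite second moments. The triangle inequality then gives
\[
W_2(\mu_x^{[k]},\mu_\star^s)\le W_2(\mu_x^{[k]},\mu_\star^x)+W_2(\mu_\star^x,\mu_\star^s)\le\varepsilon_x+\varepsilon_c .
\]
The assumed bound $W_2(\mu_\star^x,\mu_\star^s)\le\varepsilon_c$ is exactly what condition 2 of Theorem~\ref{thm:law_sufficient} delivers for the pooled laws. If one wants to derive it there rather than assume it, the step uses joint convexity of $W_2^2$ under mixtures with common weights $\pi_k$. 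Concretely, glue the per-graph optimal couplings of $(\mu_x^{[k]},\mu_s^{[k]})$ with weights $\pi_k$ to obtain a coupling of $(\mu_\star^x,\mu_\star^s)$ whose cost is $\sum_k\pi_k c_k^2$.

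\emph{Where care is needed.} Only bookkeeping requires attention. First, we must check that $\mu_x^{[k]}$ lies in the family on which identifiability is posited. Second, we must handle the generalized inverse when $\psi$ is not strictly increasing. Both points are resolved above.

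The concluding remark, that the sufficient and necessary conditions share the same Wasserstein ingredients, can then be read off directly. Combined with Lemma~\ref{lem:risk_stability}, the quantity $W_2(\mu_x^{[k]},\mu_\star^s)$ is sandwiched between $\psi^{-1}$ of the excess risk and $\varepsilon_x+\varepsilon_c$.
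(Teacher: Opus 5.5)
Your proof is correct and follows the paper's argument essentially verbatim. You apply identifiability with $\mu=\mu_x^{[k]}$, observe that $W_2(\mu_x^{[k]},\mu_\star^s)$ lies in $\{u\ge 0:\psi(u)\le\delta\}$ and so is at most $\psi^{-1}(\delta)$, and then get the second claim from the $W_2$ triangle inequality through $\mu_\star^x$. One small correction to your closing aside: $\psi^{-1}(\delta)$ and $\varepsilon_x+\varepsilon_c$ are both upper bounds on $W_2(\mu_x^{[k]},\mu_\star^s)$, so nothing is sandwiched there; the genuine two-sided statement is on the excess risk, $\psi(W_2(\mu_x^{[k]},\mu_\star^s))\le R_k(D)-R^{[k]}_{\mu_\star^s}(D)\le K\,W_2(\mu_x^{[k]},\mu_\star^s)$ whenever Lemma~\ref{lem:risk_stability} applies.
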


\begin{remark}[What is necessary and what is diagnostic]
Theorem~\ref{thm:law_sufficient} gives a sufficient condition: the shared decoder must be accurate on the teacher reference law $\mu_\star^s$, the two pathways must satisfy graph/path matching calibration ($\varepsilon_c$ small), the test-graph deployment law must be close to $\mu_\star^x$ ($\varepsilon_x$ small), and the scalar task loss $F_D^{[k]}=\ell(D(\cdot,N_k),y_k(\cdot))$ must be Lipschitz in latent coordinates. Wasserstein risk stability is then a consequence, not a primitive assumption. The reverse implication in Theorem~\ref{thm:law_necessary} requires identifiability; without it, a decoder can collapse large latent-law differences into nearly identical risks, so successful transfer alone cannot force $W_2$ proximity. Detailed proofs and the resulting near-iff statement are given in Appendix~\ref{app:law_transfer}.
\end{remark}

\begin{corollary}[Probabilistic transfer guarantee]
\label{cor:prob_transfer}
Let $\Pi$ be a distribution over graphs and let $\alpha\in(0,1)$.
Define the $\alpha$-quantile of the per-graph deployment proximity under $\Pi$:
\begin{equation}
\tau_\alpha
:=
\inf\!\left\{
\tau\ge0
:
\Pr_{k\sim\Pi}\!\left[W_2\!\left(\mu_x^{[k]},\mu_\star^x\right)\le\tau\right]
\ge\alpha
\right\}.
\end{equation}
Under the assumptions of Theorem~\ref{thm:law_sufficient},
\begin{equation}
\Pr_{k\sim\Pi}\!\left[R_k(D)\le
  \varepsilon_\star^s+K(\varepsilon_c+\tau_\alpha)\right]\ge\alpha.
\end{equation}
A fraction $\alpha$ of graphs drawn from $\Pi$ achieve transfer error
at most $\varepsilon_\star^s+K(\varepsilon_c+\tau_\alpha)$.
$\varepsilon_\star^s$ and $\varepsilon_c$ are reference-family
constants; $\tau_\alpha$ is estimated from held-out graphs as the
empirical $\alpha$-quantile of
$\bigl\{W_2(\hat\mu_x^{[k]},\hat\mu_\star^x)\bigr\}_{k\in\mathcal{D}_\mathrm{test}}$.
If a deterministic uniform guarantee over a known graph class is desired,
one may instead take
$\varepsilon_x=\sup_k W_2(\mu_x^{[k]},\mu_\star^x)$.  The verification
protocol uses a quantile rather than the empirical maximum, because a
finite-sample maximum is not a reliable upper bound for future unseen
graphs.
\end{corollary}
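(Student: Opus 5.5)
The plan is to reduce the corollary to Theorem~\ref{thm:law_sufficient} applied graph-by-graph, with the uniform test-time constant $\varepsilon_x$ replaced by the per-graph quantity $\varepsilon_x^{[k]} := W_2(\mu_x^{[k]},\mu_\star^x)$. Conditions~1, 2 and~4 of Theorem~\ref{thm:law_sufficient} do not involve the test graph's deployment law. Condition~1 and condition~4 are assumed for each $k$ in the support of $\Pi$; that is how I will read ``under the assumptions of Theorem~\ref{thm:law_sufficient}''. Condition~2 is a reference-family constant. Hence, for every graph $k$ in the support of $\Pi$, the theorem gives the deterministic pointwise bound
\[
R_k(D) \le \varepsilon_\star^s + K\bigl(\varepsilon_c + \varepsilon_x^{[k]}\bigr),
\]
obtained by invoking the theorem with $\varepsilon_x := \varepsilon_x^{[k]}$, which is legitimate because condition~3 then holds with equality.

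The second step is an event inclusion. Let $A := \{k : \varepsilon_x^{[k]} \le \tau_\alpha\}$ and $B := \{k : R_k(D) \le \varepsilon_\star^s + K(\varepsilon_c+\tau_\alpha)\}$. Since $K \ge 0$ and the right-hand side of the pointwise bound is non-decreasing in $\varepsilon_x^{[k]}$, every $k \in A$ lies in $B$. Monotonicity of probability then gives $\Pr_\Pi[B] \ge \Pr_\Pi[A]$.

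The remaining step, which I expect to be the only subtle point, is to show $\Pr_\Pi[A] \ge \alpha$. This means checking that the infimum defining $\tau_\alpha$ is attained. Let $G(\tau) := \Pr_{k\sim\Pi}[\varepsilon_x^{[k]} \le \tau]$ be the CDF of the random variable $\varepsilon_x^{[k]}$, which takes values in $[0,\infty)$.

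\textbf{Nonemptiness.} The set $\{\tau \ge 0 : G(\tau) \ge \alpha\}$ is nonempty because $G(\tau)\to 1 > \alpha$ as $\tau\to\infty$, which uses $W_2$ being finite almost surely (finite second moments of the latent laws, as implicitly assumed). Hence $\tau_\alpha < \infty$.

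\textbf{Attainment.} $G$ is right-continuous, being a CDF. Take a sequence $\tau_n \downarrow \tau_\alpha$ from the set, so that $G(\tau_n) \ge \alpha$ for each $n$. Right-continuity then gives $G(\tau_\alpha) = \lim_n G(\tau_n) \ge \alpha$, so $\Pr_\Pi[A] \ge \alpha$.

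Measurability of $k \mapsto \varepsilon_x^{[k]}$ and $k\mapsto R_k(D)$ under $\Pi$ is taken as part of the setup. For instance, it is automatic if $\Pi$ is supported on a countable graph class.

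Chaining the two steps gives $\Pr_\Pi[B] \ge \alpha$, which is the claim. For the concluding remarks, taking $\varepsilon_x = \sup_k \varepsilon_x^{[k]}$ is the special case $\alpha = 1$ with a deterministic bound; this is immediate from the pointwise bound. The statement about estimating $\tau_\alpha$ by an empirical quantile is interpretive rather than part of the guarantee, so I would not prove it here; I would only note that a finite-sample version would need a DKW-type correction to the quantile level.
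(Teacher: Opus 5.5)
Your proposal is correct and follows the paper's proof: apply Theorem~\ref{thm:law_sufficient} to every graph with $W_2(\mu_x^{[k]},\mu_\star^x)\le\tau_\alpha$, then observe that this event has $\Pi$-probability at least $\alpha$. Your right-continuity argument shows that the infimum defining $\tau_\alpha$ is attained, so that $\Pr_\Pi[W_2(\mu_x^{[k]},\mu_\star^x)\le\tau_\alpha]\ge\alpha$; this makes explicit the step the paper compresses into ``by definition of $\tau_\alpha$.''
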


\begin{proof}
By Theorem~\ref{thm:law_sufficient}, for any graph $k$ with
$W_2(\mu_x^{[k]},\mu_\star^x)\le\tau_\alpha$, we have
$R_k(D)\le\varepsilon_\star^s+K(\varepsilon_c+\tau_\alpha)$.
The event $\{W_2(\mu_x^{[k]},\mu_\star^x)\le\tau_\alpha\}$ has
probability at least $\alpha$ under $\Pi$ by definition of $\tau_\alpha$.
\end{proof}

\begin{remark}[Empirical success rate as a $K$-free verification]
\label{rem:empirical_rate}
The Lipschitz constant $K$ is typically hard to estimate and can
make the bound $\varepsilon_\star+K\tau_\alpha$ loose or vacuous.
A fully empirical alternative bypasses $K$ by reporting the
\emph{transfer success rate} at tolerance $\delta$ directly:
\begin{equation}
\widehat{p}_\delta
:=
\frac{1}{|\mathcal{D}_\mathrm{test}|}
\sum_{k\in\mathcal{D}_\mathrm{test}}
\mathbf{1}\!\left[\hat\varepsilon_x^{[k]}\le\delta\right].
\end{equation}
This is the fraction of test graphs on which deployment error
falls within $\delta$, verifiable directly from simulation without
knowledge of $K$ or $\varepsilon_\star$.
The theoretical bound explains \emph{why} $\widehat{p}_\delta$
should be large when the latent-law conditions hold;
$\widehat{p}_\delta$ itself certifies \emph{that} it is large on
the observed test distribution. Crucially, computing $\widehat{p}_\delta$
and $\hat\tau_\alpha$ requires only $\hat\mu_x^{[k]}$ and
$\hat\mu_\star^x$, both from path X; no path-S access is needed at test time.
Algorithm~\ref{alg:verify} is therefore reformulated below to
report $\widehat{p}_\delta$ and the empirical quantile
$\hat\tau_\alpha$ rather than a binary PASS/FAIL verdict.
\end{remark}
\subsection{Bound Verification Protocol}
\label{sec:verify}

Direct verification of the infinite-sample conditions in Theorem~\ref{thm:law_sufficient} is intractable. The finite-sample certificate therefore computes only the quantities needed by the bound: the graph/path calibration constant $\hat W_c$, the deployment-law proximity $\hat W_x^{[k]}$, the empirical deployment error $\hat\varepsilon_x^{[k]}$, and the resulting bound/pass rate. When path S is available in a controlled study, teacher-access quantities such as $\hat W_s^{[k]}$ may be reported as auxiliary diagnostics, but they are not required by the deployment certificate.

For graph $k$, let $\{(x_i^{[k]},s_i^{[k]},y_i^{[k]})\}_{i=1}^{n_k}$ be held-out paired samples, where $y_i^{[k]}$ denotes the training target associated with $s_i^{[k]}$. Define
\begin{equation}
z_{x,i}^{[k]}=E_x(x_i^{[k]}),
\qquad
z_{s,i}^{[k]}=E_s(s_i^{[k]}),
\end{equation}
the empirical latent laws
\begin{equation}
\hat\mu_x^{[k]}
=
\frac{1}{n_k}\sum_{i=1}^{n_k}\delta_{z_{x,i}^{[k]}},
\qquad
\hat\mu_s^{[k]}
=
\frac{1}{n_k}\sum_{i=1}^{n_k}\delta_{z_{s,i}^{[k]}},
\end{equation}
and two reference-law estimators
\begin{equation}
\hat\mu_\star^s
=
\frac{1}{\sum_k n_k}
\sum_{k=1}^{K}\sum_{i=1}^{n_k}\delta_{z_{s,i}^{[k]}},
\qquad
\hat\mu_\star^x
=
\frac{1}{\sum_k n_k}
\sum_{k=1}^{K}\sum_{i=1}^{n_k}\delta_{z_{x,i}^{[k]}}.
\end{equation}

\subsubsection*{Path Fidelity}
\begin{itemize}
  \item \textbf{Semantic reconstruction fidelity.} Define
  \begin{equation}
  \hat\varepsilon_s^{[k]}
  =
  \frac{1}{n_k}
  \sum_{i=1}^{n_k}
  \ell\!\big(D(z_{s,i}^{[k]},N_k),y_i^{[k]}\big).
  \end{equation}
  This checks whether $\Ps$ realizes the task on held-out samples.
  \item \textbf{Deployment reconstruction fidelity.} Define
  \begin{equation}
  \hat\varepsilon_x^{[k]}
  =
  \frac{1}{n_k}
  \sum_{i=1}^{n_k}
  \ell\!\big(D(z_{x,i}^{[k]},N_k),y_i^{[k]}\big).
  \end{equation}
  This is the held-out task error of the deployed pathway itself.
\end{itemize}

\subsubsection*{Latent-Law Match}
\begin{itemize}
  \item \textbf{Graph/path matching calibration} (corresponds to $\varepsilon_c$).
  For each reference graph,
  \begin{align}
  \hat W_c^{[k]}
  &:=
  W_2\!\big(\hat\mu_x^{[k]},\hat\mu_s^{[k]}\big),\\
  \hat W_c
  &:=
  \left(\sum_k \pi_k(\hat W_c^{[k]})^2\right)^{1/2}.
  \end{align}
  Small $\hat W_c$ means the deployment and teacher pathways match graph by
  graph under the same demand law; this is a graph/path calibration
  constant rather than a cross-graph transfer gap. A conservative
  deterministic version may use $\max_k\hat W_c^{[k]}$. In the theorem,
  the transfer gap is represented by $\hat W_x^{[k]}$.
  \item \textbf{Semantic-reference Wasserstein proximity.}
  \begin{equation}
  \hat W_s^{[k]}
  :=
  W_2\!\big(\hat\mu_s^{[k]},\hat\mu_\star^s\big).
  \end{equation}
  Small $\hat W_s^{[k]}$ means graph $k$ remains in the same semantic
  task family as the shared reference law.
  \item \textbf{Test-graph deployment proximity} (test-time verifiable, corresponds to $\varepsilon_x$).
  \begin{equation}
  \hat W_x^{[k]}
  :=
  W_2\!\big(\hat\mu_x^{[k]},\hat\mu_\star^x\big).
  \end{equation}
  Small $\hat W_x^{[k]}$ directly upper-bounds the contribution of
  graph $k$ to transfer error. This requires only $\hat\mu_x^{[k]}$
  and $\hat\mu_\star^x$, both from path X, so path S is not needed at
  test time.
\end{itemize}

\subsubsection*{Cross-Modal Alignment}
\begin{itemize}
  \item \textbf{Alignment error.}
  \begin{equation}
  \tau_a^{[k]}
  :=
  \frac{1}{n_k}
  \sum_{i=1}^{n_k}
  \|z_{x,i}^{[k]}-z_{s,i}^{[k]}\|_2.
  \end{equation}
  This gives a paired-sample notion of teacher/deployment consistency that is stronger than law matching alone.
\end{itemize}

The core deployment certificate implied by Theorem~\ref{thm:law_sufficient} is the bound itself. The auxiliary teacher-access and alignment quantities above are useful in controlled experiments, but the deployment-time certificate only needs $\hat W_c$, $\hat W_x^{[k]}$, and the constants $\hat\varepsilon_\star^s,\hat K$.

Beyond a binary pass/fail, the per-instance quantities $\hat W_x^{[k]}$ and $\hat W_s^{[k]}$ serve as targeted-expansion signals: instances that fail the certificate identify themselves as the right candidates for fine-tuning the deployment encoder, expanding the architecture's transfer envelope without full retraining. The empirical realization of this certificate-guided active-coverage loop is reported in \S\ref{sec:exp_powergrid}.

\begin{proposition}[Why path fidelity is meaningful]
\label{prop:pathverify}
For every graph $k$, using the per-graph loss-composition
$F_D^{[k]}(z)=\ell(D(z,N_k),y_k(z))$,
\begin{equation}
R_{\hat\mu_s^{[k]}}^{[k]}(D)=\hat\varepsilon_s^{[k]}.
\end{equation}
The quantity $R_{\hat\mu_x^{[k]}}^{[k]}(D)$ denotes the latent-law
deployment risk associated with the graph-$k$ target map $y_k$.
If $F_D^{[k]}$ is $K$-Lipschitz on the latent region supporting
$\hat\mu_s^{[k]}$ and $\hat\mu_\star^s$, then
\begin{equation}
\big|R_{\hat\mu_s^{[k]}}^{[k]}(D)-R_{\hat\mu_\star^s}^{[k]}(D)\big|
\le
K \hat W_s^{[k]}.
\end{equation}
Consequently,
\begin{equation}
R_{\hat\mu_\star^s}^{[k]}(D)
\le
\hat\varepsilon_s^{[k]} + K \hat W_s^{[k]}.
\label{eq:verify_ref_bound}
\end{equation}
\end{proposition}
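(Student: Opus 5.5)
The plan has three steps: prove the identity, prove a Kantorovich--Rubinstein type stability estimate, and combine them.

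\textbf{Step 1: the identity.} Unfold the definition of $R_\mu^{[k]}(D)$ with $\mu=\hat\mu_s^{[k]}$. Since $\hat\mu_s^{[k]}$ is the uniform empirical measure on the atoms $z_{s,i}^{[k]}$, we get
\[
R_{\hat\mu_s^{[k]}}^{[k]}(D)=\frac{1}{n_k}\sum_{i=1}^{n_k}\ell\big(D(z_{s,i}^{[k]},N_k),\,y_k(z_{s,i}^{[k]})\big).
\]
We then need $y_k(z_{s,i}^{[k]})=y_i^{[k]}$. By definition $y_k(z)=T^{[k]}((E_s^{[k]})^{-1}(z))$, and $z_{s,i}^{[k]}=E_s^{[k]}(s_i^{[k]})$. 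Because $E_s^{[k]}$ is a homeomorphism onto $U$ (Definition~\ref{def:compat}), this gives $y_k(z_{s,i}^{[k]})=T^{[k]}(s_i^{[k]})$, which is the held-out target $y_i^{[k]}$. This identification is where the reading of $y_i^{[k]}$ as $T^{[k]}(s_i^{[k]})$ must be made explicit. The sum then equals $\hat\varepsilon_s^{[k]}$.

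\textbf{Step 2: the Lipschitz stability.} For any two probability measures $\mu,\nu$ supported in the region where $F_D^{[k]}$ is $K$-Lipschitz, take an optimal $W_2$ coupling $\gamma$; it exists because both measures are finitely supported, so this is a finite transport problem. Then
\[
|R_\mu^{[k]}-R_\nu^{[k]}|=\Big|\int (F_D^{[k]}(z)-F_D^{[k]}(z'))\,d\gamma\Big|\le K\int\|z-z'\|\,d\gamma\le K\Big(\int\|z-z'\|^2d\gamma\Big)^{1/2}=K\,W_2(\mu,\nu),
\]
using Jensen/Cauchy--Schwarz. This is presumably the content of the risk-stability lemma (Lemma~\ref{lem:risk_stability}) referenced in the figure, and we may cite it directly.

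One technical point needs care: $F_D^{[k]}$ involves $y_k$, which is only defined on $U$. The reference law $\hat\mu_\star^s$ pools codes $z_{s,i}^{[j]}=E_s^{[j]}(s_i^{[j]})$ from other graphs $j$. These lie in $U$ because each $E_s^{[j]}$ maps onto $U$ by Definition~\ref{def:compat}. Hence $F_D^{[k]}$ is defined on the supports of both $\hat\mu_s^{[k]}$ and $\hat\mu_\star^s$, and the hypothesis is that it is $K$-Lipschitz there. Applying Step 2 with $\mu=\hat\mu_s^{[k]}$ and $\nu=\hat\mu_\star^s$ gives the second display, with $\hat W_s^{[k]}$ as defined.

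\textbf{Step 3: combining.} Write $R_{\hat\mu_\star^s}^{[k]}\le R_{\hat\mu_s^{[k]}}^{[k]}+|R_{\hat\mu_s^{[k]}}^{[k]}-R_{\hat\mu_\star^s}^{[k]}|$. The first term equals $\hat\varepsilon_s^{[k]}$ by Step 1 and the second is at most $K\hat W_s^{[k]}$ by Step 2, which yields \eqref{eq:verify_ref_bound}.

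The statement about $R_{\hat\mu_x^{[k]}}^{[k]}(D)$ is only a definition, so nothing needs to be proved there. The main obstacle is not analytic; it is making precise the well-definedness point that pooled semantic codes from other graphs lie in $U$, where $y_k$ and the Lipschitz hypothesis apply.
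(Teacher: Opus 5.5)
Your proposal is correct and follows the paper's own proof. The identity comes atom-by-atom from $y_k(z_{s,i}^{[k]})=T^{[k]}(s_i^{[k]})=y_i^{[k]}$, the Wasserstein estimate is the coupling/Kantorovich--Rubinstein bound $K W_1\le K W_2$ (Lemma~\ref{lem:risk_stability}) applied to $\hat\mu_s^{[k]}$ and $\hat\mu_\star^s$, and the final inequality is the triangle inequality. Your extra check that the pooled semantic codes lie in $U$, so that $y_k$ and $F_D^{[k]}$ are defined on the support of $\hat\mu_\star^s$, is a point the paper leaves implicit in its Lipschitz hypothesis.
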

\noindent\emph{Proof: see Appendix~\ref{app:verify_proofs}.}

\begin{proposition}[Why latent-law proximity is meaningful]
\label{prop:lawverify}
Assume $F_D^{[k]}$ is $K$-Lipschitz on the latent region supporting
$\hat\mu_x^{[k]}$, $\hat\mu_\star^x$, and $\hat\mu_\star^s$.
Then, for every graph $k$,
\begin{equation}
R_{\hat\mu_x^{[k]}}^{[k]}(D)
\le
\hat\varepsilon_s^{[k]} + K(\hat W_s^{[k]}+\hat W_c + \hat W_x^{[k]}).
\label{eq:verify_dep_bound}
\end{equation}
\end{proposition}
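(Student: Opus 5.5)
The plan is a chain of Wasserstein risk-stability hops ending at the bound of Proposition~\ref{prop:pathverify}. The basic tool is the Kantorovich--Rubinstein argument: if $F_D^{[k]}$ is $K$-Lipschitz on a region containing the supports of two laws $\mu,\nu$, then
\[
|R_\mu^{[k]}(D)-R_\nu^{[k]}(D)| = \Bigl|\int F_D^{[k]}\,d(\mu-\nu)\Bigr| \le K\,W_1(\mu,\nu)\le K\,W_2(\mu,\nu).
\]
The last inequality is Jensen applied to any coupling. I would state this once as the risk-stability lemma (Lemma~\ref{lem:risk_stability}) and use it for each hop.

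The key steps, in order:
\begin{enumerate}
  \item \textbf{Split through the deployment reference.} Apply the lemma to get
  \[
  R_{\hat\mu_x^{[k]}}^{[k]}(D)\le R_{\hat\mu_\star^x}^{[k]}(D)+K\hat W_x^{[k]}.
  \]
  This is legitimate because both supports lie in the assumed Lipschitz region.
  \item \textbf{Pass from the deployment reference to the semantic reference.} Apply the lemma again to get
  \[
  R_{\hat\mu_\star^x}^{[k]}(D)\le R_{\hat\mu_\star^s}^{[k]}(D)+K\,W_2(\hat\mu_\star^x,\hat\mu_\star^s).
  \]
  Then bound $W_2(\hat\mu_\star^x,\hat\mu_\star^s)\le\hat W_c$ (justified below).
  \item \textbf{Reach the computable fidelity.} Invoke \eqref{eq:verify_ref_bound} of Proposition~\ref{prop:pathverify}:
  \[
  R_{\hat\mu_\star^s}^{[k]}(D)\le\hat\varepsilon_s^{[k]}+K\hat W_s^{[k]}.
  \]
  That proposition requires Lipschitzness on the supports of $\hat\mu_s^{[k]}$ and $\hat\mu_\star^s$. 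I would read the stated region as containing $\hat\mu_s^{[k]}$, since its atoms are among those of $\hat\mu_\star^s$.
  \item \textbf{Sum.} Adding the three inequalities gives \eqref{eq:verify_dep_bound}.
\end{enumerate}

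The main obstacle is the mixture inequality $W_2(\hat\mu_\star^x,\hat\mu_\star^s)\le\hat W_c$. Both pooled empirical laws are mixtures with the same weights $\pi_k=n_k/\sum_j n_j$: $\hat\mu_\star^x=\sum_k\pi_k\hat\mu_x^{[k]}$ and $\hat\mu_\star^s=\sum_k\pi_k\hat\mu_s^{[k]}$.

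I would take an optimal coupling $\gamma_k$ of $(\hat\mu_x^{[k]},\hat\mu_s^{[k]})$ for each $k$ and form $\gamma=\sum_k\pi_k\gamma_k$. This $\gamma$ is a coupling of the two pooled laws, and its quadratic cost is $\sum_k\pi_k(\hat W_c^{[k]})^2=\hat W_c^2$. This is the joint convexity of $W_2^2$, and it needs the weights $\pi_k$ in $\hat W_c$ to match the pooling weights. I would note this explicitly, since it is the same claim asserted in condition~2 of Theorem~\ref{thm:law_sufficient}.

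A second subtlety is interpretive. Applying the risk at $y_k$ to pooled laws requires $y_k$ to be defined on all of their support. Definition~\ref{def:compat} guarantees this, since $y_k$ is defined on all of $U$ and all encoders map into $U$.
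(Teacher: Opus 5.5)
Your proposal is correct and follows the paper's proof step for step: two applications of Lemma~\ref{lem:risk_stability}, with the second hop closed by the mixture-coupling bound $W_2(\hat\mu_\star^x,\hat\mu_\star^s)\le\hat W_c$ taken from Theorem~\ref{thm:app_sufficient}, followed by Proposition~\ref{prop:pathverify} and summation. One small caveat: your argument that the Lipschitz region covers $\hat\mu_s^{[k]}$ because its atoms lie among those of $\hat\mu_\star^s$ only works when $k$ is one of the pooled reference graphs, so for an unseen test graph (where the protocol also computes $\hat W_s^{[k]}$) you must instead take the region to be a common set such as $U$ containing all these supports, a point the paper's proof leaves implicit.
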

\noindent\emph{Proof: see Appendix~\ref{app:verify_proofs}.}

\begin{proposition}[Why alignment error controls deployment error]
\label{prop:alignverify}
Assume the decoder $D(\cdot,N_k)$ is $L_D$-Lipschitz on the latent region of graph $k$, and assume the loss is $L_\ell$-Lipschitz in its first argument, i.e.,
\begin{equation}
\big|\ell(\hat y,y)-\ell(\hat y',y)\big|
\le
L_\ell \|\hat y-\hat y'\|
\qquad
\forall \hat y,\hat y',y.
\end{equation}
Then
\begin{equation}
\hat\varepsilon_x^{[k]}
\le
\hat\varepsilon_s^{[k]} + L_\ell L_D \tau_a^{[k]}.
\label{eq:verify_align_bound}
\end{equation}
\end{proposition}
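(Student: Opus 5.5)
The bound is a pointwise, sample-by-sample comparison that is then averaged; no Wasserstein machinery or identifiability is needed, because the samples $z_{x,i}^{[k]}$ and $z_{s,i}^{[k]}$ are \emph{paired} through the same index $i$. We start from the definitions
\[
\hat\varepsilon_x^{[k]}-\hat\varepsilon_s^{[k]}
=\frac{1}{n_k}\sum_{i=1}^{n_k}\Big[\ell\big(D(z_{x,i}^{[k]},N_k),y_i^{[k]}\big)-\ell\big(D(z_{s,i}^{[k]},N_k),y_i^{[k]}\big)\Big].
\]
Both terms inside the bracket share the same target $y_i^{[k]}$. This is exactly the configuration in which the assumed first-argument Lipschitz property of $\ell$ applies, with $\hat y=D(z_{x,i}^{[k]},N_k)$ and $\hat y'=D(z_{s,i}^{[k]},N_k)$.

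For each $i$ we will bound the bracket by its absolute value and apply the two Lipschitz hypotheses in sequence:
\[
\big|\ell(\hat y,y_i^{[k]})-\ell(\hat y',y_i^{[k]})\big|
\le L_\ell\big\|D(z_{x,i}^{[k]},N_k)-D(z_{s,i}^{[k]},N_k)\big\|
\le L_\ell L_D\big\|z_{x,i}^{[k]}-z_{s,i}^{[k]}\big\|_2 .
\]
The second inequality uses that $D(\cdot,N_k)$ is $L_D$-Lipschitz on the latent region of graph $k$. Summing over $i$ and dividing by $n_k$ turns the right-hand side into $L_\ell L_D\,\tau_a^{[k]}$ by the definition of the alignment error. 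Rearranging gives $\hat\varepsilon_x^{[k]}\le\hat\varepsilon_s^{[k]}+L_\ell L_D\tau_a^{[k]}$. In fact the argument yields the two-sided estimate $|\hat\varepsilon_x^{[k]}-\hat\varepsilon_s^{[k]}|\le L_\ell L_D\tau_a^{[k]}$, which we may note in passing.

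The only point needing care, and the closest thing to an obstacle, is the domain of the decoder's Lipschitz constant. We must make sure that both $z_{x,i}^{[k]}$ and $z_{s,i}^{[k]}$ lie in the region on which $D(\cdot,N_k)$ is assumed $L_D$-Lipschitz. If that region is not convex, the Lipschitz bound is still applied directly to the pair of points; it does not rely on a segment between them. It therefore suffices that both codes belong to the stated ``latent region of graph $k$''. We will read that phrase as the set containing all held-out teacher and deployment codes for graph $k$, consistent with Theorem~\ref{thm:transferbound}, where $E_x^{[k]}$ and $E_s^{[k]}$ both map into $U$. We will also remark that the norm on $\mathcal{Y}^{[k]}$ in the $\ell$-Lipschitz hypothesis is the same norm in which $L_D$ is measured; for classification this is the logit norm, as stipulated in Section~\ref{sec:problem_notation}.
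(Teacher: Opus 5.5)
Your proposal is correct and is essentially the paper's proof: you chain the first-argument Lipschitz bound on $\ell$ with the $L_D$-Lipschitz bound on $D(\cdot,N_k)$ for each paired sample, then average over $i$ to obtain $L_\ell L_D\tau_a^{[k]}$. Your remarks on the two-sided estimate, and on requiring both paired codes to lie in the region where the decoder is Lipschitz, are accurate refinements that the paper leaves implicit.
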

\noindent\emph{Proof: see Appendix~\ref{app:verify_proofs}.}

\begin{theorem}[What Algorithm~\ref{alg:verify} certifies]
\label{thm:verifycert}
Assume $F_D^{[k]}$ is $K$-Lipschitz and that
$\hat\varepsilon_\star^s$ upper-bounds
$R_{\hat\mu_\star^s}^{[k]}(D)$ on the evaluated graphs. Let
$\widehat{p}_\delta$, $\hat\tau_\alpha$, and $\hat B_\alpha$ be the
outputs of Algorithm~\ref{alg:verify}. Then:
\begin{enumerate}
  \item \textbf{Empirical success rate.}
  $\widehat{p}_\delta$ is the exact fraction of test graphs on which
  the deployment error satisfies $\hat\varepsilon_x^{[k]}\le\delta$.
  \item \textbf{Quantile transfer bound.}
  For any graph $k$ with $\hat W_x^{[k]}\le\hat\tau_\alpha$,
  \begin{equation}
	  R_{\hat\mu_x^{[k]}}^{[k]}(D)
	  \le \hat B_\alpha
	  :=
	  \hat\varepsilon_\star^s+\hat K(\hat W_c+\hat\tau_\alpha).
  \end{equation}
  By Corollary~\ref{cor:prob_transfer}, a fraction $\ge\alpha$ of
  test graphs satisfy this bound ($\hat W_c$ is a graph/path calibration
  constant). This probabilistic statement deliberately avoids treating
  the finite test-set maximum as a future uniform bound.
\end{enumerate}
\end{theorem}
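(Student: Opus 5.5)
The proof splits along the two items of the statement, and most of the work is in the second.

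Item 1 is definitional. By construction in Algorithm~\ref{alg:verify}, $\widehat{p}_\delta$ is the average over $k\in\mathcal{D}_\mathrm{test}$ of the indicator $\mathbf{1}[\hat\varepsilon_x^{[k]}\le\delta]$, as in Remark~\ref{rem:empirical_rate}. I will unfold that definition and note that $\hat\varepsilon_x^{[k]}$ is exactly the held-out deployment error of path X defined under Path Fidelity. Nothing probabilistic is claimed here, so item 1 is an identity.

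For item 2, I will chain Wasserstein risk stability twice, following the proof of Theorem~\ref{thm:law_sufficient} but with empirical laws. The key tool is the Kantorovich--Rubinstein bound: for a $K$-Lipschitz $F_D^{[k]}$,
\[
|R_\mu^{[k]}(D)-R_\nu^{[k]}(D)| \le K\,W_1(\mu,\nu) \le K\,W_2(\mu,\nu).
\]
This holds for any probability laws $\mu,\nu$ supported on the region where Lipschitzness holds, in particular for the finite empirical measures $\hat\mu_x^{[k]}$, $\hat\mu_\star^x$ and $\hat\mu_\star^s$. The plan is:
\begin{enumerate}
\item Write $R_{\hat\mu_x^{[k]}}^{[k]} \le R_{\hat\mu_\star^s}^{[k]} + K\,W_2(\hat\mu_x^{[k]},\hat\mu_\star^s)$.
\item Bound $W_2(\hat\mu_x^{[k]},\hat\mu_\star^s)\le \hat W_x^{[k]} + W_2(\hat\mu_\star^x,\hat\mu_\star^s)$ by the triangle inequality.
\item Bound the pooled calibration term $W_2(\hat\mu_\star^x,\hat\mu_\star^s)\le \hat W_c$.
\end{enumerate}
For step 3, both pooled laws are mixtures $\sum_k\pi_k\hat\mu^{[k]}$ with common weights $\pi_k=n_k/\sum_j n_j$. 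I will glue per-graph optimal couplings of $(\hat\mu_x^{[k]},\hat\mu_s^{[k]})$ into a single coupling of the mixtures; this gives $W_2^2(\hat\mu_\star^x,\hat\mu_\star^s)\le\sum_k\pi_k(\hat W_c^{[k]})^2$, which is the squared RMS calibration constant. After that, $\hat W_x^{[k]}\le\hat\tau_\alpha$ and the hypothesis $R_{\hat\mu_\star^s}^{[k]}\le\hat\varepsilon_\star^s$ yield the bound $\hat B_\alpha$. The frequency claim then follows as in Corollary~\ref{cor:prob_transfer}: $\hat\tau_\alpha$ is the empirical $\alpha$-quantile of $\{\hat W_x^{[k]}\}$, so at least a fraction $\alpha$ of test graphs satisfy $\hat W_x^{[k]}\le\hat\tau_\alpha$, and each of them satisfies the bound.

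The main obstacle is the constant bookkeeping, which has two parts.
\begin{enumerate}
\item The bound is stated with $\hat K$, while the hypothesis gives the true $K$. I will read $\hat K$ as a certified upper bound, $\hat K\ge K$, and otherwise state the bound with $K$.
\item The mixture weights used in $\hat W_c$ must match those that form $\hat\mu_\star^x$ and $\hat\mu_\star^s$. If they differ, I will fall back on the conservative choice $\max_k\hat W_c^{[k]}$, which bounds the pooled distance for any common weights.
\end{enumerate}
Finally, I will stress that the "fraction $\ge\alpha$" is a statement about the empirical test distribution, not about future graphs.
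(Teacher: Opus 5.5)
Your proposal is correct and matches the paper's proof: item 1 is definitional, and item 2 combines the mixture-coupling bound $W_2(\hat\mu_\star^x,\hat\mu_\star^s)\le\hat W_c$, Kantorovich--Rubinstein risk stability, the hypothesis $R_{\hat\mu_\star^s}^{[k]}(D)\le\hat\varepsilon_\star^s$, and the definition of the empirical $\alpha$-quantile. The differences are minor: you apply risk stability once and pass through $\hat\mu_\star^x$ via the $W_2$ triangle inequality, where the paper invokes Lemma~\ref{lem:risk_stability} twice, and your explicit requirement $\hat K\ge K$ makes precise a substitution of $\hat K$ for $K$ that the paper makes without comment.
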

\noindent\emph{Proof: see Appendix~\ref{app:verify_proofs}.}

Algorithm~\ref{alg:verify} implements this protocol.

\begin{algorithm}
\caption{Probabilistic Bound Verification}
\label{alg:verify}
\begin{algorithmic}[1]
\Require Reference paired latent sets $\{(Z_s^{[k]},Z_x^{[k]})\}_{k\in\mathcal{D}_\mathrm{ref}}$,
         deployment test sets $\{(Z_x^{[k]},Y^{[k]})\}_{k=1}^{K_\text{test}}$,
         training reference laws $\hat\mu_\star^s$ and $\hat\mu_\star^x$,
         semantic-fidelity baseline $\hat\varepsilon_\star^s$,
         Lipschitz level $\hat K$,
         error tolerance $\delta$,
         confidence level $\alpha\in(0,1)$
\State \textbf{Graph/path matching calibration}:
       compute $\hat W_c^{[k]} \leftarrow W_2(\hat\mu_x^{[k]},\hat\mu_s^{[k]})$
       on each reference graph and set
       $\hat W_c \leftarrow (\sum_k\pi_k(\hat W_c^{[k]})^2)^{1/2}$
\State Initialize per-graph record $\mathcal{R}\leftarrow\{\}$
\For{each graph $k$}
  \State Compute $\hat\varepsilon_x^{[k]}$ and
         $\hat W_x^{[k]}{:=}W_2(\hat\mu_x^{[k]},\hat\mu_\star^x)$
  \State Record $\mathcal{R}[k] \leftarrow
         \bigl(\hat\varepsilon_x^{[k]},\,\hat W_x^{[k]}\bigr)$
\EndFor
\State \textbf{Empirical success rate:}
       $\;\widehat{p}_\delta \leftarrow
       \frac{1}{K_\text{test}}
       \sum_{k} \mathbf{1}\!\left[\hat\varepsilon_x^{[k]}\le\delta\right]$
\State \textbf{Deployment transfer-gap quantile:}
       $\;\hat\tau_\alpha \leftarrow
       \alpha\text{-quantile of }
       \bigl\{\hat W_x^{[k]}\bigr\}_{k=1}^{K_\text{test}}$
\State \textbf{Quantile bound:}
       $\;\hat B_\alpha \leftarrow \hat\varepsilon_\star^s+\hat K(\hat W_c+\hat\tau_\alpha)$
\State \textbf{Empirical bound pass rate:}
       $\;\widehat{p}_{\mathrm{bd}}\leftarrow
       \frac{1}{K_\text{test}}\sum_k
       \mathbf{1}\!\left[\hat\varepsilon_x^{[k]}
       \le \hat\varepsilon_\star^s+\hat K(\hat W_c+\hat W_x^{[k]})\right]$
\State \Return $\widehat{p}_\delta,\;\hat\tau_\alpha,\;\hat W_c,\;
         \hat B_\alpha,\;\widehat{p}_{\mathrm{bd}}$
\end{algorithmic}
\end{algorithm}

Algorithm~\ref{alg:verify} returns the quantities used in the reported
bound and pass rate. In controlled studies where path S is available,
the same framework can additionally report teacher-access quantities
such as $\hat W_s^{[k]}$, but they are not part of the deployment-only
certificate.

\section{Experiments}
\label{sec:exp}

The empirical study has two parts. Section~\ref{sec:exp_graphon} presents a controlled simulation on graphon-generated graphs where the ground-truth structure of the transfer problem is known: it validates the sufficient-condition bound of Theorem~\ref{thm:law_sufficient} and characterizes how the certificate quantities vary as the test family shifts away from the training distribution. Section~\ref{sec:exp_powergrid} instantiates the framework on power-system state estimation across varying network topologies, where the graphon-level analysis of Section~\ref{sec:exp_graphon} informs our interpretation of the empirical transfer behavior.

\subsection{Controlled Graphon Simulation: Bound Validation}
\label{sec:exp_graphon}

\subsubsection{Setup}
We construct a controlled transfer problem using a spatial RBF graphon
\begin{equation}
P(\text{edge}\;i,j)
=
\exp\!\left(-\frac{\|u_i - u_j\|^2}{2\sigma^2}\right),
\qquad
u_i \sim \mathrm{Unif}([0,1]^2),
\end{equation}
with $\sigma = 0.16$. Signals are smooth spatial fields obtained by diffusing a random source over the sampled graph for five steps with spatial coupling $\gamma = 0.72$; they represent a clean synthetic analogue of physical fields on irregular networks. The teacher path $\Ps$ observes the full diffused field on all $N$ nodes; the deployment path $\Px$ observes only a random $30\%$ mask. Both paths encode into a shared $d=32$-dimensional latent space; the decoder uses $h=128$ hidden units and $K=4$ Chebyshev orders with $K_\text{eig}=2$ Laplacian eigenvectors as positional descriptors.

We train on $5{,}000$ graphs drawn from the \textbf{spatial} family with sizes $N\in\{40,80,160,250\}$ for 100 epochs (Adam, $\text{lr}=10^{-3}$, batch size 64). We then evaluate on 50 held-out test graphs per family across sizes $N\in\{40,80,160,250,400\}$ for three test families of increasing structural distance from the training distribution:
\begin{itemize}
  \item \textbf{spatial}: same graphon law and $\sigma$ as training (in-family);
  \item \textbf{spatial\_wide}: same spatial graphon with wider connectivity ($\sigma$ increased), yielding smoother graphs;
  \item \textbf{sbm\_spatial}: a community-spatial hybrid where community structure modulates the spatial affinity, representing a qualitatively different generative mechanism.
\end{itemize}
All test graphs are completely unseen during training.

\noindent\textbf{Baselines.}
We compare against five graph neural network baselines implemented via PyTorch Geometric: GCN~\citep{kipf2017semi}, GAT~\citep{velickovic2018graph}, GraphSAGE~\citep{hamilton2017inductive}, ChebNet~\citep{defferrard2016convolutional}, and Graph Transformer~\citep{shi2021masked}. We additionally include a coordinate-only MLP that takes node positional descriptors as input without message passing. All baselines are trained and evaluated on the same graph families and sizes using identical train/test splits.

\subsubsection{Certificate Computation}
We apply Algorithm~\ref{alg:verify} to all $150$ test graphs. The semantic reference law $\hat\mu_\star^s$ and the deployment reference law $\hat\mu_\star^x$ are pooled from 16 encoded snapshots per training graph. The graph/path matching calibration is computed graph by graph,
\begin{equation}
\hat W_c^{[k]} := W_2(\hat\mu_x^{[k]},\hat\mu_s^{[k]}),
\qquad
\hat W_c := \left(\sum_k\pi_k(\hat W_c^{[k]})^2\right)^{1/2}.
\end{equation}
For each test graph $k$ we compute the per-graph certificate quantities: deployment risk $\hat\varepsilon_x^{[k]}$, teacher-path risk $\hat\varepsilon_s^{[k]}$, deployment proximity $\hat W_x^{[k]} = W_2(\hat\mu_x^{[k]},\hat\mu_\star^x)$, semantic drift diagnostic $\hat W_s^{[k]} = W_2(\hat\mu_s^{[k]},\hat\mu_\star^s)$, and paired alignment error $\tau_a^{[k]}$. All $W_2$ distances are estimated by sliced Wasserstein with 128 random projections. The deployment-only certificate implied by Theorem~\ref{thm:law_sufficient} uses $\hat W_c+\hat W_x^{[k]}$. Because this controlled graphon study has access to path S on test graphs, we also report the teacher-access validation bound obtained by replacing the training constant $\hat W_c$ with the observed semantic drift $\hat W_s^{[k]}$; this is the bound visualised in Figures~\ref{fig:bound_validation}--\ref{fig:risk_vs_size}. Its Lipschitz constant is estimated empirically as the global $95$th percentile of the per-graph required constant
\begin{equation}
\hat K^{[k]}
:=
\max\!\left(0,\,
\frac{\hat\varepsilon_x^{[k]} - \hat\varepsilon_\star^s}
     {\hat W_s^{[k]} + \hat W_x^{[k]}}
\right),
\end{equation}
giving $\hat K_{95} = 1.10$ pooled across all $150$ test graphs and sizes. A test graph \emph{passes} the teacher-access validation bound if $\hat\varepsilon_x^{[k]} \le \hat\varepsilon_\star^s + \hat K_{95}(\hat W_s^{[k]} + \hat W_x^{[k]})$.

\subsubsection{Results}
Table~\ref{tab:graphon_cert} reports per-family certificate statistics averaged over all test graphs and sizes. Figure~\ref{fig:graphon_benchmarks} summarizes the corresponding reconstruction performance against GCN~\citep{kipf2017semi}, GAT~\citep{velickovic2018graph}, GraphSAGE~\citep{hamilton2017inductive}, ChebNet~\citep{defferrard2016convolutional}, Graph Transformer~\citep{shi2021masked}, and a coordinate MLP baseline, while Figures~\ref{fig:graphon_certificate}--\ref{fig:risk_vs_size} visualise the certificate quantities and underlying per-graph data. Three results stand out.

\emph{The validation bound is non-vacuous and holds at high rates.}
With $\hat K_{95} = 1.10$ and $\hat\varepsilon_\star^s = 0.0171$, the per-family teacher-access validation bound evaluates to $0.032$ (spatial), $0.033$ (spatial\_wide), and $0.038$ (sbm\_spatial), against mean deployment risks of $0.0144$, $0.0084$, and $0.0112$ respectively. Pass rates are $87\%$, $100\%$, and $98\%$ for $N\ge80$, and all families reach $100\%$ pass rate for $N\ge160$. The failures are concentrated at the smallest size ($N=80$), where the graph is too sparse for the privileged signal to diffuse faithfully.

\emph{Wasserstein distances predict bound tightness.}
The spatial\_wide family achieves the lowest mean $R_k$ ($0.0084$) and the highest pass rate ($100\%$) because wider connectivity yields smoother latent codes that remain close to the deployment reference: $\hat W_x = 0.0052$, comparable to the in-family $0.0051$. The sbm\_spatial family, despite a qualitatively different generative mechanism, maintains $\hat W_s = 0.0129$ and $\hat W_x = 0.0057$, close enough to the reference laws to achieve $98\%$ pass rate. This confirms the theoretical claim of Theorem~\ref{thm:law_sufficient}: what matters for transfer is latent-law proximity to the learned reference distributions, not the structural identity of the generating process.

\emph{Risk decreases monotonically with graph size; certificate distances are size-stable.}
For the spatial family, mean $R_k$ decreases rapidly with $N$, while $\hat W_s$ and $\hat W_x$ vary only mildly across sizes. This decoupling is structurally expected: $R_k$ depends on the quality of the sparse reconstruction, which improves as the graph provides more averaging, whereas the Wasserstein distances are distribution-level quantities insensitive to individual-graph noise.

\begin{table*}[t]
\centering
\caption{Transfer certificate statistics per test family ($50$ graphs each, sizes $N\in\{80,160,250,400,600,800,1000\}$). $\hat\varepsilon_\star^s = 0.0171$ and $\hat K_{95} = 1.10$ are global constants estimated from training data. The \emph{Bound} column reports the controlled teacher-access validation bound $\hat\varepsilon_\star^s + \hat K_{95}(\bar{\hat W}_s + \bar{\hat W}_x)$; the deployment-only theorem uses the same form with $\bar{\hat W}_s$ replaced by the training constant $\hat W_c$. A graph passes if its individual deployment risk lies below its individual validation bound.}
\label{tab:graphon_cert}
\small
\setlength{\tabcolsep}{5pt}
\begin{tabular}{lccccccc}
\toprule
\textbf{Family} & $\bar{R}_k$ & $\bar{\hat W}_s$ & $\bar{\hat W}_x$ & $\bar\tau_a$ & \textbf{Bound} & \textbf{Pass} ($N{\ge}80$) & \textbf{Pass} ($N{\ge}160$) \\
\midrule
spatial      & 0.0144 & 0.0085 & 0.0051 & 0.036 & 0.032 &  87\% & 100\% \\
spatial\_wide & 0.0084 & 0.0093 & 0.0052 & 0.036 & 0.033 & 100\% & 100\% \\
sbm\_spatial  & 0.0112 & 0.0129 & 0.0057 & 0.041 & 0.038 &  98\% & 100\% \\
\bottomrule
\end{tabular}
\end{table*}

\begin{figure*}[t]
  \centering
  \begin{minipage}[t]{0.32\linewidth}
    \centering
    \includegraphics[width=\linewidth]{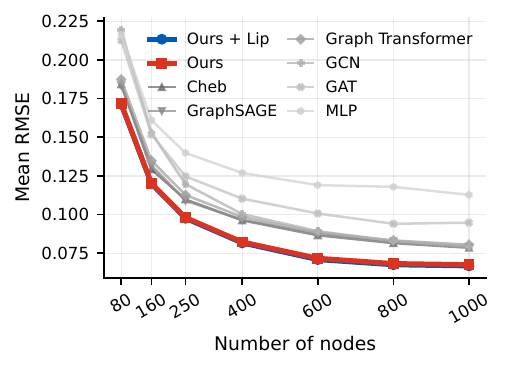}\\[-1mm]
    \textbf{(a)} Size-wise RMSE
  \end{minipage}\hfill
  \begin{minipage}[t]{0.32\linewidth}
    \centering
    \includegraphics[width=\linewidth]{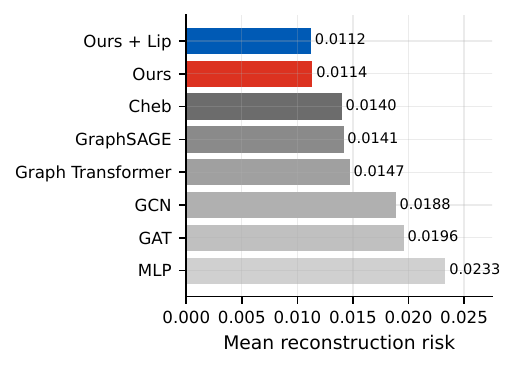}\\[-1mm]
    \textbf{(b)} Overall ranking
  \end{minipage}\hfill
  \begin{minipage}[t]{0.32\linewidth}
    \centering
    \includegraphics[width=\linewidth]{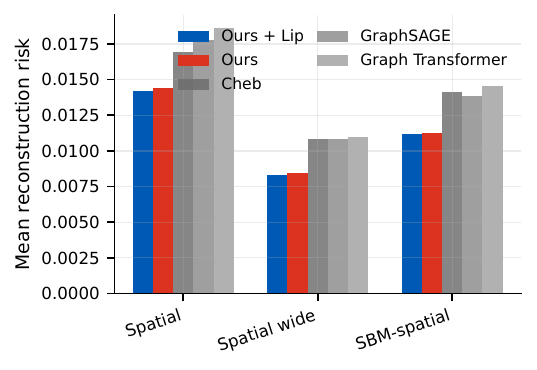}\\[-1mm]
    \textbf{(c)} Family-wise risk
  \end{minipage}
  \caption{Graphon benchmark reconstruction performance against GCN~\citep{kipf2017semi}, GAT~\citep{velickovic2018graph}, GraphSAGE~\citep{hamilton2017inductive}, ChebNet~\citep{defferrard2016convolutional}, Graph Transformer~\citep{shi2021masked}, and a coordinate MLP baseline.
  \textbf{(a)} Mean RMSE decreases with graph size, with the proposed
  transfer model matching or improving on the Lipschitz-regularized variant
  across the tested node counts.
  \textbf{(b)} Overall reconstruction risk ranks the proposed transfer models
  ahead of graph neural and coordinate-only baselines.
  \textbf{(c)} Family-wise reconstruction risk shows the same advantage on
  the spatial, spatial\_wide, and sbm\_spatial test families used in
  Table~\ref{tab:graphon_cert}.}
  \label{fig:graphon_benchmarks}
\end{figure*}

\begin{figure}[t]
  \centering
  \includegraphics[width=0.5\linewidth]{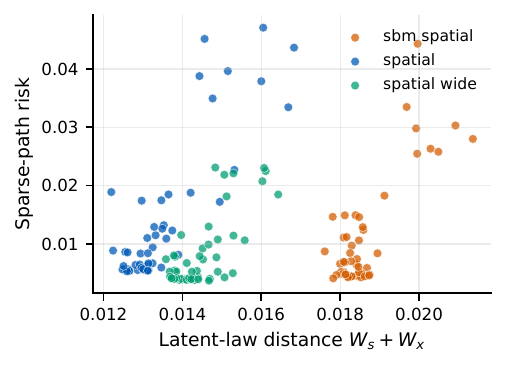}
  \caption{Graphon certificate scatter for the proposed model.
  Each point is one held-out graph; the horizontal axis is the latent-law
  distance $\hat W_s+\hat W_x$ and the vertical axis is the sparse-path
  deployment risk. The family separation mirrors Table~\ref{tab:graphon_cert}
  and shows that the shifted sbm\_spatial family remains within the
  low-risk certificate regime.}
  \label{fig:graphon_certificate}
\end{figure}

\subsubsection{Bound Analysis}
The key quantity governing tightness is $K$. We estimate $K$ in two complementary ways. First, the empirical required-$K$ estimator above gives $\hat K_{95} = 1.10$, a pure output-space quantity requiring no internal network access. Second, following Lemma~\ref{lem:primitive_lipschitz}, we estimate the loss-composition Lipschitz constant $K = L_\ell(L_D + L_y)$ from neural Jacobians: for each test snapshot we compute the spectral norm of the decoder Jacobian $\partial D / \partial z$ at $z = E_x(x)$. The resulting empirical $95$th-percentile loss-composition constant is $\hat K^\text{Jac}_{95} = 0.83$, tighter than $\hat K_{95} = 1.10$ but substantially harder to estimate in closed form. The Jacobian-based estimate suggests the actual $K$ is well below $1.10$; the excess arises because $\hat K_{95}$ absorbs variance across graph sizes and random seeds rather than worst-case operator norms. Both estimates confirm the bound is non-vacuous, and the Jacobian estimate points toward future tightening by explicit Lipschitz regularization of the decoder.

\begin{figure}[t]
  \centering
  \includegraphics[width=\linewidth]{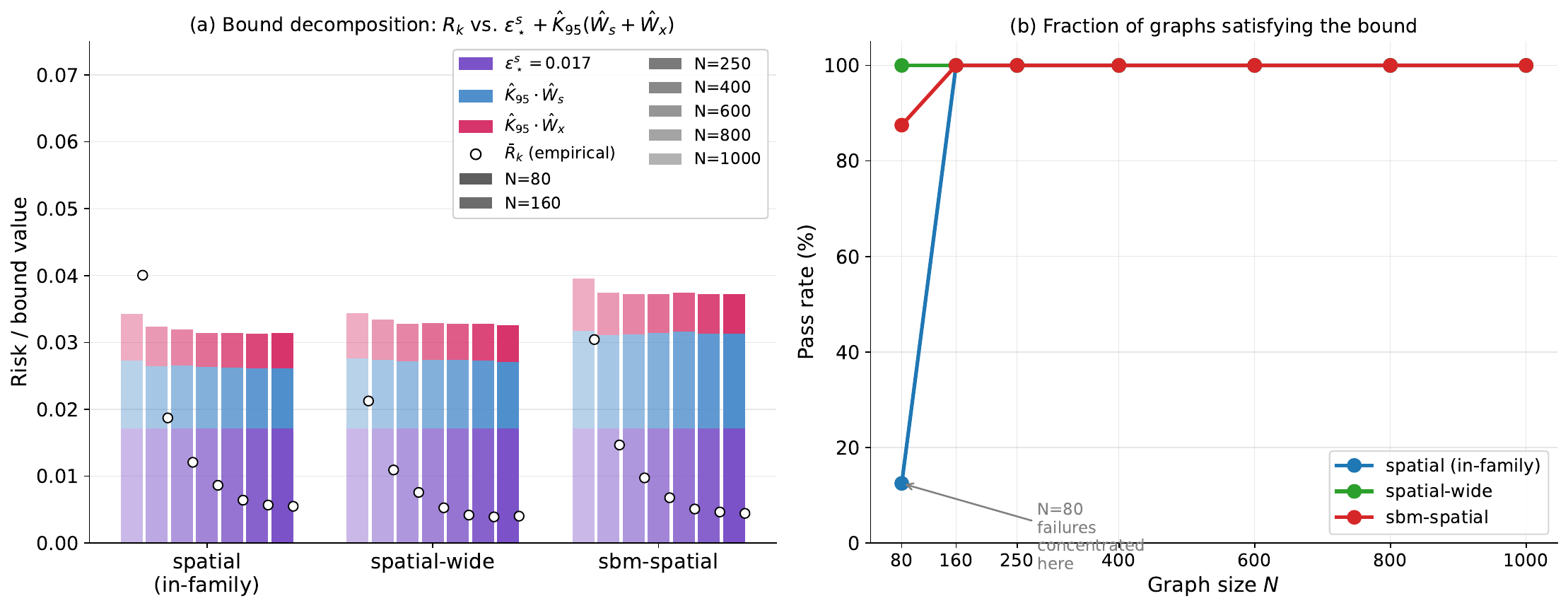}
  \caption{Empirical verification of the controlled teacher-access validation bound
  $R_k \le \varepsilon_\star^s + \hat K_{95}(\hat W_s+\hat W_x)$
  on the $150$-graph graphon benchmark.
  \textbf{(a)} Stacked bar decomposition of the bound into its three
  theorem-driven components: $\varepsilon_\star^s = 0.017$ (purple base),
  $\hat K_{95}\hat W_s$ (indigo), and $\hat K_{95}\hat W_x$ (pink), for each
  family across graph sizes $N\in\{80,160,250,400,600,800,1000\}$.
  White dots mark the mean empirical deployment risk $\bar R_k$; in every bar
  the dot lies strictly below the bar top, confirming the bound.
  The dominant component is $\varepsilon_\star^s$; the two Wasserstein terms
  are small, consistent add-ons.
  \textbf{(b)} Pass rate (fraction of individual graphs satisfying the
  bound) as a function of $N$.  All families reach $100\%$ at $N\ge 160$;
  the sole exceptions are at $N=80$ (spatial: $12\%$, sbm\_spatial: $88\%$),
  where the graph is too sparse for the privileged signal to diffuse faithfully,
  inflating $R_k$.}
  \label{fig:bound_validation}
\end{figure}

\begin{figure*}[t]
  \centering
  \includegraphics[width=\linewidth]{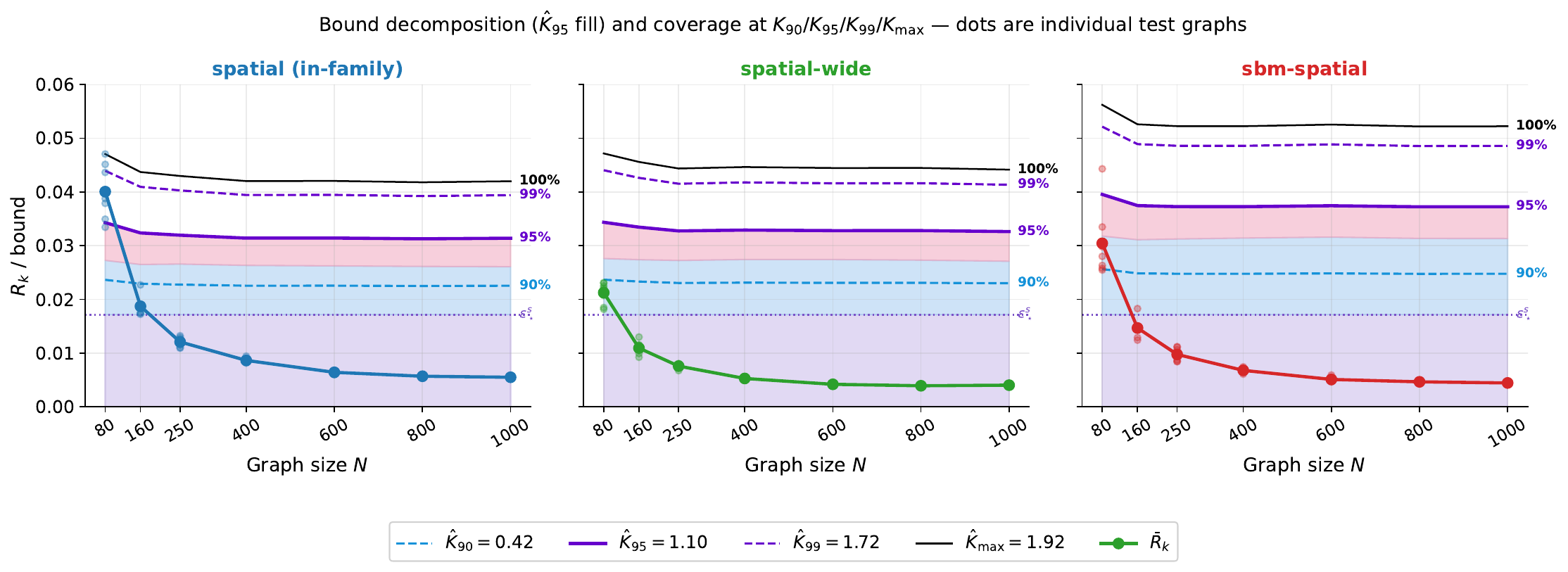}
  \caption{Deployment risk vs.\ graph size for the three test families.
  \textbf{Fill layers} (at $\hat K_{95}$ reference) decompose the bound into
  its three theorem-driven components: $\varepsilon_\star^s=0.017$ (lavender),
  $\hat K_{95}\hat W_s$ (blue), and $\hat K_{95}\hat W_x$ (pink).
  \textbf{Curves} show bounds at four $K$ percentile levels
  ($\hat K_{90}=0.42$, $\hat K_{95}=1.10$, $\hat K_{99}=1.72$,
  $\hat K_{\max}=1.92$); pass rates appear at the right of each curve.
  \textbf{Dots} are individual per-graph $R_k$ values ($10$ per size);
  the coloured line is the family mean $\bar R_k$.
  Two observations follow.
  First, $\bar R_k$ falls monotonically from ${\sim}0.09$ at $N=80$ to
  $<0.01$ at $N=1000$, while the bound layers remain nearly flat: $R_k$
  improves with graph size but $\hat W_s$, $\hat W_x$ are distribution-level
  quantities insensitive to individual-graph noise (\emph{decoupling}).
  Second, at $\hat K_{99}$ all families achieve $\ge 98\%$ pass rate; the
  test points concentrate well below the $\hat K_{90}$ curve, confirming that
  $\hat K_{95}$ is a conservative estimate of the true Lipschitz constant.}
  \label{fig:risk_vs_size}
\end{figure*}

\subsubsection{Effect of Observation Density on Reconstruction and Transfer Quality}
The deployment path $\Px$ observes only a randomly masked subset of nodes at each snapshot.
To assess how the mask rate $\rho$ (the fraction of nodes retained as measurements)affects both reconstruction fidelity and transfer certificate quality, we train four model variants with $\rho \in \{0.1, 0.2, 0.3, 0.4\}$, holding all other hyperparameters fixed.
Table~\ref{tab:mask_ablation} reports RMSE of the sparse-path reconstruction together with the per-family mean deployment risk $\bar R_k$ and the global $\hat K_{95}$ constant.

Increasing $\rho$ consistently tightens the certificate: $\hat K_{95}$ decreases from $1.90$ at $\rho=0.1$ to $1.10$ at $\rho=0.3$, and $\bar R_k$ falls across all three test families.
RMSE improves markedly from $\rho=0.1$ to $\rho=0.3$ (from $0.113$ to $0.096$) and continues to decrease to $0.095$ at $\rho=0.4$, while the certificate constant $\hat K_{95}$ is nearly identical between the two ($1.103$ vs.\ $1.130$), suggesting diminishing returns beyond $\rho=0.3$.
The improvement from denser observations is structurally expected: a higher mask rate provides more latent averaging at encoding time, pulling the deployment distribution $\hat\mu_x$ closer to the semantic reference $\hat\mu_\star^s$ and reducing the alignment error $\tau_a$.
All reported results in the preceding analysis use $\rho = 0.3$.

\begin{table}[t]
\centering
\caption{Ablation on observation density (mask rate $\rho$). $\text{RMSE}_x$ and $\text{RMSE}_s$ are the deployment-path and teacher-path reconstruction errors. $\bar R_k$ is the mean deployment risk per test family; $\hat K_{95}$ is the global Lipschitz constant estimated over all $150$ test graphs. Lower is better for all metrics.}
\label{tab:mask_ablation}
\small
\setlength{\tabcolsep}{4.5pt}
\begin{tabular}{cccccccc}
\toprule
$\rho$ & $\text{RMSE}_x$ & $\text{RMSE}_s$ & $\bar R_k^\text{spatial}$ & $\bar R_k^\text{spatial\_wide}$ & $\bar R_k^\text{sbm\_spatial}$ & $\hat K_{95}$ \\
\midrule
0.1 & 0.1129 & 0.0934 & 0.0211 & 0.0117 & 0.0173 & 1.899 \\
0.2 & 0.1143 & 0.1001 & 0.0185 & 0.0112 & 0.0150 & 1.228 \\
0.3 & 0.0964 & 0.1062 & 0.0144 & 0.0084 & 0.0112 & 1.103 \\
0.4 & 0.0951 & 0.0941 & 0.0142 & 0.0081 & 0.0110 & 1.130 \\
\bottomrule
\end{tabular}
\end{table}

\subsection{Power-System State Estimation Across Unseen Topologies}
\label{sec:exp_powergrid}

\subsubsection{Setup}
We evaluate on synthetic power distribution networks derived from the IEEE~33-bus benchmark with randomised line parameters and bus counts $N\in[14,38]$.
A base model is first trained on $1{,}000$ case33-derived graphs; test graphs ($100$ topologies, $N\in[15,37]$) are entirely unseen in structure, bus count, and admittance matrix.
The latent dimension is $d=128$; the decoder uses Chebyshev order $K_\text{cheb}=4$ with $K_\text{eig}=2$ Laplacian eigenvector features.
Training uses Adam ($\text{lr}=5\times10^{-6}$), batch size $32$ with $8$ graphs per batch, and consistency weight $\lambda_c=1$.

\subsubsection{Certificate-Guided Active Coverage Expansion}
A teacher-access screening pass, used only during active expansion when the teacher pathway is available, partitions any candidate graph pool into \emph{covered} graphs ($\hat W_s^{[k]}\le\tau_{ws}$) and \emph{uncovered} graphs ($\hat W_s^{[k]}>\tau_{ws}$).
An uncovered graph lies outside the latent geometry shaped by the current training distribution, and the bound $\hat\varepsilon_\star^s+\hat K(\hat W_s^{[k]}+\hat W_x^{[k]})$ is correspondingly wide for these topologies.
Coverage expansion uses the uncovered set as a targeted fine-tuning corpus: for each uncovered graph $k$, the deployment encoder $E_x$ is updated until $\hat W_s^{[k]}<\tau_{ws}$, after which $\hat\mu_\star^s$ is recomputed on the augmented set, yielding a tighter $\hat\varepsilon_\star^s$ and a sharper bound.
This closed loop, teacher-access screen, fine-tune on uncovered graphs, re-evaluate the bound, constitutes \emph{certificate-guided active learning}.
Concretely, after initial training, the teacher-access screen is applied to a pool of $10{,}000$ newly generated candidate graphs to identify topologies not yet covered by the learned latent geometry.
A graph is deemed \emph{uncovered} if $\hat W_s^{[k]} > \tau_{ws} = 0.111$, indicating that its teacher encoder output lies outside the training reference region.
Of the $10{,}000$ candidates, $7{,}501$ ($75\%$) are uncovered; coverage is nearly zero for $N\le18$ and rises monotonically to $100\%$ at $N\ge32$.
Notably, all $N=32$--$33$ graphs in the pool are \emph{covered}: the model trained on case33-derived topologies already generalises to the nominal 33-bus topology, and active expansion is needed precisely for the smaller novel graphs ($N\le21$) that lie outside the training distribution.
We select $2{,}000$ uncovered graphs for fine-tuning and run $20$ additional epochs with a Lipschitz-regularisation term ($\lambda_\text{lip}=10^{-4}$) on the decoder, which reduces the empirical semantic-fidelity reference $\hat\varepsilon_\star^s$ from $0.209$ to $0.112$ and tightens the Jacobian-based Lipschitz estimate to $\hat K_{95}^\text{Jac}=1.44$.

\subsubsection{State-Estimation Results}
Table~\ref{tab:rmse} reports zero-shot voltage-phasor RMSE on $100$ unseen test topologies at varying AMI sensor coverage $\rho_\text{AMI}$, compared to two topology-aware classical baselines.
Our estimator dominates at low coverage ($10\%$--$20\%$), where the teacher-shaped latent geometry compensates for the severely under-determined observation; at $30\%$--$50\%$ coverage, NR-WLS recovers the advantage by exploiting exact topology information, which is unavailable to our zero-shot estimator.
Inference time is constant at ${\sim}0.80$~ms regardless of graph size; NR-WLS scales from $18.8$~ms to $138.7$~ms.
Table~\ref{tab:pgcert} further breaks down RMSE and bound pass rate by bus-count group.
RMSE increases monotonically with graph size, from $0.00505$ at $N\in[15,19]$ to $0.01332$ at $N\in[30,37]$.
Bound pass rates are $100\%$ for both small groups and degrade to $86\%$ for $N\in[30,37]$, where larger deployment-path latent drift inflates $\hat W_x$ and raises $\hat K_{95}$ requirements.
Overall bound pass rate across all $100$ test graphs is $\mathbf{95\%}$ at the empirical $\hat K_{95}=3.70$ level.

\begin{table}[!t]
\centering
\caption{Zero-shot voltage-phasor RMSE on $100$ unseen test topologies at varying AMI sensor coverage. NR-WLS and DistFlow require exact topology and re-optimize per graph; they are oracle references, not transfer baselines.}
\label{tab:rmse}
\small
\setlength{\tabcolsep}{3.2pt}
\begin{tabular}{lcccccc}
\toprule
& \multicolumn{2}{c}{\textbf{Ours (Zero-Shot)}}
& \multicolumn{2}{c}{NR-WLS$^\ddagger$}
& \multicolumn{2}{c}{DistFlow$^\ddagger$} \\
\cmidrule(lr){2-3}\cmidrule(lr){4-5}\cmidrule(lr){6-7}
$\rho_\text{AMI}$ & RMSE & ms & RMSE & ms & RMSE & ms \\
\midrule
10\% & \textbf{0.01044} & 0.785 & 0.01544 & 18.81  & 0.04402 & 0.587 \\
20\% & \textbf{0.00942} & 0.793 & 0.01002 & 35.59  & 0.02848 & 0.708 \\
30\% & 0.00917          & 0.792 & \textbf{0.00815} & 83.73  & 0.02703 & 0.869 \\
40\% & 0.00852          & 0.810 & \textbf{0.00795} & 108.82 & 0.02603 & 0.970 \\
50\% & 0.00851          & 0.815 & \textbf{0.00767} & 138.68 & 0.02332 & 0.987 \\
\bottomrule
\end{tabular}
\\[2pt]
\scriptsize $^\ddagger$Not zero-shot transferable: requires exact topology, re-optimized per graph.
\end{table}

\begin{table}[!t]
\centering
\caption{RMSE and bound pass rate by bus-count group on $100$ test graphs (post active expansion). Bound pass: $\hat R_k \le \hat\varepsilon_\star^s + \hat K_{95}(\hat W_s+\hat W_x)$ with $\hat\varepsilon_\star^s=0.112$, $\hat K_{95}=3.70$ (empirical 95th-percentile Lipschitz level); $\hat W_c\approx 0.003$ is absorbed into $\hat\varepsilon_\star^s$.}
\label{tab:pgcert}
\small
\setlength{\tabcolsep}{7pt}
\begin{tabular}{lccc}
\toprule
\textbf{Bus range} & $n$ & \textbf{Mean RMSE} & \textbf{Bound pass} \\
\midrule
$N\in[15,19]$ & 21 & 0.00505 & 100\% \\
$N\in[20,24]$ & 31 & 0.00679 & 100\% \\
$N\in[25,29]$ & 20 & 0.00917 &  95\% \\
$N\in[30,37]$ & 28 & 0.01332 &  86\% \\
\midrule
All           & 100 & 0.00873 & \textbf{95\%} \\
\bottomrule
\end{tabular}
\end{table}

\begin{figure*}[t]
  \centering
  \includegraphics[width=\linewidth]{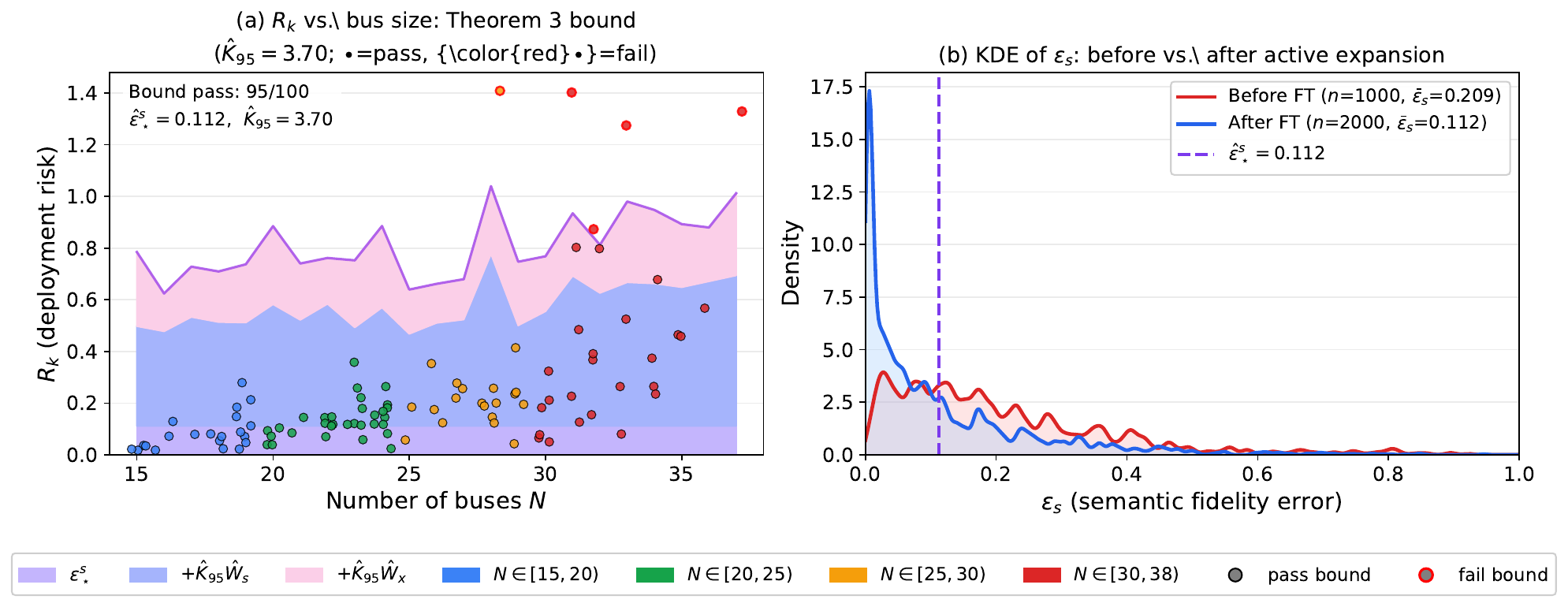}
  \caption{\textbf{(a)} $R_k$ vs.\ bus size $N$ for all $100$ test graphs.
  Stacked fill layers decompose the Theorem~\ref{thm:law_sufficient} bound:
  lavender $=\varepsilon_\star^s$, blue $=+\hat K_{95}\hat W_s$, pink $=+\hat K_{95}\hat W_x$
  (per-$N$ worst-case $\hat W_s,\hat W_x$; $\hat K_{95}=3.70$).
  Blue dots pass the $\hat K_{95}$ bound; red dots fail; overall bound pass rate is $95\%$.
  \textbf{(b)} KDE of the semantic fidelity error $\varepsilon_s$ on the reference set
  before (red, $n=1000$, $\bar\varepsilon_s=0.209$) and after (blue, $n=2000$, $\bar\varepsilon_s=0.112$)
  active coverage expansion.  The dashed line marks the tightened $\hat\varepsilon_\star^s=0.112$.}
  \label{fig:powergrid_results}
\end{figure*}

\subsubsection{Certificate Diagnostics}
Figure~\ref{fig:powergrid_results}a plots $R_k$ versus bus size for all $100$ test graphs with the stacked bound layers from Theorem~\ref{thm:law_sufficient}, mirroring Figure~\ref{fig:risk_vs_size}.
$95\%$ of graphs satisfy the $\hat K_{95}=3.70$ bound, consistent with the graphon experiment.
The dominant factor limiting bound tightness for large graphs ($N\ge28$) is $\hat W_x$: deployment-path latent codes drift further from $\hat\mu_\star^x$ as topology grows, inflating the $\hat K_{95}\hat W_x$ layer.
The graph/path matching calibration $\hat W_c = 0.003$ is negligible, so the bound reduces effectively to $\hat\varepsilon_\star^s + \hat K_{95}(\hat W_s^{[k]}+\hat W_x^{[k]})$ per instance, where $\hat W_s^{[k]}$ captures the residual semantic drift of each test graph from the post-expansion reference law.
Figure~\ref{fig:powergrid_results}b confirms that active expansion tightened $\hat\varepsilon_\star^s$ from $0.209$ to $0.112$, shifting the reference distribution leftward and narrowing the semantic fidelity spread.

\section{Conclusion}
\label{sec:conclusion}

This paper proposed an asymmetric two-pathway architecture for inference
and control problems in which the physics is expensive at deployment. A
teacher encoder $\Ps$ consumes a privileged dense signal during training
and shapes a latent geometry anchored by operator-polynomial features
whose coefficients are stable under spectral perturbation; a student
encoder $\Px$ is trained to reach the same latent geometry from sparse
field data and operator descriptors, and serves as the sole deployment
pathway. The pointwise transfer-compatibility criterion of
\S\ref{sec:geom_pointwise} answers when cross-instance transfer is
well-posed; the latent-law conditions of \S\ref{sec:geom_law} convert
geometric proximity into bounded task risk via Wasserstein stability,
sufficient and near-necessary; and the verification protocol of
\S\ref{sec:verify} translates these into a finite-sample per-instance
transfer certificate that requires only deployment-side data. The
certificate is not merely diagnostic: it identifies instances in a
candidate pool that fall outside the architecture's transfer envelope,
guiding targeted fine-tuning that expands latent coverage without full
retraining.

On a power-system state-estimation benchmark over $100$ unseen
distribution-network topologies, certificate-guided active expansion
reduces the semantic-fidelity reference $\hat\varepsilon_\star^s$ from
$0.209$ to $0.112$, achieves a $95\%$ bound pass rate at
$\hat K_{95}=3.70$ (rising to $100\%$ at $\hat K_{\max}=5.80$), and
yields mean voltage-phasor RMSE of $0.00873$. Inference is constant at
${\sim}0.8$~ms regardless of topology size, against $18.8$--$138.7$~ms
for a topology-aware Newton-Raphson reference that requires exact
topology. A complementary controlled graphon study confirms that the
bound is non-vacuous and that what governs transfer is Wasserstein
proximity to the deployment reference law, not structural identity of
the generating process: even a hybrid community-spatial test family
with a qualitatively different generative mechanism achieves $98\%$
pass rate at $N\ge 80$ and $100\%$ at $N\ge 160$. Together with the
geometric theory, these results say that the foundations of
cross-instance transfer are geometric: transfer is valid when
operator variation changes coordinates but does not change the latent
task.

Several extensions are natural. The Bayesian reading of
\S\ref{sec:bayesian} treats the regime $\theta$ as effectively
revealed by the teacher at training; at deployment $\theta$ is
generally unknown, and the genuinely Bayesian object is a soft E-step
,  a posterior over $\theta$ given $\mathbf{x}$ with the prediction
recovered as an integral over within-regime experts. This
mixture-of-experts deployment view, familiar from switching-state-space
models, time-varying channel estimation, and load-profile-conditioned
inference, is the next step we plan to develop. A second direction is
cross-domain extension beyond graphs: the framework applies wherever a
system admits a well-defined operator with a reportable spectrum,
including discretized Green's functions, channel matrices, and
state-transition Jacobians, and the natural follow-up paper instantiates
it on near-field array processing and target tracking. A third direction
is tightening the finite-sample bound via explicit Lipschitz
regularization of the decoder, which the Jacobian-based estimate
$\hat K_{95}^\text{Jac}=0.83$ on the controlled graphon study suggests
is feasible. A fourth is scaling the active-expansion pipeline to
larger transmission-level networks and extending the framework to
directed and time-varying graphs.

\appendix
\section{Proofs of Geometric Transfer Theorems}
\label{app:geom_proofs}

This appendix provides the full proofs of Theorems~\ref{thm:compat} and \ref{thm:transferbound} stated in Section~\ref{sec:geometry}. The first establishes that transfer-compatible graph families share a common latent coordinate system related by homeomorphisms; the second converts this structural statement into a finite sample error bound on the deployment pathway.

\subsection{Proof of Theorem~\ref{thm:compat}}

\begin{proof}
Fix any two graphs $k,k'$ in the transfer-compatible family, and let $U\subseteq\Mz$ be the shared connected latent set given by Definition~\ref{def:compat}. 
By assumption, the semantic encoders $E_s^{[k]}:\Ms^{[k]}\to U$
and $E_s^{[k']}:\Ms^{[k']}\to U$ are homeomorphisms onto $U$.

Define the coordinate map
\begin{equation}
\Psi^{[k\to k']}:\Ms^{[k]}\to\Ms^{[k']},\qquad
\Psi^{[k\to k']}:=\big(E_s^{[k']}\big)^{-1}\circ E_s^{[k]}.
\end{equation}
As the composition of two homeomorphisms, $\Psi^{[k\to k']}$ is itself
a homeomorphism between $\Ms^{[k]}$ and $\Ms^{[k']}$.
By construction, for every $s\in\Ms^{[k]}$,
\begin{equation}
E_s^{[k']}\!\big(\Psi^{[k\to k']}(s)\big)=E_s^{[k]}(s),
\end{equation}
so the two semantic pathways place corresponding signals at the same latent coordinate in $U$.
Now apply the shared-decoder property of Definition~\ref{def:compat}. For every $s\in\Ms^{[k]}$,
\begin{equation}
T^{[k]}(s)=D\!\big(E_s^{[k]}(s),N_k\big),
\end{equation}
and for $s':=\Psi^{[k\to k']}(s)\in\Ms^{[k']}$,
\begin{equation}
T^{[k']}(s')=D\!\big(E_s^{[k']}(s'),N_{k'}\big)=D\!\big(E_s^{[k]}(s),N_{k'}\big).
\end{equation}
Hence $T^{[k]}$ and $T^{[k']}\circ\Psi^{[k\to k']}$ differ only in the decoder's size argument $N$, and share the identical latent argument $E_s^{[k]}(s)\in U$. The latent task coordinate is therefore invariant across graphs in the family, while topology variation is fully absorbed into the homeomorphism $\Psi^{[k\to k']}$ and the size input of the shared decoder. This proves the claim.
\end{proof}

\subsection{Proof of Theorem~\ref{thm:transferbound}}

\begin{proof}
Fix any paired sample $(x^{[k]},s^{[k]})$ with $s^{[k]}\in\Ms^{[k]}$. By the triangle inequality applied to the norm on the decoder's output space,
\begin{equation}
\begin{aligned}
\big\|D(E_x^{[k]}(x^{[k]}),N_k)-T^{[k]}(s^{[k]})\big\|
&\le \big\|D(E_x^{[k]}(x^{[k]}),N_k)-D(E_s^{[k]}(s^{[k]}),N_k)\big\| \\
&\quad +\big\|D(E_s^{[k]}(s^{[k]}),N_k)-T^{[k]}(s^{[k]})\big\|.
\end{aligned}
\label{eq:geom_proof_triangle}
\end{equation}
The second term is the teacher-pathway reconstruction residual,
which is bounded by $\varepsilon_T$ via the
$\varepsilon_T$-approximate compatibility hypothesis of
Theorem~\ref{thm:transferbound}:
\begin{equation}
\big\|D(E_s^{[k]}(s^{[k]}),N_k)-T^{[k]}(s^{[k]})\big\|\le \varepsilon_T.
\label{eq:approx_compat_use}
\end{equation}

For the first term, invoke the $L_D$-Lipschitz property of $D(\cdot,N_k)$ on the shared latent set $U\supseteq\{E_x^{[k]}(x^{[k]}),E_s^{[k]}(s^{[k]})\}$:
\begin{equation}
\big\|D(E_x^{[k]}(x^{[k]}),N_k)-D(E_s^{[k]}(s^{[k]}),N_k)\big\|
\le L_D\big\|E_x^{[k]}(x^{[k]})-E_s^{[k]}(s^{[k]})\big\|.
\label{eq:geom_proof_lip}
\end{equation}
Substituting \eqref{eq:geom_proof_lip} and \eqref{eq:approx_compat_use}
into \eqref{eq:geom_proof_triangle} yields
\begin{equation}
\big\|D(E_x^{[k]}(x^{[k]}),N_k)-T^{[k]}(s^{[k]})\big\|
\le L_D\big\|E_x^{[k]}(x^{[k]})-E_s^{[k]}(s^{[k]})\big\|+\varepsilon_T,
\end{equation}
which is exactly \eqref{eq:transfer-bound}.


For the extension to unseen graphs, observe that the argument used
only (i) the triangle inequality, (ii) the uniform Lipschitz
property of $D(\cdot,N)$ on $U$, (iii) the fact that both
$E_x^{[k]}(x^{[k]})$ and $E_s^{[k]}(s^{[k]})$ lie in the same shared
$U$, and (iv) the $\varepsilon_T$-approximate compatibility
\eqref{eq:approx_compat_use}. If a new graph
$\mathcal{G}^{[k']}$ is $\varepsilon_T$-approximately
transfer-compatible with latent subset $U$, its deployment
encoder $E_x^{[k']}$ maps into $U$, and $D(\cdot, N_{k'})$
is $L_D$-Lipschitz on $U$, all four conditions
still hold, so \eqref{eq:transfer-bound} applies verbatim with $k$
replaced by $k'$. This completes the proof.
\end{proof}

\section{Latent-Law Transfer Guarantees}
\label{app:law_transfer}

This appendix gives the full distributional proof behind Theorems~\ref{thm:law_sufficient}--\ref{thm:law_necessary}. The central object is not an individual latent vector, but the \emph{law} induced by each pathway. The sufficient direction says that transfer succeeds when the deployment law tracks the semantic law, the semantic law is close to a reference task law, and the scalar task loss is Lipschitz in latent coordinates. Wasserstein risk stability is derived below as a lemma rather than assumed directly. The reverse direction requires an additional identifiability condition; without it, risk can be insensitive to latent-law mismatch.

\subsection{Latent Laws and Risk}
\begin{definition}[Latent laws]
\label{def:latent_laws}
For graph $k$, let $x^{[k]}$ and $s^{[k]}$ denote the deployment observation and semantic random variables, consistent with the notation of Section~\ref{sec:problem}. The two encoders induce latent random variables
\begin{equation}
z_x^{[k]} := E_x^{[k]}(x^{[k]}),
\qquad
z_s^{[k]} := E_s^{[k]}(s^{[k]}),
\end{equation}
with induced laws
\begin{equation}
\mu_x^{[k]} := \mathrm{Law}(z_x^{[k]}),
\qquad
\mu_s^{[k]} := \mathrm{Law}(z_s^{[k]}).
\end{equation}
We assume all latent laws belong to $\mathcal{P}_2(\R^d)$, the set of probability measures on $\R^d$ with finite second moment. We work with two reference laws in $\mathcal{P}_2(\R^d)$ pooled across the training family $\{\mathcal{G}^{[k]}\}_{k=1}^{K}$:
\begin{align}
\mu_\star^s
&:= \mathrm{Law}(z_s \text{ pooled over training}),\\
\mu_\star^x
&:= \mathrm{Law}(z_x \text{ pooled over training}).
\end{align}
$\mu_\star^s$ is the semantic reference law shaped by the teacher pathway; $\mu_\star^x$ is the deployment reference law, the only reference required at test time.
\end{definition}

\begin{definition}[Per-graph risk under a latent law]
\label{def:latent_risk}
Fix a graph $k$. Let $D$ be a decoder, $\ell$ a nonnegative task loss, and let
$y_k(z)$ denote the graph-$k$ target associated with latent coordinate $z$.
Define the graph-specific loss-composition
\begin{equation}
F_D^{[k]}(z):=\ell(D(z,N_k),y_k(z)).
\end{equation}
For any latent law $\mu$ for which the integral is finite, define
\begin{equation}
R_\mu^{[k]}(D):=\mathbb{E}_{z\sim\mu}\big[F_D^{[k]}(z)\big]
=
\int_{\R^d} F_D^{[k]}(z)\,d\mu(z).
\end{equation}
The deployment risk on graph $k$ is therefore
\begin{equation}
R_k(D):=R_{\mu_x^{[k]}}^{[k]}(D).
\end{equation}
\end{definition}

\begin{remark}[Meaning of the reference laws]
The laws $\mu_\star^s$ and $\mu_\star^x$ are not arbitrary
distributional priors.  $\mu_\star^s$ is the latent law of the semantic
task family shaped by the full-information pathway, while $\mu_\star^x$
is the deployable reference law induced by the sparse pathway on the
training family. The sufficient bound compares a test deployment law
to $\mu_\star^x$ and pays the graph/path matching calibration
$W_2(\mu_\star^x,\mu_\star^s)$ to return to the semantic reference.
\end{remark}

\subsection{Sufficient Direction}

\begin{proposition}[Existence and finiteness of the per-graph calibration constant]
\label{prop:ck_exists}
In both the power-system and graphon settings, the per-graph calibration
constant
\begin{equation}
c_k
:=
W_2\!\left(\mu_x^{[k]},\mu_s^{[k]}\right)
\end{equation}
is finite and well-defined.

\emph{Power-system setting.}
The physical state $V=\Phi_k(S,Y_k)$ lies in a compact normal operating
region for all admissible injection draws $S$.  The semantic signal
$s^{[k]}$ is a function of $V$, and the deployment observation
$x^{[k]}=M^{[k]}s^{[k]}$ is a masked version of the same signal.
Because the operating region is compact and the encoders $E_s,E_x$ are
measurable functions of their inputs, both latent variables
$Z_s^{[k]}=E_s(s^{[k]})$ and $Z_x^{[k]}=E_x(x^{[k]})$ have finite
second moments, so $\mu_x^{[k]},\mu_s^{[k]}\in\mathcal{P}_2(\mathbb{R}^d)$
and $c_k<\infty$.

\emph{Graphon setting.}
The graphon source field $u\sim\mathrm{Unif}([0,1]^d)$ is bounded, the
diffusion filter $\mathcal{H}_{W_k}=\sum_{t=0}^{T}a_t\mathcal{T}_{W_k}^t$
is a bounded linear operator, so the semantic signal
$s^{[k]}=\mathcal{H}_{W_k}u$ is bounded.  The deployment observation
$x^{[k]}=M^{[k]}s^{[k]}$ (random masking) is also bounded.
The same argument yields $\mu_x^{[k]},\mu_s^{[k]}\in\mathcal{P}_2(\mathbb{R}^d)$
and $c_k<\infty$.

In both cases the per-graph bound $W_2(\mu_x^{[k]},\mu_s^{[k]})\le c_k$
follows from the natural coupling
$(Z_x^{[k]},Z_s^{[k]})=(E_x(x^{[k]}),E_s(s^{[k]}))$
generated by the same physical realization.
The quantity $c_k$ is a property of the trained encoders, not of the
physical law alone: it measures how well the deployment encoder $E_x$
recovers the same latent representation as the semantic encoder $E_s$
from partial observations, and is estimated empirically by
Algorithm~\ref{alg:verify}.
\end{proposition}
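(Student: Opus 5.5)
The plan is to reduce finiteness of $c_k$ to a boundedness statement: both latent laws have compact support, hence lie in $\mathcal{P}_2(\R^d)$, and $W_2$ between any two members of $\mathcal{P}_2(\R^d)$ is finite. Concretely, for $\mu,\nu\in\mathcal{P}_2(\R^d)$, the product coupling $\mu\otimes\nu$ is admissible. With it,
\[
W_2(\mu,\nu)^2\le \int\!\!\int\|z-z'\|^2\,d\mu\,d\nu\le 2\int\|z\|^2d\mu+2\int\|z'\|^2d\nu<\infty .
\]
Well-definedness is also immediate: the pushforward laws are probability measures as soon as the encoders are measurable. So the real work is showing that $Z_s^{[k]}$ and $Z_x^{[k]}$ have finite second moments. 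I will do this by showing their inputs live in a compact set and the encoders map compact sets to bounded sets.

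\textbf{Input boundedness.} In the power-system case, the assumed compact normal operating region $\mathcal{K}_k$ contains $V=\Phi_k(S,Y_k)$. Since $s^{[k]}$ is a continuous function of $V$ (complex voltages and the injections recovered through $Y_k$), $s^{[k]}$ ranges over a compact set $\mathcal{S}_k$. The masked observation $x^{[k]}=M^{[k]}s^{[k]}$ is a linear image of it, and the mask and descriptors $p^{[k]}$ are fixed finite-dimensional data for graph $k$. So $x^{[k]}$ ranges over a compact set as well.

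In the graphon case, $u$ is bounded in sup-norm. On a finite graph, $\mathcal{T}_{W_k}$ is a matrix, so $\|\mathcal{H}_{W_k}\|\le\sum_t|a_t|\,\|\mathcal{T}_{W_k}\|^t<\infty$ and $s^{[k]}$ is bounded. Masking preserves the bound, and the training noise injection is absent at evaluation.

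\textbf{Encoder step (main obstacle).} Mere measurability of $E_s,E_x$, as the statement phrases it, does not give bounded images: a measurable function can be unbounded on a compact set. I would strengthen this to continuity, which the architecture supplies. The Chebyshev layers \eqref{eq:cheb} with split ReLU, the pooling, the MLP, and the transformer with masked mean pooling are compositions of continuous maps for a fixed graph, with fixed $N_k$ and a fixed observed-node set. Continuous images of compact sets are compact, so $Z_s^{[k]}$ and $Z_x^{[k]}$ are almost surely bounded by some radius $R_k$. Hence $\mathbb{E}\|Z\|^2\le R_k^2$, both laws lie in $\mathcal{P}_2(\R^d)$, and the coupling bound above gives $c_k\le 2R_k<\infty$.

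\textbf{Natural coupling.} Finally, the joint law of $(E_x(x^{[k]}),E_s(s^{[k]}))$, both generated from the same realization, has marginals $\mu_x^{[k]}$ and $\mu_s^{[k]}$. It is therefore an admissible coupling, so
\[
W_2(\mu_x^{[k]},\mu_s^{[k]})^2\le\mathbb{E}\|Z_x^{[k]}-Z_s^{[k]}\|^2 .
\]
This links $c_k$ to the paired alignment quantity $\tau_a^{[k]}$ (in its $L^2$ form). It also shows $c_k$ depends on the trained encoders, since the right-hand side does.
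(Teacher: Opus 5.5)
Your proposal is correct and follows essentially the same route as the paper. Compact (bounded) inputs give almost surely bounded latent codes, hence $\mu_x^{[k]},\mu_s^{[k]}\in\mathcal{P}_2(\R^d)$ and $c_k<\infty$. The natural coupling then gives $c_k^2\le\mathbb{E}\|Z_x^{[k]}-Z_s^{[k]}\|^2$, which is what the paper's literally circular ``$W_2(\mu_x^{[k]},\mu_s^{[k]})\le c_k$'' is meant to say.

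Your version is more rigorous than the paper's. The paper infers finite second moments from mere measurability of $E_s,E_x$ (and of $V\mapsto s^{[k]}$), which is not enough, and you rightly substitute continuity supplied by the architecture. One small adjustment: in the graphon case the mask is random, so the observed-node set is not fixed. There are only finitely many masks, though, so the image is still a finite union of bounded sets.
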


\begin{theorem}[Sufficient condition for transfer]
\label{thm:app_sufficient}
For an unseen test graph $k'$, assume:
\begin{enumerate}
  \item \textbf{Semantic decoder fidelity:}
  $R_{\mu_\star^s}^{[k']}(D)\le \varepsilon_\star^s$.
  \item \textbf{Graph/path matching calibration:}
  for the reference graph family, with
  $c_k:=W_2(\mu_x^{[k]},\mu_s^{[k]})$ and the same mixture weights
  $\pi_k$ used to form the reference laws,
  $\left(\sum_k\pi_k c_k^2\right)^{1/2}\le \varepsilon_c$.
  \item \textbf{Test-graph deployment proximity:}
  $W_2(\mu_x^{[k']},\mu_\star^x)\le \varepsilon_x$.
  \item \textbf{Lipschitz loss-composition:} $F_D^{[k']}$ is
  $K$-Lipschitz on the latent region supporting $\mu_x^{[k']}$,
  $\mu_\star^x$, and $\mu_\star^s$.
\end{enumerate}
Then
\begin{equation}
R_{k'}(D) \le \varepsilon_\star^s + K(\varepsilon_c + \varepsilon_x).
\end{equation}
\end{theorem}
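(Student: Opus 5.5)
The argument is a three-step chain: a risk-stability lemma, a two-hop triangle inequality in $W_2$, and a mixture bound for the calibration term.

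\emph{Step 1 (risk stability).} If $F_D^{[k']}$ is $K$-Lipschitz on a region containing the supports of $\mu$ and $\nu$, then
\[
\big|R_\mu^{[k']}(D)-R_\nu^{[k']}(D)\big|\le K\,W_1(\mu,\nu)\le K\,W_2(\mu,\nu).
\]
The first inequality is Kantorovich--Rubinstein duality. Equivalently, take any coupling $\gamma$ of $(\mu,\nu)$; then $|\mathbb{E}_\gamma[F(z)-F(z')]|\le K\,\mathbb{E}_\gamma\|z-z'\|$, and we infimize over $\gamma$. The second inequality is Jensen/Cauchy--Schwarz, $\mathbb{E}\|z-z'\|\le(\mathbb{E}\|z-z'\|^2)^{1/2}$. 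Finiteness of all risks follows from the $\mathcal{P}_2$ assumption of Definition~\ref{def:latent_laws} together with the Lipschitz growth of $F_D^{[k']}$.

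\emph{Step 2 (the pooled calibration bound).} We show $W_2(\mu_\star^x,\mu_\star^s)\le\varepsilon_c$, as claimed in hypothesis~2. Write $\mu_\star^x=\sum_k\pi_k\mu_x^{[k]}$ and $\mu_\star^s=\sum_k\pi_k\mu_s^{[k]}$ with the same weights. For each $k$, let $\gamma_k$ be an optimal coupling of $(\mu_x^{[k]},\mu_s^{[k]})$. Then $\gamma=\sum_k\pi_k\gamma_k$ is a coupling of the two mixtures, so
\[
W_2^2(\mu_\star^x,\mu_\star^s)\le\sum_k\pi_k\int\|z-z'\|^2\,d\gamma_k=\sum_k\pi_k c_k^2\le\varepsilon_c^2.
\]
This is the joint convexity of $W_2^2$.

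\emph{Step 3 (chain).} Start from
\[
R_{k'}(D)=R^{[k']}_{\mu_x^{[k']}}(D)\le R^{[k']}_{\mu_\star^s}(D)+\big|R^{[k']}_{\mu_x^{[k']}}-R^{[k']}_{\mu_\star^s}\big|.
\]
By Step~1 the difference term is bounded by $K\,W_2(\mu_x^{[k']},\mu_\star^s)$. The $W_2$ triangle inequality through $\mu_\star^x$ gives
\[
W_2(\mu_x^{[k']},\mu_\star^s)\le W_2(\mu_x^{[k']},\mu_\star^x)+W_2(\mu_\star^x,\mu_\star^s)\le\varepsilon_x+\varepsilon_c,
\]
using hypothesis~3 and Step~2. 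Adding hypothesis~1 yields $\varepsilon_\star^s+K(\varepsilon_c+\varepsilon_x)$.

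\emph{Main obstacle.} The delicate point is applying Step~1 with the Lipschitz property holding only on ``the latent region supporting'' the three laws. The optimal coupling between $\mu_x^{[k']}$ and $\mu_\star^s$ is supported on the product of their supports, so pointwise Lipschitz bounds along coupled pairs suffice; no intermediate points are needed. Thus I would use the coupling form of Step~1 rather than KR duality, which would otherwise require extending $F_D^{[k']}$ globally (for example by McShane extension, which preserves the constant $K$ and leaves the risk values unchanged). A second subtlety is that $y_{k'}$ must be defined on the supports of $\mu_\star^s$ and $\mu_\star^x$, not only on the test law. I would note this as implicit in hypothesis~4, which presupposes $F_D^{[k']}$ is defined there.
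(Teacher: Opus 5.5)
Your proof is correct and uses the same two ingredients as the paper: the coupling form of risk stability (Lemma~\ref{lem:risk_stability}) and the mixture coupling that gives $W_2(\mu_\star^x,\mu_\star^s)\le\varepsilon_c$. The only difference is the order of the chain. The paper applies the risk-stability lemma twice, hopping $\mu_x^{[k']}\to\mu_\star^x\to\mu_\star^s$ and adding the two risk gaps. You apply it once to the pair $(\mu_x^{[k']},\mu_\star^s)$ and take the triangle inequality in $W_2$ instead. As a result $\mu_\star^x$ enters your argument only as an intermediate point of the metric, and you never need $F_D^{[k']}$ (or $y_{k'}$) on the support of $\mu_\star^x$. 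Your second ``subtlety'' is therefore unnecessary for your own route.
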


\begin{proof}
The proof proceeds in three steps via Lemma~\ref{lem:risk_stability}.

\paragraph{Step 1.} By condition 4 and Lemma~\ref{lem:risk_stability}
applied with graph index $k'$,
\begin{equation}
R_{k'}(D) - R_{\mu_\star^x}^{[k']}(D)
\le K\,W_2\!\left(\mu_x^{[k']},\mu_\star^x\right)
\le K\varepsilon_x.
\end{equation}

\paragraph{Step 2.} We first derive $W_2(\mu_\star^x,\mu_\star^s)\le\varepsilon_c$
from condition~2.  Write the two reference laws as the same mixture:
$\mu_\star^x=\sum_k\pi_k\mu_x^{[k]}$ and
$\mu_\star^s=\sum_k\pi_k\mu_s^{[k]}$.
For each $k$, let $\gamma_k$ be an optimal $W_2$-coupling between
$\mu_x^{[k]}$ and $\mu_s^{[k]}$.  Define the mixture coupling
$\gamma:=\sum_k\pi_k\gamma_k$; its first marginal is $\mu_\star^x$ and
its second marginal is $\mu_\star^s$.  Therefore
  \begin{align}
  W_2^2(\mu_\star^x,\mu_\star^s)
  &\le
  \int\|z-z'\|_2^2\,d\gamma(z,z') \notag\\
  &=
  \sum_k\pi_k
  \int\|z-z'\|_2^2\,d\gamma_k(z,z') \notag\\
  &=
  \sum_k\pi_k W_2^2(\mu_x^{[k]},\mu_s^{[k]}) \notag\\
  &=
  \sum_k\pi_k c_k^2
  \le\varepsilon_c^2,
  \end{align}
so $W_2(\mu_\star^x,\mu_\star^s)\le\varepsilon_c$.
Applying Lemma~\ref{lem:risk_stability} with the graph index $k'$,
\begin{equation}
R_{\mu_\star^x}^{[k']}(D) - R_{\mu_\star^s}^{[k']}(D)
\le K\,W_2\!\left(\mu_\star^x,\mu_\star^s\right)
\le K\varepsilon_c.
\end{equation}

\paragraph{Step 3.} Adding the two inequalities and using condition 1,
\begin{equation}
R_{k'}(D)
\le R_{\mu_\star^s}^{[k']}(D) + K(\varepsilon_c + \varepsilon_x)
\le \varepsilon_\star^s + K(\varepsilon_c + \varepsilon_x).
\end{equation}
This is the stated bound.
\end{proof}

\begin{remark}[Bounding the graph/path calibration constant $c_k$]
Condition~2 requires $c_k = W_2(\mu_x^{[k]},\mu_s^{[k]})$ to be small.
Because $x^{[k]}$ is a physical function of $s^{[k]}$ in each domain,
a natural coupling of the two pathways exists and $c_k$ can be bounded
without any cross-graph transfer argument.
Lemmas~\ref{lem:pf_path_calib} and~\ref{lem:graphon_path_calib} below
provide explicit upper bounds on $c_k$ for the power-system and
graphon settings, respectively.
In both cases the bound decomposes into a measurement-noise term
(vanishing as $\sigma\to 0$) and a training residual directly
minimized by $\mathcal{L}_{\mathrm{cons}}$.
\end{remark}

\begin{lemma}[Power-system intra-graph path-matching calibration]
\label{lem:pf_path_calib}
For graph $k$, let the AMI observation model be
\begin{equation}
x^{[k]} = h_{\mathcal{O}}(s^{[k]},Y^{[k]}) + \eta,
\quad \eta\sim\mathcal{N}(0,\sigma^2 I),
\end{equation}
where for each observed bus $i\in\mathcal{O}^{[k]}$,
\begin{align}
h_i(V,Y) &= \bigl[|V_i|,\;\theta_i,\;P_i(V,Y),\;Q_i(V,Y)\bigr]^T,\nonumber\\
P_i &= \mathrm{Re}\!\Bigl(V_i\textstyle\sum_j Y_{ij}^*V_j^*\Bigr),\quad
Q_i = -\mathrm{Im}\!\Bigl(V_i\textstyle\sum_j Y_{ij}^*V_j^*\Bigr),
\end{align}
giving the stacked AMI measurement Jacobian
\begin{equation}
H_{\mathcal{O}}^{[k]}
:=
\frac{\partial h_{\mathcal{O}}}{\partial[\mathrm{Re}(V);\,\mathrm{Im}(V)]}
\;\in\;\mathbb{R}^{o_k F_x\times 2N_k},
\end{equation}
which has a closed-form expression in terms of $(V,Y^{[k]})$.
Assume the measurement set $\mathcal{O}^{[k]}$ renders the system \emph{observable},
i.e.\ $H_{\mathcal{O}}^{[k]}$ has full row rank so that
$\sigma_{\min}(H_{\mathcal{O}}^{[k]})>0$.
If $E_x$ is $L_x$-Lipschitz in its measurement argument, then
\begin{equation}
c_k = W_2(\mu_x^{[k]},\mu_s^{[k]})
\;\le\;
\underbrace{L_x\,\sigma\sqrt{o_k F_x}}_{\text{(i) AMI noise}}
\;+\;
\underbrace{\varepsilon_{\mathrm{res}}^{[k]}}_{\text{(ii) training residual}}.
\label{eq:pf_path_calib}
\end{equation}
Term~(i) vanishes as $\sigma\to 0$.
Term~(ii) is the noiseless alignment residual directly minimized by
$\mathcal{L}_{\mathrm{cons}}$; under the observability assumption it
equals zero for a perfect encoder, since $(\bar{x},\,p^{[k]})$ together
uniquely identify $s^{[k]}$ via the power-flow equations and $Y^{[k]}$
(encoded in $p^{[k]}$), with sensitivity $\|H_+^{[k]}\|_{\mathrm{op}}=
\sigma_{\min}(H_\mathcal{O}^{[k]})^{-1}$.
\end{lemma}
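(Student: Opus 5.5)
The bound on $c_k$ comes from the natural coupling, as in Proposition~\ref{prop:ck_exists}. I will not work with optimal couplings directly; instead I bound $W_2$ by the coupling cost of a convenient joint law. Draw one physical realization $s^{[k]}$ and one noise draw $\eta$. Set $\bar x := h_{\mathcal{O}}(s^{[k]},Y^{[k]})$ for the noiseless observation and $x^{[k]}=\bar x+\eta$ for the noisy one. The pair $(E_x(x^{[k]}),E_s(s^{[k]}))$ has marginals $\mu_x^{[k]}$ and $\mu_s^{[k]}$. Hence
\[
c_k \le \Bigl(\mathbb{E}\,\|E_x(x^{[k]})-E_s(s^{[k]})\|_2^2\Bigr)^{1/2}.
\]
Insert the intermediate term $E_x(\bar x)$, with the descriptor $p^{[k]}$ and the mask held fixed. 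Minkowski's inequality in $L^2$ then splits the right-hand side into two pieces:
\[
\bigl(\mathbb{E}\|E_x(\bar x+\eta)-E_x(\bar x)\|^2\bigr)^{1/2}
\quad\text{and}\quad
\bigl(\mathbb{E}\|E_x(\bar x)-E_s(s^{[k]})\|^2\bigr)^{1/2}.
\]

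The first piece is term (i). By the $L_x$-Lipschitz assumption on $E_x$ in its measurement argument, it is at most $L_x(\mathbb{E}\|\eta\|_2^2)^{1/2}$. The noise vector $\eta$ has $o_kF_x$ i.i.d.\ $\mathcal{N}(0,\sigma^2)$ coordinates, so $\mathbb{E}\|\eta\|^2=\sigma^2 o_kF_x$. This gives exactly $L_x\sigma\sqrt{o_kF_x}$, which vanishes as $\sigma\to 0$. The only point to check is that $p^{[k]}$ and $m^{[k]}$ are deterministic for fixed $k$, so the Lipschitz bound applies to the measurement slot alone.

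The second piece is term (ii). I will define $\varepsilon_{\mathrm{res}}^{[k]}$ as this $L^2$ residual between the student on noiseless data and the teacher. This quantity is precisely the population version of the Euclidean part of $\Loss_\text{cons}$ evaluated at $\sigma=0$, so the inequality \eqref{eq:pf_path_calib} follows immediately.

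The remaining content is the claim that $\varepsilon_{\mathrm{res}}^{[k]}=0$ is attainable. This is the main obstacle, since it requires an inverse map. Observability, i.e.\ $\sigma_{\min}(H_{\mathcal{O}}^{[k]})>0$ on the compact operating region, gives local injectivity of $V\mapsto h_{\mathcal{O}}(V,Y^{[k]})$ via the inverse function theorem. The left inverse has operator norm $\sigma_{\min}^{-1}$. Compactness of the operating region is then used to pass from local to global identification of $s^{[k]}$ from $(\bar x,p^{[k]})$. If this step is too strong in general, I would state it as an assumption of a unique power-flow solution in the normal operating region.

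Given a measurable left inverse $g_k$ with $s^{[k]}=g_k(\bar x,p^{[k]})$, the encoder $E_x:=E_s\circ g_k$ achieves zero residual. This shows term (ii) is a pure training residual and not an information-theoretic floor. The left inverse is Lipschitz with constant $\|H_+^{[k]}\|_{\mathrm{op}}$, so $E_x$ inherits a finite $L_x$, consistent with the Lipschitz hypothesis.
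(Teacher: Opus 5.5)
Your proof is correct and is essentially the paper's argument: the natural coupling $(E_x(x,m,p),E_s(s))$, a Minkowski split through $E_x(\bar x,m,p)$, the Lipschitz bound giving $L_x\sigma\sqrt{o_kF_x}$, and $\varepsilon_{\mathrm{res}}^{[k]}$ defined as the RMS noiseless residual, so the inequality needs nothing more, and attainability of zero residual is commentary that the paper also only asserts. One caution on that commentary: the inverse function theorem gives local injectivity only when $H_{\mathcal{O}}^{[k]}$ has full \emph{column} rank (which the pseudoinverse $H_+^{[k]}$ also requires), not the full row rank the lemma states, and full column rank is impossible whenever $o_kF_x<2N_k$, as for sparse AMI; so your fallback of simply assuming unique identification of $s^{[k]}$ from $(\bar x,p^{[k]})$ is the honest route.
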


\begin{proof}
\noindent\textbf{Natural coupling.}
Since $x^{[k]}$ is a function of $s^{[k]}$, the pair $(E_x(x,m,p),\,E_s(s))$
forms a valid coupling of $\mu_x^{[k]}$ and $\mu_s^{[k]}$, so
\begin{equation}
c_k^2 \;\le\; \mathbb{E}\bigl[\|E_x(x,m,p)-E_s(s)\|^2\bigr].
\label{eq:pf_coupling}
\end{equation}

\noindent\textbf{State-estimation inversion via $H_\mathcal{O}$.}
Let $\bar x = h_{\mathcal{O}}(s,Y^{[k]})$ be the noiseless measurement,
and let $H_+^{[k]}:=(H_\mathcal{O}^{[k]\,T}H_\mathcal{O}^{[k]})^{-1}H_\mathcal{O}^{[k]\,T}$
denote the pseudoinverse.
Under observability, the linearized WLS state estimator is
\begin{equation}
\hat s = \bar s + H_+^{[k]}(x-\bar x),
\end{equation}
and satisfies
\begin{equation}
\|\hat s(x) - s\|
\;\le\;
\bigl\|H_+^{[k]}\bigr\|_{\mathrm{op}}\|\eta\|
\;=\;
\frac{\|\eta\|}{\sigma_{\min}(H_\mathcal{O}^{[k]})}.
\label{eq:pf_sebound}
\end{equation}

\noindent\textbf{Latent distance decomposition.}
Since $p^{[k]}$ encodes $Y^{[k]}$, the encoder $E_x(x,m,p)$ has access
to all inputs required by the state estimator.
Applying the triangle inequality,
\begin{equation}
\|E_x(x,m,p)-E_s(s)\|
\;\le\;
\|E_x(x,m,p)-E_x(\bar x,m,p)\|
+\|E_x(\bar x,m,p)-E_s(s)\|.
\end{equation}
The first term is bounded by $L_x\|\eta\|$ via the Lipschitz property of $E_x$,
so $(\mathbb{E}[L_x^2\|\eta\|^2])^{1/2} = L_x\sigma\sqrt{o_k F_x}$, which is term~(i).
The state-estimation bound \eqref{eq:pf_sebound} is used separately to justify
why the noiseless alignment term is small: observability guarantees that
$(\bar x,\,p^{[k]})$ carry all the information needed to recover $s$,
so a sufficiently expressive $E_x$ can achieve
$\delta_0(s):=\|E_x(\bar x,m,p)-E_s(s)\|=0$ at the population optimum.
Its RMS value $\varepsilon_{\mathrm{res}}^{[k]}:=(\mathbb{E}[\delta_0(s)^2])^{1/2}$
is directly minimized by $\mathcal{L}_{\mathrm{cons}}$ and constitutes term~(ii).
By the Minkowski inequality,
$c_k \le (\mathbb{E}[L_x^2\|\eta\|^2])^{1/2} + (\mathbb{E}[\delta_0(s)^2])^{1/2}$,
which gives \eqref{eq:pf_path_calib}.
\end{proof}

\begin{lemma}[Graphon intra-graph path-matching calibration]
\label{lem:graphon_path_calib}
Let the semantic signal for graph $k$ be generated by diffusion
$s^{[k]}=H(P_{G^{[k]}})\xi$ with $H(P)=\sum_{t=0}^{T}\gamma^t P^t$
and $\xi\sim\mathcal{P}_\xi$ with $(\mathbb{E}\|\xi\|^2)^{1/2}\le\sigma_\xi$.
Let the observation model be
\begin{equation}
x^{[k]} = M^{[k]} s^{[k]} + \eta,
\quad \eta\sim\mathcal{N}(0,\sigma^2 I),
\end{equation}
where $M^{[k]}=\mathrm{diag}(m^{[k]})$ selects $o_k$ observed nodes (a linear mask).
Let $\lambda_2^{[k]}$ be the spectral gap of $P_{G^{[k]}}$, set
$\rho_k=o_k/N_k$, and define $C_{\gamma,T}:=\sum_{t=1}^{T}t|\gamma|^t$.
If $E_x$ and $E_s$ are $L_x$- and $L_s$-Lipschitz, respectively, then
\begin{multline}
c_k = W_2(\mu_x^{[k]},\mu_s^{[k]})
\;\le\;
\underbrace{L_x\sigma\sqrt{o_k F_x}}_{\text{(i) obs.\ noise}}
+\underbrace{L_s\,C_{\gamma,T}\,\sigma_\xi
  \sqrt{\dfrac{(1-\rho_k)N_k}{\lambda_2^{[k]}}}}_{\text{(ii) diffusion interp.\ gap}}
+\;
\underbrace{\varepsilon_{\mathrm{res}}^{[k]}}_{\text{(iii) training residual}}.
\label{eq:graphon_path_calib}
\end{multline}
Term~(i) vanishes as $\sigma\to 0$.
Term~(ii) vanishes as $\rho_k\to 1$ (full observation) or as
$\lambda_2^{[k]}\to\infty$ (faster mixing, better interpolation).
Term~(iii) is minimized directly by $\mathcal{L}_{\mathrm{cons}}$.
\end{lemma}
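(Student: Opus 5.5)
The argument follows the template of Lemma~\ref{lem:pf_path_calib}, with one extra term for the interpolation gap introduced by masking. I would first use the natural coupling $(E_x(x,m,p),E_s(s))$ generated by a single draw of $\xi$, the mask, and the noise. This gives
\[
c_k^2\le\mathbb{E}\bigl[\|E_x(x,m,p)-E_s(s)\|^2\bigr].
\]
Let $\bar x:=M^{[k]}s^{[k]}$ be the noiseless masked signal, and let $\tilde s$ be a diffusion interpolant of the observed values. A concrete choice for $\tilde s$ is the harmonic (Dirichlet) extension of $\bar x$ with respect to $P_{G^{[k]}}$, which depends only on $(\bar x,m,p)$. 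I would then split the latent gap by the triangle inequality into three pieces:
\[
\|E_x(x)-E_x(\bar x)\|,\qquad \|E_s(s)-E_s(\tilde s)\|,\qquad \|E_x(\bar x)-E_s(\tilde s)\|.
\]
Finally, Minkowski's inequality in $L^2$ turns the RMS of the sum into the sum of the three RMS values.

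The first piece is handled exactly as in Lemma~\ref{lem:pf_path_calib}. Since $E_x$ is $L_x$-Lipschitz, the piece is at most $L_x\|\eta\|$. Because $\eta$ has $o_kF_x$ Gaussian coordinates, $\mathbb{E}\|\eta\|^2=\sigma^2 o_kF_x$, which yields term (i). The third piece contains no physics: it is the noiseless alignment residual, i.e.\ how well $E_x$ reproduces the teacher code of the interpolated signal from observed data. I would define its RMS value as $\varepsilon_{\mathrm{res}}^{[k]}$, which is what $\mathcal{L}_{\mathrm{cons}}$ minimizes, giving term (iii).

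The main obstacle is the second piece. The Lipschitz property of $E_s$ bounds it by $L_s\|s-\tilde s\|$, so I need an interpolation estimate
\[
\bigl(\mathbb{E}\|s-\tilde s\|^2\bigr)^{1/2}\le C_{\gamma,T}\,\sigma_\xi\sqrt{(1-\rho_k)N_k/\lambda_2^{[k]}}.
\]
The plan has three steps.
\begin{enumerate}
\item Write $s-\tilde s$ as supported on the $(1-\rho_k)N_k$ unobserved nodes and decompose $s=\sum_t\gamma^tP^t\xi$. The $t=0$ term is absorbed by choosing the interpolant to agree with it, or by treating it as part of the residual. Each higher-order term $\gamma^tP^t\xi$ is smooth on the graph.
\item Bound the graph variation of $P^t\xi$ using a telescoping argument $P^t-P^{t-1}=P^{t-1}(P-I)$, which produces the factor $t$ and hence $C_{\gamma,T}$.
\item Apply a Poincar\'e inequality with constant $1/\lambda_2^{[k]}$ to turn smoothness into the pointwise error of the harmonic extension. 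Summing the squared error over the unobserved nodes then gives the $\sqrt{(1-\rho_k)N_k}$ factor.
\end{enumerate}
The delicate points are the normalization conventions, in particular whether $\|\xi\|$ scales with $N_k$, and making the Poincar\'e step rigorous for a Dirichlet problem on the unobserved set. If the clean constant does not come out exactly, I would state term (ii) with an absolute constant, or assume a per-node energy normalization of $\xi$. Adding the three RMS bounds gives \eqref{eq:graphon_path_calib}.
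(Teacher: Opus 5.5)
Your skeleton is the paper's own: natural coupling, the split through $E_x(\bar x,m,p)$ and $E_s(\tilde s)$, $\varepsilon_{\mathrm{res}}^{[k]}$ defined as the RMS of $\|E_x(\bar x,m,p)-E_s(\tilde s)\|$, and Minkowski. Terms (i) and (iii) are fine. The paper handles term (ii) by simply asserting $s^\top L^{[k]}s\le C_{\gamma,T}^2\sigma_\xi^2N_k$ and a Poincar\'e-type interpolation bound, so the step you single out is unproved there as well. It also cannot be proved, because of the $t=0$ term you propose to absorb.

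Since $C_{\gamma,T}$ starts at $t=1$, take $T=0$, so $s=\xi$. Your target estimate then reads $\mathbb{E}\|s-\tilde s\|^2\le 0$. But $\tilde s$ depends only on $(\bar x,m,p)$, so for $\xi$ with independent nondegenerate coordinates and $\rho_k<1$ one has $\mathbb{E}\|s-\tilde s\|^2\ge\sum_{j\notin\mathcal O}\mathrm{Var}(\xi_j)>0$, where $\mathcal O$ is the observed set. The same failure persists for every $T\ge1$ once $|\gamma|$ is small, since $C_{\gamma,T}=O(|\gamma|)$ while the unobserved part of $\xi$ stays essentially unrecoverable.

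Neither of your fixes works. An interpolant built from $(\bar x,m,p)$ cannot agree with $\xi$ off the mask. Moving the $t=0$ contribution into the residual amounts to redefining $\varepsilon_{\mathrm{res}}^{[k]}$ as the RMS of $\|E_x(\bar x,m,p)-E_s(s)\|$; the bound is then just (i)$+$(iii), and term (ii) is pure slack. With the residual defined against the interpolant, as both you and the paper do, the lemma is in fact false. Take $\sigma=0$, $T=0$, Gaussian $\xi$ with full-rank covariance, $E_s=\mathrm{id}$ and $E_x(\bar x,m,p):=\tilde s$. All three terms vanish, yet $c_k=W_2(\mathrm{Law}(\tilde s),\mathrm{Law}(\xi))>0$, because $\tilde s$ lives on a subspace of dimension at most $o_k<N_k$. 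Inflating term (ii) by an absolute constant or renormalizing $\xi$, as you suggest, cannot repair this, since the term is identically zero in that regime.

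Two sub-steps of your plan would also need replacing. The identity $P^t-P^{t-1}=P^{t-1}(P-I)$ produces the factor $t$ only by comparing $P^t\xi$ with $\xi$, so the rough $\xi$ reappears in every term instead of being smoothed away. The Poincar\'e inequality with $\lambda_2$ controls $\|f-\bar f\mathbf 1\|$ in $L^2$, not pointwise errors of an extension from $\mathcal O$.

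Here is what does yield the shape of term (ii), taking $L^{[k]}$ to be a combinatorial Laplacian of a connected $G^{[k]}$ with algebraic connectivity $\lambda_2^{[k]}$. For the harmonic extension $\tilde s$, the error $h:=s-\tilde s$ vanishes on $\mathcal O$, and $h^\top L^{[k]}h\le s^\top L^{[k]}s$ by energy orthogonality. For $j\notin\mathcal O$ and any $i\in\mathcal O$, $h_j^2=(h_j-h_i)^2\le R_{\mathrm{eff}}(i,j)\,h^\top L^{[k]}h\le (2/\lambda_2^{[k]})\,h^\top L^{[k]}h$. Summing over the $(1-\rho_k)N_k$ unobserved nodes gives
\[
\mathbb{E}\|s-\tilde s\|^2\le\frac{2(1-\rho_k)N_k}{\lambda_2^{[k]}}\,\bigl\|(L^{[k]})^{1/2}H(P_{G^{[k]}})\bigr\|_{\mathrm{op}}^2\,\sigma_\xi^2 .
\]
A provable version of the lemma therefore replaces $C_{\gamma,T}$ by $\sqrt2\,\|(L^{[k]})^{1/2}H(P_{G^{[k]}})\|_{\mathrm{op}}$. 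This constant contains the $t=0$ term and does not vanish as $\gamma\to0$. Alternatively, the lemma could assume that $\xi$ itself is smooth. With $C_{\gamma,T}$ as written, no choice of interpolant or normalization closes the step.
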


\begin{proof}
\noindent\textbf{Natural coupling.}
Since $x^{[k]}=M^{[k]}s^{[k]}+\eta$ is a linear function of $s^{[k]}$ plus noise,
the pair $(E_x(x,m,p),\,E_s(s))$ is a valid coupling of $\mu_x^{[k]}$ and
$\mu_s^{[k]}$, giving
\begin{equation}
c_k^2 \;\le\; \mathbb{E}\bigl[\|E_x(x,m,p)-E_s(s)\|^2\bigr].
\end{equation}
Let $\bar x = M^{[k]}s$ denote the noiseless masked observation. Triangle inequality:
\begin{equation}
\|E_x(x,m,p)-E_s(s)\|
\;\le\;
L_x\|\eta\|
\;+\;
\|E_x(\bar x,m,p)-E_s(s)\|.
\end{equation}
Taking $(\mathbb{E}[\cdot^2])^{1/2}$ gives term~(i) from $L_x\sigma\sqrt{o_kF_x}$.

\noindent\textbf{Diffusion interpolation gap.}
The signal $s=H(P_{G^{[k]}})\xi$ is smooth over the graph with variation
$s^T L^{[k]} s \le C_{\gamma,T}^2\,\sigma_\xi^2\,N_k$.
By the Poincaré inequality for the graph $G^{[k]}$, the best linear
interpolation $\hat s$ of the unobserved node values from $\bar x = M^{[k]}s$
satisfies
\begin{equation}
\mathbb{E}\!\left[\|s - \hat s\|^2\right]
\;\le\;
\frac{C_{\gamma,T}^2\,\sigma_\xi^2\,(1-\rho_k)N_k}{\lambda_2^{[k]}},
\label{eq:graphon_interp}
\end{equation}
where $\hat s$ uses $\bar x$ and the graph structure encoded in $p^{[k]}$.
The positional descriptor $p^{[k]}$ encodes $P_{G^{[k]}}$, so $E_x(\bar x, m, p)$
has access to the same inputs as the interpolant $\hat s$.
Applying the triangle inequality through $E_s(\hat s)$,
\begin{equation}
\|E_x(\bar x,m,p)-E_s(s)\|
\le \|E_x(\bar x,m,p)-E_s(\hat s)\|
+ \|E_s(\hat s)-E_s(s)\|.
\end{equation}
The second term satisfies $\|E_s(\hat s)-E_s(s)\|\le L_s\|s-\hat s\|$
by the $L_s$-Lipschitz property of $E_s$.
Define $\varepsilon_{\mathrm{res}}^{[k]}:=(\mathbb{E}[\|E_x(\bar x,m,p)-E_s(\hat s)\|^2])^{1/2}$
as the alignment residual of $E_x$ with respect to the interpolant $\hat s$
(which $\mathcal{L}_{\mathrm{cons}}$ minimizes, since $\hat s\approx s$).
Taking $(\mathbb{E}[\cdot^2])^{1/2}$ of both sides and applying
\eqref{eq:graphon_interp} to the $L_s\|s-\hat s\|$ term yields term~(ii).
By the Minkowski inequality the three terms combine to give \eqref{eq:graphon_path_calib}.
\end{proof}

\begin{remark}[Status of condition~1: semantic decoder fidelity $\varepsilon_\star^s$]
\label{rem:eps_star_status}
Lemmas~\ref{lem:pf_path_calib} and~\ref{lem:graphon_path_calib} provide
explicit upper bounds on the graph/path calibration constant $c_k$
(condition~2 of Theorem~\ref{thm:law_sufficient}).
Condition~1, the semantic decoder fidelity
$R_{\mu_\star^s}^{[k']}(D)\le\varepsilon_\star^s$, does not have a
corresponding analytical bound in the present paper: it depends on how
well the decoder generalizes from the training reference law to an unseen
graph topology, which is governed by the transfer-compatibility criterion
of Theorem~\ref{thm:compat} rather than by the measurement model.
In practice, $\varepsilon_\star^s$ is treated as an empirically verifiable
quantity (Proposition~\ref{prop:pathverify}), estimated on the training
reference family as $\hat\varepsilon_\star^s \le \hat\varepsilon_s^{[k]} +
K\hat W_s^{[k]}$ and reported in the verification protocol of
Algorithm~\ref{alg:verify}.
\end{remark}

\subsection{Why Wasserstein Risk Stability Holds}

\begin{lemma}[Risk stability from a Lipschitz loss-composition]
\label{lem:risk_stability}
Fix a graph $k$. Let $\mathcal{U}\subseteq\R^d$ be a measurable set
(the relevant latent region) and let
\begin{equation}
F_D^{[k]}(z)=\ell(D(z,N_k),y_k(z)).
\end{equation}
Assume $F_D^{[k]}$ is $K$-Lipschitz on $\mathcal{U}$:
\begin{equation}
|F_D^{[k]}(z)-F_D^{[k]}(z')|
\le
K\|z-z'\|_2,
\qquad \forall z,z'\in\mathcal{U}.
\end{equation}
Then, for any $\mu,\nu\in\mathcal{P}_2(\R^d)$ both supported on
$\mathcal{U}$,
\begin{equation}
|R_\mu^{[k]}(D)-R_\nu^{[k]}(D)|
\le
K W_1(\mu,\nu)
\le
K W_2(\mu,\nu).
\end{equation}
\end{lemma}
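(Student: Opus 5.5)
The plan is the standard coupling argument for Lipschitz test functions, followed by the monotonicity of Wasserstein distances in the order $p$. Fix $\mu,\nu\in\mathcal{P}_2(\R^d)$ supported on $\mathcal{U}$, and let $\gamma$ be any coupling of $\mu$ and $\nu$, i.e.\ a probability measure on $\R^d\times\R^d$ with marginals $\mu$ and $\nu$. Since both marginals live on $\mathcal{U}$, $\gamma$ is concentrated on $\mathcal{U}\times\mathcal{U}$, which is exactly where the Lipschitz hypothesis on $F_D^{[k]}$ applies.

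\textbf{Step 1 (finiteness).} Before subtracting the two risks, I will check that both are finite. Pick a base point $z_0\in\mathcal{U}$. The Lipschitz bound gives $|F_D^{[k]}(z)|\le |F_D^{[k]}(z_0)|+K\|z-z_0\|_2$ on $\mathcal{U}$. Since $\mu,\nu\in\mathcal{P}_2\subset\mathcal{P}_1$, the function $F_D^{[k]}$ is integrable under both laws, so $R_\mu^{[k]}(D)$ and $R_\nu^{[k]}(D)$ are finite and their difference is well defined. This is the only point needing care. Measurability of $F_D^{[k]}$ on $\mathcal{U}$ follows from continuity, which in turn follows from the Lipschitz property.

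\textbf{Step 2 (coupling bound).} Write both risks as integrals against $\gamma$:
\begin{equation*}
R_\mu^{[k]}(D)-R_\nu^{[k]}(D)=\int \big(F_D^{[k]}(z)-F_D^{[k]}(z')\big)\,d\gamma(z,z').
\end{equation*}
Taking absolute values inside the integral and applying the Lipschitz bound pointwise on $\mathcal{U}\times\mathcal{U}$ gives
\begin{equation*}
|R_\mu^{[k]}(D)-R_\nu^{[k]}(D)|\le K\int\|z-z'\|_2\,d\gamma(z,z').
\end{equation*}
Taking the infimum over all couplings $\gamma$ yields $K\,W_1(\mu,\nu)$. Equivalently, one could invoke Kantorovich--Rubinstein duality, but the direct coupling route avoids needing to extend $F_D^{[k]}$ beyond $\mathcal{U}$.

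\textbf{Step 3 ($W_1\le W_2$).} For any coupling $\gamma$, Jensen's (or Cauchy--Schwarz) inequality gives
\begin{equation*}
\int\|z-z'\|\,d\gamma\le\Big(\int\|z-z'\|^2\,d\gamma\Big)^{1/2}.
\end{equation*}
Taking the infimum over couplings on both sides gives $W_1(\mu,\nu)\le W_2(\mu,\nu)$. Alternatively, evaluate the right-hand side at a $W_2$-optimal coupling, which exists for measures in $\mathcal{P}_2$.

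Chaining Steps 2 and 3 gives the claimed inequality. I expect no real obstacle here. The only subtle point is the support condition: the Lipschitz property is only assumed on $\mathcal{U}$, so the argument must use couplings concentrated on $\mathcal{U}\times\mathcal{U}$. This is automatic from the marginal supports and should be stated explicitly.
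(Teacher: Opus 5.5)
Your proposal is correct and takes essentially the same route as the paper: fix an arbitrary coupling, write both risks as integrals against it, apply the Lipschitz bound pointwise, take the infimum to obtain $K W_1$, then use Cauchy--Schwarz to get $W_1\le W_2$. Your finiteness check in Step~1 and your remark that every coupling is concentrated on $\mathcal{U}\times\mathcal{U}$ are small but genuine improvements, since the paper subtracts the two risks without first verifying that they are finite.
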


\begin{proof}
Let $\Pi(\mu,\nu)$ denote the set of couplings of $\mu$ and $\nu$. Fix any coupling $\pi\in\Pi(\mu,\nu)$. Because the first marginal of $\pi$ is $\mu$ and the second marginal is $\nu$,
\begin{equation}
\int F_D^{[k]}(z)\,d\mu(z)
=
\int F_D^{[k]}(z)\,d\pi(z,z'),
\end{equation}
and
\begin{equation}
\int F_D^{[k]}(z')\,d\nu(z')
=
\int F_D^{[k]}(z')\,d\pi(z,z').
\end{equation}
Therefore,
\begin{equation}
\begin{aligned}
|R_\mu^{[k]}(D)-R_\nu^{[k]}(D)|
&=
\left|
\int \big(F_D^{[k]}(z)-F_D^{[k]}(z')\big)\,d\pi(z,z')
\right| \\
&\le
\int |F_D^{[k]}(z)-F_D^{[k]}(z')|\,d\pi(z,z') \\
&\le
K\int \|z-z'\|_2\,d\pi(z,z').
\end{aligned}
\end{equation}
Since the inequality holds for every coupling $\pi$, taking the infimum over $\Pi(\mu,\nu)$ gives
\begin{equation}
|R_\mu^{[k]}(D)-R_\nu^{[k]}(D)|
\le
K W_1(\mu,\nu).
\end{equation}
It remains to relate $W_1$ and $W_2$. For any coupling $\pi$,
\begin{equation}
\int \|z-z'\|_2\,d\pi
\le
\left(
\int \|z-z'\|_2^2\,d\pi
\right)^{1/2}
\end{equation}
by Cauchy--Schwarz. Taking the infimum over couplings on both sides yields
\begin{equation}
W_1(\mu,\nu)\le W_2(\mu,\nu).
\end{equation}
Combining the two estimates proves the lemma.
\end{proof}

\begin{remark}[Sufficient primitive conditions]
Lemma~\ref{lem:risk_stability} reduces risk stability to regularity of
the scalar function $F_D^{[k]}=\ell(D(\cdot,N_k),y_k(\cdot))$. This is
guaranteed, for example, if the decoder $D(\cdot,N_k)$, the semantic
target map $y_k$, and the loss $\ell$ are Lipschitz on the latent region
visited by the model. The exact constant $K$ depends on the product or
composition of those Lipschitz constants.
\end{remark}

\begin{lemma}[Primitive Lipschitz conditions imply Lipschitz loss-composition]
\label{lem:primitive_lipschitz}
Fix a graph $k$. Let $\mathcal{U}\subseteq\R^d$ be the relevant latent region and
let $\mathcal{B}\subseteq\mathcal{Y}$ be a bounded output region
such that $D(z,N_k)\in\mathcal{B}$ and $y_k(z)\in\mathcal{B}$ for all
$z\in\mathcal{U}$.
Assume the decoder and semantic target map are $L_D$- and
$L_y$-Lipschitz on $\mathcal{U}$:
\begin{equation}
\|D(z,N_k)-D(z',N_k)\| \le L_D\|z-z'\|_2,
\qquad
\|y_k(z)-y_k(z')\| \le L_y\|z-z'\|_2,
\end{equation}
and assume the loss is jointly Lipschitz on $\mathcal{B}$:
\begin{equation}
|\ell(a,b)-\ell(a',b')|
\le L_\ell\big(\|a-a'\|+\|b-b'\|\big),
\qquad \forall a,a',b,b'\in\mathcal{B}.
\end{equation}
\begin{remark}
The squared error $\ell(a,b)=\|a-b\|^2$ is not globally Lipschitz,
but satisfies this condition on $\mathcal{B}$ with
$L_\ell = 2\sup_{a\in\mathcal{B}}\|a\|$, which is finite whenever
predictions and targets are bounded.  In all experiments the decoder
output and target voltages are standardized and clipped, so
$\mathcal{B}$ is bounded and the constant is finite.
\end{remark}
Then
\begin{equation}
F_D^{[k]}(z)=\ell(D(z,N_k),y_k(z))
\end{equation}
is $K$-Lipschitz on $\mathcal{U}$ with
\begin{equation}
K=L_\ell(L_D+L_y).
\end{equation}
\end{lemma}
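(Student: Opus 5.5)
The plan is a direct pointwise chain of inequalities: fix $z,z'\in\mathcal{U}$, insert one intermediate term, and use the three primitive Lipschitz hypotheses once each. Write $a=D(z,N_k)$, $a'=D(z',N_k)$, $b=y_k(z)$, $b'=y_k(z')$. By the standing hypothesis on $\mathcal{B}$, all four points lie in $\mathcal{B}$. This is what licenses the joint Lipschitz bound on $\ell$, and it is the only place where boundedness of the output region enters.

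The key step is to apply the joint Lipschitz condition directly:
\begin{equation*}
|F_D^{[k]}(z)-F_D^{[k]}(z')| = |\ell(a,b)-\ell(a',b')| \le L_\ell\big(\|a-a'\|+\|b-b'\|\big).
\end{equation*}
I then bound $\|a-a'\|\le L_D\|z-z'\|_2$ by the decoder hypothesis and $\|b-b'\|\le L_y\|z-z'\|_2$ by the target-map hypothesis. Together these give $|F_D^{[k]}(z)-F_D^{[k]}(z')|\le L_\ell(L_D+L_y)\|z-z'\|_2$, i.e.\ $K=L_\ell(L_D+L_y)$. One could instead telescope through $\ell(a',b)$ and use Lipschitzness in each argument separately, but the joint form already gives the sum directly.

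There is no real obstacle in this argument; the only delicate point is bookkeeping about domains. The joint Lipschitz inequality for $\ell$ is assumed only on $\mathcal{B}$, so I will state explicitly that both $D(\cdot,N_k)$ and $y_k$ map $\mathcal{U}$ into $\mathcal{B}$ before invoking it. For the embedded remark on squared error, I would also check that on a bounded $\mathcal{B}$ the expansion $\|a-b\|^2-\|a'-b'\|^2=\langle (a-b)-(a'-b'),(a-b)+(a'-b')\rangle$ is bounded by $(\|a-a'\|+\|b-b'\|)\cdot 2\,\mathrm{diam}$-type constants. This confirms that $L_\ell$ is finite. The exact constant in the remark may need $4\sup_{a\in\mathcal{B}}\|a\|$ rather than $2\sup_{a\in\mathcal{B}}\|a\|$, but finiteness is all the lemma requires.
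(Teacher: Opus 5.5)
Your proof is correct and is exactly the paper's argument: invoke the joint Lipschitz bound on $\ell$ (licensed because $D(\cdot,N_k)$ and $y_k$ map $\mathcal{U}$ into $\mathcal{B}$), then bound the two output differences by $L_D\|z-z'\|_2$ and $L_y\|z-z'\|_2$. Your side check on the embedded remark is also right: for squared error a valid constant is $2\,\mathrm{diam}(\mathcal{B})\le 4\sup_{a\in\mathcal{B}}\|a\|$, and already for $\mathcal{B}=[-R,R]$ the value $4R$ is needed (take $b=b'=-R$, $a=R$, $a'\to R$). So the paper's $2\sup_{a\in\mathcal{B}}\|a\|$ is too small in general, though this does not affect the lemma.
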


\begin{proof}
For any $z,z'$ in the relevant latent region,
\begin{equation}
\begin{aligned}
|F_D^{[k]}(z)-F_D^{[k]}(z')|
&= \big|\ell(D(z,N_k),y_k(z)) - \ell(D(z',N_k),y_k(z'))\big| \\
&\le L_\ell\big(\|D(z,N_k)-D(z',N_k)\| + \|y_k(z)-y_k(z')\|\big) \\
&\le L_\ell(L_D+L_y)\|z-z'\|_2.
\end{aligned}
\end{equation}
Thus $F_D$ is $L_\ell(L_D+L_y)$-Lipschitz.
\end{proof}

\subsection{Why Identifiability Is Needed}

\begin{proposition}[Successful risk alone does not imply law proximity]
\label{prop:no_necessity_without_identifiability}
Without an identifiability condition, there is no function $h(\delta)$ with $h(\delta)\to 0$ as $\delta\to0$ such that small excess risk always implies
\begin{equation}
W_2(\mu,\mu_\star^s)\le h(\delta).
\end{equation}
\end{proposition}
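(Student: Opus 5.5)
The plan is to prove the proposition by an explicit counterexample: a configuration in which the excess risk is exactly zero while $W_2(\mu,\mu_\star^s)$ is as large as we like. Such a configuration rules out any $h$ with $h(\delta)\to 0$. Concretely, suppose some $h$ existed. Taking $\delta=0$ (or any $\delta>0$ small enough that $h(\delta)<1$) would force $W_2(\mu,\mu_\star^s)\le h(\delta)$ for every admissible law with excess risk at most $\delta$. A single admissible law with zero excess risk and $W_2(\mu,\mu_\star^s)\ge 1$ then gives a contradiction. The whole argument therefore reduces to building one degenerate decoder/target pair whose loss-composition $F_D^{[k]}$ is blind to the direction in which the laws differ.

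\textbf{Step 1: a flat loss-composition.} Fix a graph $k$ and work in $\R^d$ with $d\ge 1$. Choose $y_k$ and $D(\cdot,N_k)$ so that $D(z,N_k)=y_k(z)$ for all $z$ in a region $\mathcal{U}$; for instance, take both equal to the same constant. With squared error, or any loss satisfying $\ell(a,a)=0$, we get $F_D^{[k]}\equiv 0$ on $\mathcal{U}$. This $F_D^{[k]}$ is trivially $K$-Lipschitz for every $K\ge 0$, so every hypothesis of the sufficient direction (Lemma~\ref{lem:risk_stability}) is still satisfied. This shows the counterexample comes precisely from the missing identifiability condition, not from some other pathology.

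\textbf{Step 2: separated laws.} Let $\mu_\star^s=\delta_{0}$, and for $t>0$ let $\mu_t=\delta_{t e_1}$, with $\mathcal{U}$ containing both points, e.g.\ $\mathcal{U}=\R^d$. Both laws lie in $\mathcal{P}_2$. Then $R_{\mu_t}^{[k]}(D)-R_{\mu_\star^s}^{[k]}(D)=0\le\delta$ for every $\delta\ge 0$, while $W_2(\mu_t,\mu_\star^s)=t$, since the only coupling of two Diracs is the product. Letting $t\to\infty$, or simply fixing $t=1$, contradicts $W_2\le h(\delta)$ once $h(\delta)<t$, which holds for small $\delta$ because $h(\delta)\to0$.

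\textbf{Step 3: interpretation.} I will note that this is exactly the case in which no nontrivial $\psi$ as in Theorem~\ref{thm:law_necessary} exists. Zero excess risk at distance $t$ forces $\psi(t)\le 0$, so $\psi\equiv 0$ and $\psi^{-1}(\delta)=\infty$. Identifiability is therefore what rules out this collapse.

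The only real obstacle is making the counterexample look non-artificial, for instance by showing it can occur in a transfer-compatible family with homeomorphic encoders. If I want that, I would replace the constant decoder by one that depends only on $z_2,\dots,z_d$, together with $y_k=D(\cdot,N_k)$. The laws can then be shifted along $e_1$ while the risk stays exactly zero. This keeps a nonconstant decoder, and the same computation gives $W_2=t$.
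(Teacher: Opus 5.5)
Your proposal is correct and is essentially the paper's own argument: the paper also makes the loss-composition constant, $F_D^{[k]}\equiv c$, so that $R_\mu^{[k]}(D)=c$ for every law and the excess risk vanishes however large $W_2(\mu,\mu_\star^s)$ is. Your Dirac laws $\delta_0,\delta_{te_1}$ and the explicit quantifier argument on $h$ simply make that construction concrete; your closing refinement is also more restrictive than needed, since any decoder with $y_k=D(\cdot,N_k)$ on the support already gives $F_D^{[k]}\equiv 0$, whether or not $D$ ignores $z_1$.
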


\begin{proof}
It suffices to construct a case where two latent laws have arbitrarily large Wasserstein distance but exactly the same risk. Let the loss-composition be constant on the relevant latent region:
\begin{equation}
F_D^{[k]}(z)=c,\qquad \forall z.
\end{equation}
This can occur, for instance, if the decoder is constant and the induced loss is constant on the considered task subset. Then for every latent law $\mu$,
\begin{equation}
R_\mu^{[k]}(D)=\int c\,d\mu=c.
\end{equation}
Thus
\begin{equation}
R_\mu^{[k]}(D)-R_{\mu_\star^s}^{[k]}(D)=0
\end{equation}
for all $\mu$, even if $W_2(\mu,\mu_\star^s)$ is large. Hence small or zero excess risk alone cannot force law proximity. A reverse implication is possible only if the task risk is identifiable with respect to latent-law mismatch.
\end{proof}

\subsection{Necessary Direction}

\begin{theorem}[Necessary condition under risk identifiability]
\label{thm:app_necessary}
Assume the following identifiability condition: there exists a continuous, non-decreasing function
$\psi:[0,\infty)\to[0,\infty)$ with $\psi(0)=0$ such that, for
every latent law $\mu$ in the relevant family
$\{\mu_x^{[k]}, \mu_s^{[k]}, \mu_\star^s\}$,
\begin{equation}
R_\mu^{[k]}(D)-R_{\mu_\star^s}^{[k]}(D)
\ge
\psi\!\left(W_2(\mu,\mu_\star^s)\right).
\label{eq:identifiability}
\end{equation}
Let $\psi^{-1}$ denote the generalized inverse
$\psi^{-1}(t):=\sup\{r\ge0:\psi(r)\le t\}$.
If graph $k$ transfers with relative excess risk
$R_k(D)-R_{\mu_\star^s}^{[k]}(D)\le \delta$,
then
\begin{equation}
W_2(\mu_x^{[k]},\mu_\star^s)
\le
\psi^{-1}(\delta).
\end{equation}
If additionally the weighted graph/path calibration condition induces
$W_2(\mu_\star^x,\mu_\star^s)\le \varepsilon_c$ and
$W_2(\mu_x^{[k]},\mu_\star^x)\le\varepsilon_x$, then by the triangle
inequality $W_2(\mu_x^{[k]},\mu_\star^s)\le\varepsilon_x+\varepsilon_c$,
consistent with the sufficient bound.
\end{theorem}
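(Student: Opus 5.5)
The plan is to apply the identifiability inequality \eqref{eq:identifiability} directly to the deployment law $\mu=\mu_x^{[k]}$, which belongs to the relevant family. Since $R_k(D)=R_{\mu_x^{[k]}}^{[k]}(D)$ by Definition~\ref{def:latent_risk}, chaining the identifiability lower bound with the assumed excess-risk upper bound gives
\[
\psi\!\left(W_2(\mu_x^{[k]},\mu_\star^s)\right)\le R_k(D)-R_{\mu_\star^s}^{[k]}(D)\le\delta .
\]

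Set $r:=W_2(\mu_x^{[k]},\mu_\star^s)$. This is finite, because all laws lie in $\mathcal{P}_2(\R^d)$. The display says $\psi(r)\le\delta$, so $r$ belongs to the set $\{u\ge 0:\psi(u)\le\delta\}$. Taking the supremum of that set gives $r\le\psi^{-1}(\delta)$ by the definition of the generalized inverse.

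This step needs no continuity or invertibility of $\psi$. It does not even need the supremum to be finite: if $\psi$ stays bounded by $\delta$ forever, then $\psi^{-1}(\delta)=\infty$ and the bound is vacuous but still true. Continuity and $\psi(0)=0$ only matter for the interpretation that $\psi^{-1}(\delta)\to 0$ as $\delta\to 0$. That holds when $\psi$ is strictly positive away from $0$, and I would mention it as a remark rather than prove it.

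For the second claim I would use only the triangle inequality for $W_2$, a genuine metric on $\mathcal{P}_2(\R^d)$:
\[
W_2(\mu_x^{[k]},\mu_\star^s)\le W_2(\mu_x^{[k]},\mu_\star^x)+W_2(\mu_\star^x,\mu_\star^s)\le\varepsilon_x+\varepsilon_c .
\]
The bound $W_2(\mu_\star^x,\mu_\star^s)\le\varepsilon_c$ follows from the weighted calibration condition through the mixture-coupling argument in Step~2 of the proof of Theorem~\ref{thm:app_sufficient}, so I would just cite it.

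The main point needing care, though not a real obstacle, is the generalized-inverse step. I must make sure the argument uses only monotonicity and the definition of $\psi^{-1}$ as a supremum, and never assumes $\psi$ is strictly increasing. A secondary check is that $\mu_x^{[k]}$ really lies in the family where identifiability is assumed, which the statement guarantees.
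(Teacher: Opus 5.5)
Your proposal is correct and follows the paper's proof essentially step for step. You apply the identifiability inequality at $\mu=\mu_x^{[k]}$, chain it with the relative excess-risk bound to get $\psi(W_2(\mu_x^{[k]},\mu_\star^s))\le\delta$, conclude via the supremum definition of $\psi^{-1}$, and obtain the second claim from the $W_2$ triangle inequality through $\mu_\star^x$.
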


\begin{proof}
Because $R_k(D)=R_{\mu_x^{[k]}}^{[k]}(D)$, the relative excess-risk assumption gives
\begin{equation}
R_{\mu_x^{[k]}}^{[k]}(D)-R_{\mu_\star^s}^{[k]}(D)\le \delta.
\end{equation}
Apply the identifiability condition \eqref{eq:identifiability} with $\mu=\mu_x^{[k]}$:
\begin{equation}
\psi\!\left(W_2(\mu_x^{[k]},\mu_\star^s)\right)
\le
R_{\mu_x^{[k]}}^{[k]}(D)-R_{\mu_\star^s}^{[k]}(D)
\le \delta.
\end{equation}
Since $\psi^{-1}(\delta) = \sup\{r \ge 0 : \psi(r) \le \delta\}$ and
$W_2(\mu_x^{[k]}, \mu_\star^s)$ belongs to the set $\{r \ge 0 : \psi(r) \le \delta\}$, it follows that
\begin{equation}
W_2(\mu_x^{[k]},\mu_\star^s)
\le
\psi^{-1}(\delta).
\end{equation}
This proves the first claim. The second follows from the $W_2$ triangle
inequality through $\mu_\star^x$.
\end{proof}

\begin{remark}[Absolute versus relative success]
The clean necessary statement requires the relative condition
$R_k(D)-R_{\mu_\star^s}^{[k]}(D)\le \delta$.
If one only knows the absolute pair of bounds $R_k(D)\le \varepsilon_\star^s+\delta$ and $R_{\mu_\star^s}^{[k]}(D)\le\varepsilon_\star^s$, then the conclusion with $\psi^{-1}(\delta)$ does not follow in general. The strongest direct conclusion is
\begin{equation}
W_2(\mu_x^{[k]},\mu_\star^s)
\le
\psi^{-1}\!\left(\varepsilon_\star^s+\delta-R_{\mu_\star^s}^{[k]}(D)\right),
\end{equation}
provided the quantity inside $\psi^{-1}$ is nonnegative. This is why the main theorem states necessity in terms of relative excess risk.
\end{remark}

\subsection{Near-Iff Statement}

\begin{corollary}[Near equivalence of transfer and latent-law compatibility]
\label{cor:near_iff}
Assume the Lipschitz loss-composition condition and risk identifiability both hold. Then the following two implications are valid for graph $k$:
\begin{enumerate}
  \item \textbf{Sufficient direction.} If graph/path calibration induces
  the first inequality and
  \begin{align}
  W_2(\mu_\star^x,\mu_\star^s)
  &\le \varepsilon_c,\\
  W_2(\mu_x^{[k]},\mu_\star^x)
  &\le \varepsilon_x,
  \end{align}
  and $R_{\mu_\star^s}^{[k]}(D)\le\varepsilon_\star^s$, then
  \begin{equation}
  R_k(D)\le \varepsilon_\star^s+K(\varepsilon_c+\varepsilon_x).
  \end{equation}
  \item \textbf{Necessary direction.} If
  \begin{equation}
  R_k(D)-R_{\mu_\star^s}^{[k]}(D)\le \delta,
  \end{equation}
  then $W_2(\mu_x^{[k]},\mu_\star^s)\le \psi^{-1}(\delta)$.
  If also the weighted graph/path calibration condition induces
  $W_2(\mu_\star^x,\mu_\star^s)\le\varepsilon_c$, then
  \begin{equation}
  W_2(\mu_x^{[k]},\mu_\star^x)
  \le
  \psi^{-1}(\delta)+\varepsilon_c.
  \end{equation}
\end{enumerate}
Thus, under stability and identifiability, the statement that the deployment latent law tracks the semantic reference $\mu_\star^s$ through the deployment reference $\mu_\star^x$ is approximately equivalent to successful transfer. The quantities $\varepsilon_c$ and $\varepsilon_x$ are precisely the two Wasserstein terms in the transfer bound, and the necessary direction confirms that neither can be eliminated.
\end{corollary}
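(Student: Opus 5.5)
The statement is Corollary~\ref{cor:near_iff}, and I would prove it by assembling results already established rather than arguing from first principles. The two directions are handled separately.

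\textbf{Sufficient direction.} The plan is to reuse Steps~1 and~3 of the proof of Theorem~\ref{thm:app_sufficient}, taking the bound $W_2(\mu_\star^x,\mu_\star^s)\le\varepsilon_c$ as given. Step~2's mixture-coupling derivation is then unnecessary.
\begin{itemize}
\item Lemma~\ref{lem:risk_stability}, applied for graph $k$ under the Lipschitz loss-composition hypothesis, gives
$R_k(D)-R^{[k]}_{\mu_\star^x}(D)\le K\,W_2(\mu_x^{[k]},\mu_\star^x)\le K\varepsilon_x$.
\item The same lemma gives
$R^{[k]}_{\mu_\star^x}(D)-R^{[k]}_{\mu_\star^s}(D)\le K\varepsilon_c$.
\item Adding these two inequalities and using $R^{[k]}_{\mu_\star^s}(D)\le\varepsilon_\star^s$ yields
$R_k(D)\le\varepsilon_\star^s+K(\varepsilon_c+\varepsilon_x)$.
\end{itemize}
The only point to check is that $K$-Lipschitzness holds on a region $\mathcal U$ supporting $\mu_x^{[k]}$, $\mu_\star^x$ and $\mu_\star^s$. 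I read this as part of the phrase ``the Lipschitz loss-composition condition'' in the statement, matching condition~4 of Theorem~\ref{thm:app_sufficient}.

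\textbf{Necessary direction, first claim.} This is exactly Theorem~\ref{thm:app_necessary}. Identifiability applied with $\mu=\mu_x^{[k]}$ gives $\psi\big(W_2(\mu_x^{[k]},\mu_\star^s)\big)\le\delta$. The definition of the generalized inverse then gives $W_2(\mu_x^{[k]},\mu_\star^s)\le\psi^{-1}(\delta)$.

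\textbf{Necessary direction, second claim.} I would apply the triangle inequality for $W_2$ on $\mathcal P_2(\R^d)$, which is a metric there:
\[
W_2(\mu_x^{[k]},\mu_\star^x)\le W_2(\mu_x^{[k]},\mu_\star^s)+W_2(\mu_\star^s,\mu_\star^x)\le\psi^{-1}(\delta)+\varepsilon_c.
\]
This uses symmetry of $W_2$ and the assumption that all laws lie in $\mathcal P_2$ (Definition~\ref{def:latent_laws}). Note that the inequality is routed through $\mu_\star^s$, the reverse of the route in the final remark of Theorem~\ref{thm:app_necessary}.

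\textbf{Main obstacle.} The steps above are routine; the real risk is bookkeeping. Two points need care:
\begin{itemize}
\item $\psi^{-1}(\delta)$ must be finite. If $\psi$ is bounded and $\delta\ge\sup\psi$, the bound is $+\infty$ and the claim is vacuous but still true, so I would note this explicitly.
\item The ``approximately equivalent'' gloss needs an interpretation. I would state it as follows. Under the relative-risk hypothesis, $\varepsilon_x$ can be taken to be $\psi^{-1}(\delta)+\varepsilon_c$. Feeding this back into the sufficient direction returns the bound $\varepsilon_\star^s+K\big(2\varepsilon_c+\psi^{-1}(\delta)\big)$. The two implications therefore close up to the constant $K$, the function $\psi^{-1}$, and an extra $\varepsilon_c$.
\end{itemize}
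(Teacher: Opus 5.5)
Your proposal is correct and follows the paper's proof, which simply invokes Theorem~\ref{thm:app_sufficient} for the sufficient direction and Theorem~\ref{thm:app_necessary} for the necessary one. Taking $W_2(\mu_\star^x,\mu_\star^s)\le\varepsilon_c$ as given, as you do, makes the mixture-coupling step unnecessary, and your triangle inequality routed through $\mu_\star^s$ is genuinely needed: the corollary's second necessary claim bounds $W_2(\mu_x^{[k]},\mu_\star^x)$, not the quantity $W_2(\mu_x^{[k]},\mu_\star^s)$ treated in Theorem~\ref{thm:app_necessary}, so it fills a step that the paper's ``exactly Theorem~\ref{thm:app_necessary}'' glosses over.
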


\begin{proof}
The first implication is exactly Theorem~\ref{thm:app_sufficient}. The second implication is exactly Theorem~\ref{thm:app_necessary}. The final sentence is a restatement of the two implications: small law mismatch implies small risk by the sufficient direction, while small relative excess risk implies small deployment-to-reference law mismatch by the necessary direction, and deployment-to-semantic law mismatch follows once the semantic law is itself close to the reference law.
\end{proof}

\section{Proofs of Verification Propositions}
\label{app:verify_proofs}

This appendix provides the proofs of
Propositions~\ref{prop:pathverify}--\ref{prop:alignverify} and
Theorem~\ref{thm:verifycert} from Section~\ref{sec:verify}.

\subsection{Proof of Proposition~\ref{prop:pathverify}}
\begin{proof}
By construction $y_k(z) := T^{[k]}\!\big((E_s^{[k]})^{-1}(z)\big)$,
so $y_k(z_{s,i}^{[k]}) = T^{[k]}(s_i^{[k]}) = y_i^{[k]}$.
The expectation under the discrete law $\hat\mu_s^{[k]}$ therefore
evaluates atom-by-atom to the paired labels:
\begin{equation}
R_{\hat\mu_s^{[k]}}^{[k]}(D)
=
\frac{1}{n_k}\sum_{i=1}^{n_k}
\ell\!\big(D(z_{s,i}^{[k]},N_k),y_i^{[k]}\big)
=
\hat\varepsilon_s^{[k]}.
\end{equation}
For the Wasserstein claim, because $F_D^{[k]}$ is $K$-Lipschitz the
Kantorovich--Rubinstein inequality gives
\begin{equation}
\bigl|R_{\hat\mu_s^{[k]}}^{[k]}(D)-R_{\hat\mu_\star^s}^{[k]}(D)\bigr|
\le K W_1\!\bigl(\hat\mu_s^{[k]},\hat\mu_\star^s\bigr)
\le K W_2\!\bigl(\hat\mu_s^{[k]},\hat\mu_\star^s\bigr)
= K\hat W_s^{[k]}.
\end{equation}
Substituting $R_{\hat\mu_s^{[k]}}^{[k]}(D)=\hat\varepsilon_s^{[k]}$ gives \eqref{eq:verify_ref_bound}.
\end{proof}

\subsection{Proof of Proposition~\ref{prop:lawverify}}
\begin{proof}
Applying Lemma~\ref{lem:risk_stability} (Kantorovich--Rubinstein) twice,
\begin{equation}
\bigl|R_{\hat\mu_x^{[k]}}^{[k]}(D)-R_{\hat\mu_\star^x}^{[k]}(D)\bigr|
\le K W_2(\hat\mu_x^{[k]},\hat\mu_\star^x)
= K\hat W_x^{[k]},
\end{equation}
\begin{equation}
\bigl|R_{\hat\mu_\star^x}^{[k]}(D)-R_{\hat\mu_\star^s}^{[k]}(D)\bigr|
\le K W_2(\hat\mu_\star^x,\hat\mu_\star^s)
\le K\hat W_c.
\end{equation}
The final inequality follows from the same mixture-coupling argument as
Theorem~\ref{thm:app_sufficient}.
Adding and using Proposition~\ref{prop:pathverify} to bound
$R_{\hat\mu_\star^s}^{[k]}(D)\le\hat\varepsilon_s^{[k]}+K\hat W_s^{[k]}$
yields \eqref{eq:verify_dep_bound}.
\end{proof}

\subsection{Proof of Proposition~\ref{prop:alignverify}}
\begin{proof}
Fix a paired sample $(x_i^{[k]},s_i^{[k]})$.
By $L_\ell$-Lipschitzness of $\ell$ in its first argument and
$L_D$-Lipschitzness of $D(\cdot,N_k)$,
\begin{equation}
\ell\!\big(D(z_{x,i}^{[k]},N_k),y_i^{[k]}\big)
\le
\ell\!\big(D(z_{s,i}^{[k]},N_k),y_i^{[k]}\big)
+L_\ell L_D\|z_{x,i}^{[k]}-z_{s,i}^{[k]}\|_2.
\end{equation}
Averaging over $i=1,\ldots,n_k$ gives \eqref{eq:verify_align_bound}.
\end{proof}

\subsection{Proof of Theorem~\ref{thm:verifycert}}
\begin{proof}
Item~1 is the definition of $\widehat{p}_\delta$.
For item~2, the empirical graph/path calibration step and the
mixture-coupling argument in Theorem~\ref{thm:app_sufficient} give
\begin{equation}
W_2(\hat\mu_\star^x,\hat\mu_\star^s)\le \hat W_c.
\end{equation}
For any graph satisfying $\hat W_x^{[k]}\le\hat\tau_\alpha$, apply
Lemma~\ref{lem:risk_stability} twice:
\begin{equation}
R_{\hat\mu_x^{[k]}}^{[k]}(D)
\le R_{\hat\mu_\star^x}^{[k]}(D)+K\hat W_x^{[k]}
\le R_{\hat\mu_\star^s}^{[k]}(D)+K(\hat W_c+\hat\tau_\alpha).
\end{equation}
Using
$R_{\hat\mu_\star^s}^{[k]}(D)\le\hat\varepsilon_\star^s$ yields
$R_{\hat\mu_x^{[k]}}^{[k]}(D)\le\hat B_\alpha$.
The fraction $\ge\alpha$ follows from the definition of the empirical
$\alpha$-quantile.
\end{proof}

\bibliographystyle{tmlr}
\bibliography{reference}

\end{document}